\DeclareMathOperator*{\E}{\mathbb E}
\DeclareMathOperator*{\argmax}{argmax}
\DeclareMathOperator*{\argmin}{argmin}
\DeclareMathOperator*{\supp}{supp}
\newcommand{\RETURN}{\mbox{\textbf{return} }}
\newcommand{\WITH}{\upshape \mbox{\textbf{with} }}
\newcommand{\sA}{\mathscr A}
\newcommand{\sB}{\mathscr B}
\newcommand{\sC}{\mathscr C}
\newcommand{\sF}{\mathscr F}
\newcommand{\sS}{\mathscr S}
\newcommand{\bbP}{\mathbb{P}}
\newcommand{\Nset}{\mathbb{N}}
\newcommand{\Rset}{\mathbb{R}}
\newcommand{\bfl}{{\mathbf l}}
\newcommand{\bN}{{\mathbf N}}
\newcommand{\bw}{{\mathbf w}}
\newcommand{\bx}{{\mathbf x}}
\newcommand{\by}{{\mathbf y}}
\newcommand{\bz}{{\mathbf z}}
\newcommand{\balpha}{{\boldsymbol \alpha}}
\newcommand{\bbeta}{{\boldsymbol \beta}}
\newcommand{\sfp}{{\mathsf p}}
\newcommand{\sfq}{{\mathsf q}}
\newcommand{\sfu}{{\mathsf u}}
\newcommand{\sfw}{{\mathsf w}}
\newcommand{\cA}{\mathcal A}
\newcommand{\cC}{\mathcal C}
\newcommand{\cF}{\mathcal F}
\newcommand{\cK}{\mathcal K}
\newcommand{\cO}{O}
\newcommand{\cW}{\mathcal W}
\newcommand{\Reg}{\mathrm{Reg}}
\newcommand{\h}{\widehat}
\newcommand{\e}{\epsilon}
\renewcommand{\phi}{\varphi}
\newcommand{\comments}[1]{}
\newcommand{\set}[2][]{#1 \{ #2 #1 \} }
\newcommand{\ignore}[1]{}
\newcommand{\vast}{\bBigg@{4}}
\newcommand{\Vast}{\bBigg@{5}}
\newcommand{\dest}{\mathrm{dest}}
\newcommand{\src}{\mathrm{src}}
\newcommand{\lab}{\mathrm{lab}}
\newcommand{\weight}{\mathrm{weight}}
\newcommand{\AWM}{\textsc{AWM}}
\newcommand{\PBWM}{\textsc{PBWM}}
\newtheorem{theorem}{Theorem}
\newtheorem{corollary}{Corollary}
\newtheorem*{rep@theorem}{\rep@title}
\newcommand{\newreptheorem}[2]{%
\newenvironment{rep#1}[1]{%
 \def\rep@title{#2 \ref{##1}}%
 \begin{rep@theorem}}%
 {\end{rep@theorem}}}
\begin{document}

\title{Online Learning with Automata-based Expert Sequences}

\author{Mehryar Mohri\footnote{Courant Institute and Google Research}
\and
Scott Yang\footnote{Courant Institute}}

\maketitle

\begin{abstract}
  We consider a general framework of online learning with expert
  advice where regret is defined with respect to sequences of
  experts accepted by a weighted automaton. Our framework covers
  several problems previously studied, including competing against
  $k$-shifting experts. We give a series of algorithms for this
  problem, including an automata-based algorithm extending
  weighted-majority and more efficient algorithms based on the notion
  of failure transitions.  We further present efficient algorithms
  based on an approximation of the competitor automaton, in particular
  $n$-gram models obtained by minimizing the $\infty$-R\'enyi
  divergence, and present an extensive study of the approximation
  properties of such models. Finally, we also extend our algorithms and results
  to the framework of sleeping experts. \ignore{Finally, we describe the
  extension of our approximation methods to online convex optimization
  and a general mirror descent setting.}
\end{abstract}

\section{Introduction}
\label{sec:intro}
 
Online learning is a general model for sequential prediction. Within
that framework, the setting of prediction with expert advice has
received widespread attention \citep{LittlestoneWarmuth1994,
  CesaBianchiLugosi2006,CesaBianchiMansourStoltz2007}.  In this
setting, the algorithm maintains a distribution over a set of experts,
or selects an expert from an implicitly maintained distribution.  At
each round, the loss assigned to each expert is revealed. The
algorithm incurs the expected loss over the experts and then updates
its distribution on the set of experts.  Its objective is to minimize
its expected regret, that is the difference between its cumulative
loss and that of the best expert in hindsight.

However, this benchmark is only significant when the best expert in
hindsight is expected to perform well. When that is not the case, then
the learner may still play poorly. As an example, it may be that no
single baseball team has performed well over all seasons in the past
few years. Instead, different teams may have dominated over different
time periods. This has led to a definition of regret against the best
sequence of experts with $k$ shifts in the seminal work of
\cite{HerbsterWarmuth1998} on \emph{tracking the best expert}.  The
authors showed that there exists an efficient online learning
algorithm for this setting with favorable regret guarantees.

This work has subsequently been improved to account for broader expert
classes \citep{GyorgyLinderLugosi2012}, to deal with unknown
parameters \citep{MonteleoniJaakkola2003}, and has been further
generalized \citep{CesaBianchiGaillardLugosiStoltz2012, Vovk1999}.
Another approach for handling dynamic environments has consisted of
designing algorithms that guarantee small regret over any subinterval
during the course of play. This notion, coined as \emph{adaptive
  regret} by \cite{HazanSeshadhri2009}, has been subsequently
strengthened and generalized \citep{DanielyGonenShalevShwartz2015,
  AdamskiyKoolenChernovVovk2012}.  Remarkably, it was shown by
\cite{AdamskiyKoolenChernovVovk2012} that the algorithm designed by
\cite{HerbsterWarmuth1998} is also optimal for adaptive regret.
\cite{KoolenDeRooij2013} described a Bayesian framework for online
learning where the learner samples from a distribution of expert
sequences and predicts according to the prediction of that expert
sequence. They showed how the algorithms designed for $k$-shifting
regret, e.g.\ \citep{HerbsterWarmuth1998, MonteleoniJaakkola2003}, can
be interpreted as specific priors in this formulation.  There has also
been work deriving guarantees in the bandit setting when the losses
are stochastic \citep{BesbesGurZeevi2014, WeiHongLu2016}.

\begin{figure*}[t]
\vskip -.15in
\centering
\begin{tabular}{ccc}
\includegraphics[scale=0.4]{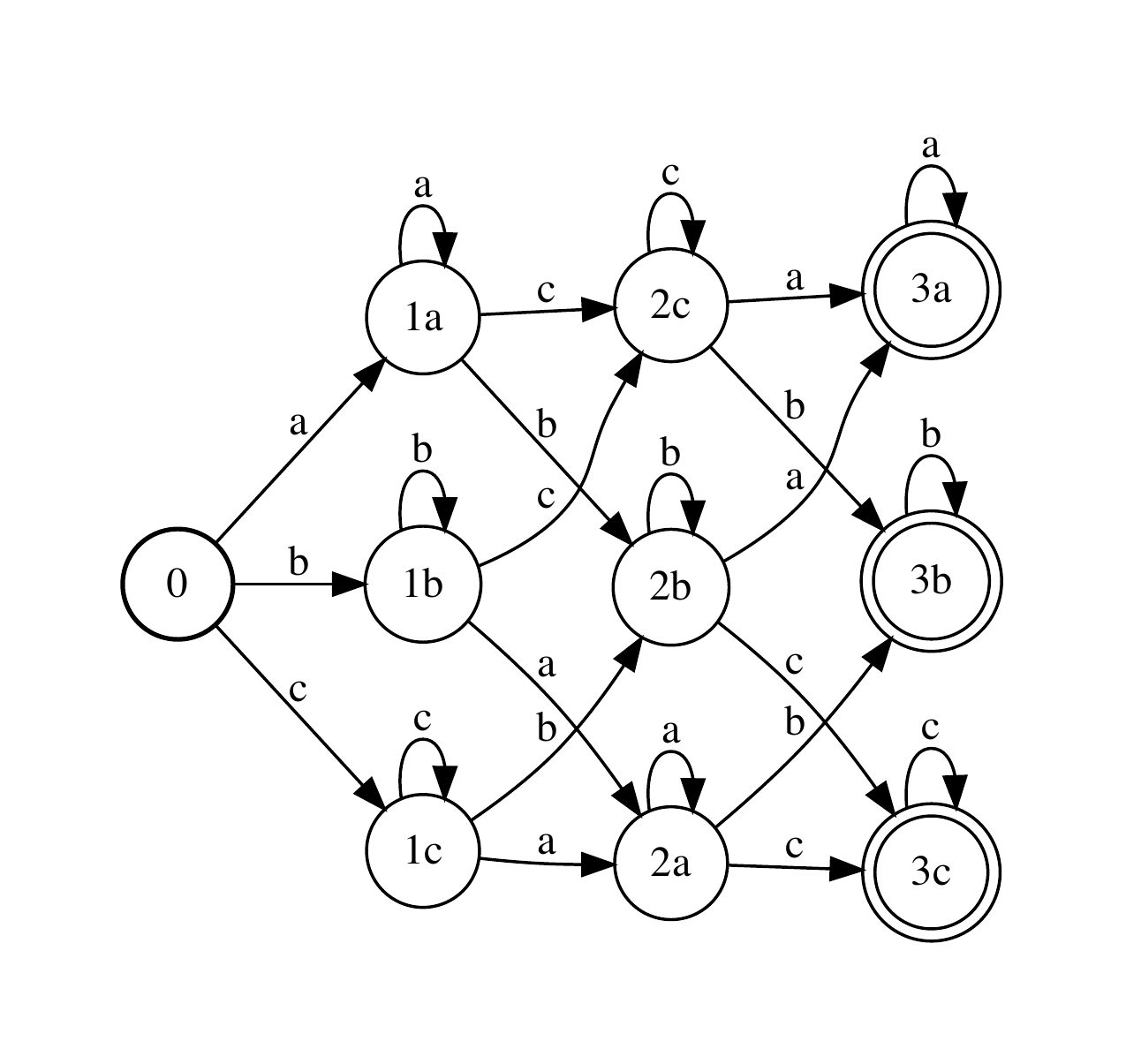} &
  {\includegraphics[scale=0.4]{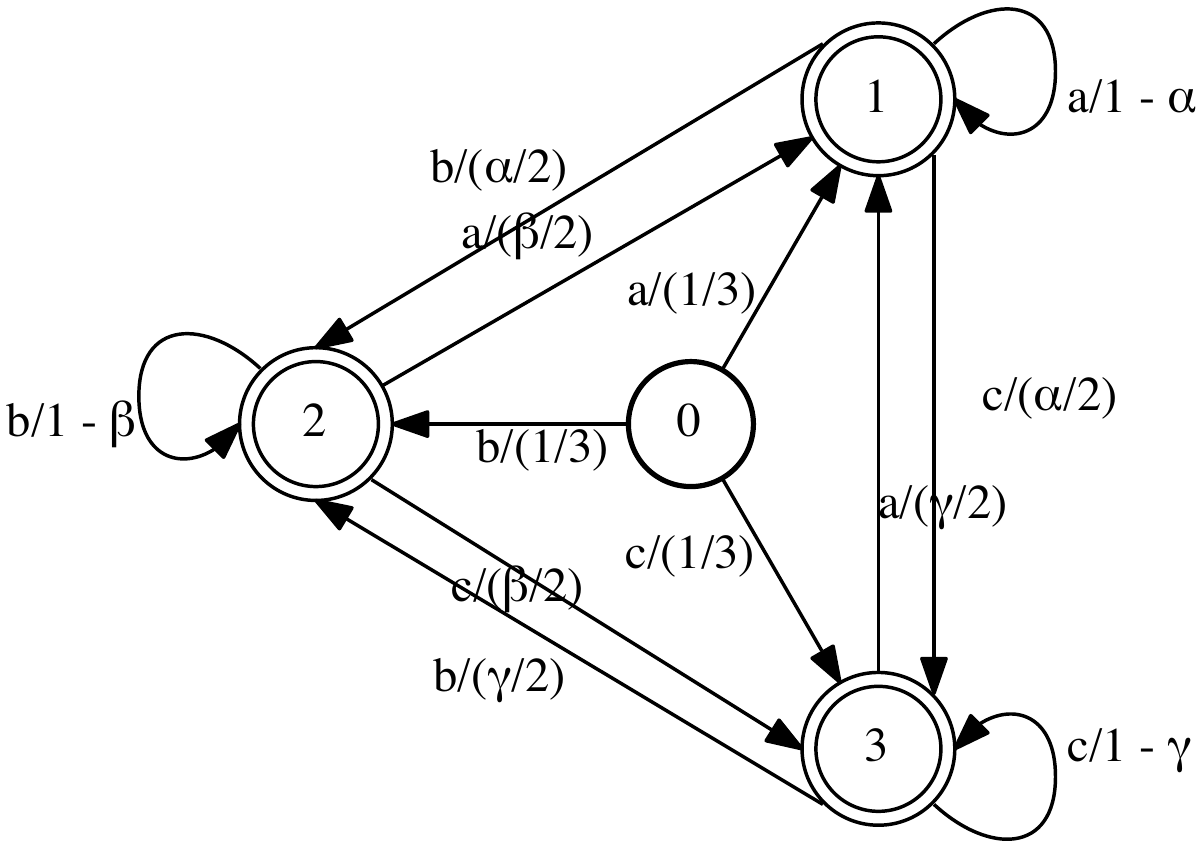}} &
  \includegraphics[scale=0.4]{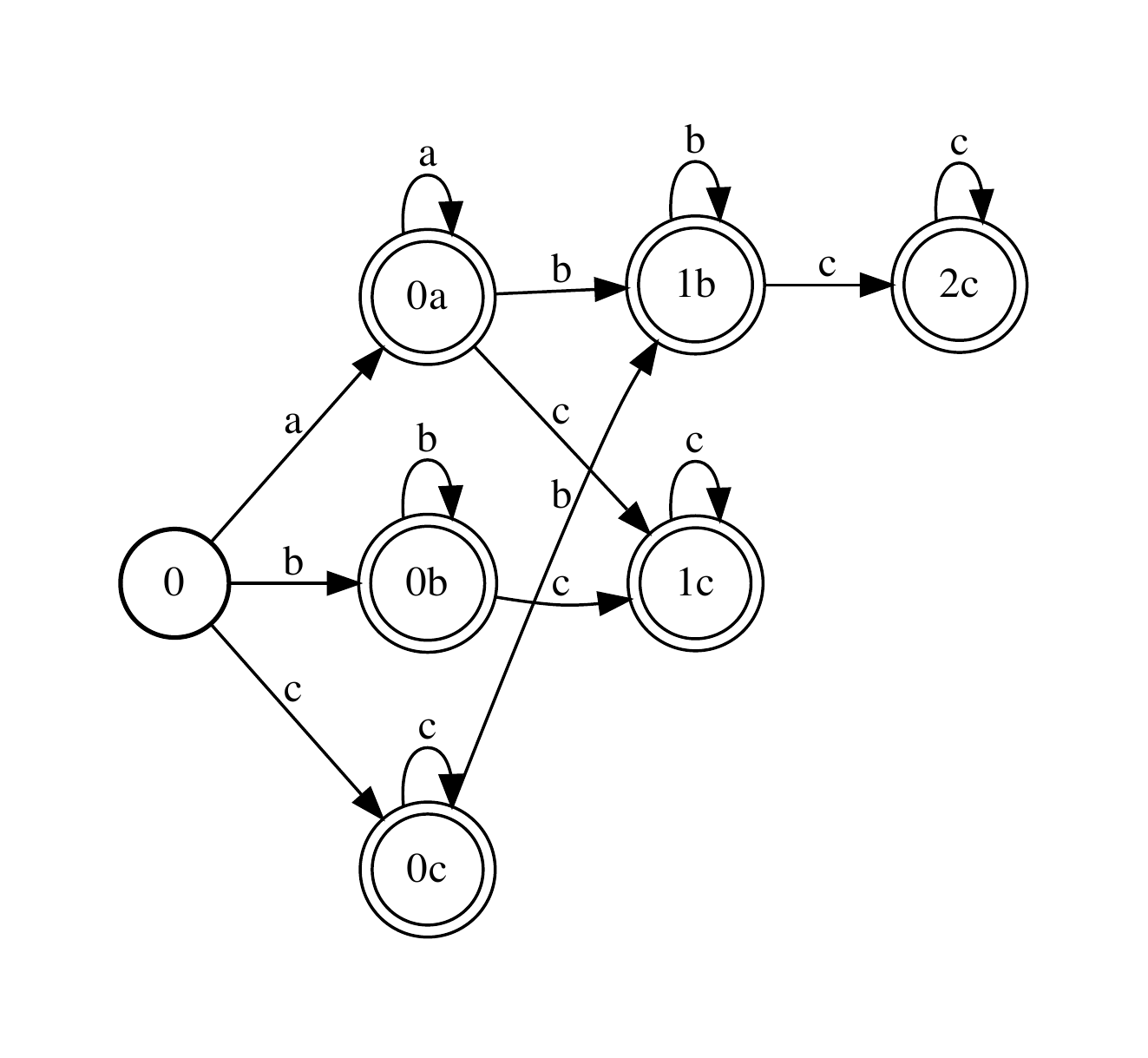} \\ 
  (i) &
  (ii) &
  (iii) \\ 
\end{tabular}
\caption{WFAs representing sequences of experts in
  $\Sigma = \set{a, b, c}$. (i) $\sC_\text{$k$-shift}$ with
  $k = 2$ shifts, all weights are equal to one and not indicated; (ii)
  $\sC_\text{weighted-shift}$ with $\alpha, \beta, \gamma \in [0, 1]$;
  (iii) $\sC_\text{hierarchy}$ a hierarchical family of expert 
  sequences: the learner must select expert $a$ from the start, can only shift 
  onto $b$ once, and can only shift onto $c$ twice. }
\label{fig:kshift}
\vskip -.15in
\end{figure*}

The general problem of online convex optimization in the presence of
non-stationary environments has also been studied by many
researchers. One perspective has been the design of algorithms that
maintain a guarantee against sequences that do not vary too much
\citep{MokhtariShahrampourJadbabaieRibeiro2016,
  ShahrampourJadbabaie2016, JadbabaieRakhlinShahrampourSridharan2015,
  BesbesGurZeevi2015}.  Another assumes that the learner has access to
a dynamical model that is able to capture the benchmark sequence
\citep{HallWillett2013}. \cite{GyorgySzepesvari2016} reinterpreted the
framework of \cite{HallWillett2013} to recover and extend the results
of \cite{HerbsterWarmuth1998}.

In this paper, we generalize the framework just described to the case
where the learner's cumulative loss is compared to that of sequences
accepted by a weighted finite automaton (WFA). This strictly
generalizes the notion of $k$-shifting regret, since $k$-shifting
sequences can be represented by an automaton (see
Figure~\ref{fig:kshift}), and further extends it to a notion of
\emph{weighted regret} which takes into consideration the sequence
weights. Our framework covers a very rich class of competitor classes,
including WFAs learned from past observations.  

Our contributions are mainly algorithmic but also include
several theoretical results and guarantees.  We first describe an
efficient online algorithm using automata operations that achieves
both favorable weighted regret and unweighted regret
(Section~\ref{sec:AWM}).  Next, we present and analyze more efficient
solutions based on an approximation of the WFA representing the set of competitor
sequences (Section~\ref{sec:approx}), including a specific analysis of
approximations via $n$-gram models both when minimizing the
$\infty$-R\'enyi divergence and the relative entropy.  Finally, we
extend the results above to the sleeping expert setting
\citep{FreundSchapireSingerWarmuth1997}, where the learner may not
have access to advice from every expert at each round
(Section~\ref{sec:sleep}).

\ignore{
In this paper, we significantly generalize the framework just
described and consider prediction with expert advice in a setting
where the learner's cumulative loss is compared against that of
sequences represented by an \emph{arbitrary weighted family of
  sequences}. We model this family using a weighted finite automaton
(WFA). This strictly generalizes the notion of $k$-shifting regret and
extends it to the notion of regret against a WFA.

Measuring regret against an automaton is both natural and flexible. In
fact, it may often be sensible to \emph{learn} the set of competitor
sequences using data before competing against it. For instance, the
competitor automaton could be a language model trained over best
sequences of baseball teams in the past. Moreover, the competitor
automaton could be learned and reset incrementally. After each epoch,
we could choose to learn a new competitor model and seek to perform
well against that.

We show that not only it is possible to achieve favorable regret
against a WFA but that there exist \emph{computationally efficient}
algorithms to achieve that.  We give a series of algorithms for this
problem. Our first algorithm (Section~\ref{sec:autalg}) is an
automata-based algorithm extending weighted-majority and using
automata operations such as composition and shortest-distance; its
computational cost is exponentially better than that of a na\"{i}ve
method.

We further present efficient algorithms based on a compact
approximation of the competitor automaton
(Section~\ref{sec:min-Renyi}), in particular efficient $n$-gram models
obtained by minimizing the R\'enyi divergence, and present an
extensive study of the approximation properties of such models. We
also show how existing algorithms for minimizing $k$-shifting regret
can be recovered by learning a Maximum-Likelihood bigram language
model over the $k$-shifting competitor automaton. To the best of our
knowledge, this is the first instance of recovering the algorithms of
\cite{HerbsterWarmuth1998} by way of solely focusing on minimizing the
$k$-shifting regret.  Since approximating the competitor automaton is
subject to a trade-off between computational efficiency and
approximation accuracy, we also design a model selection algorithm
adapted to this problem.

We further improve that algorithm by using the notion of failure
transitions ($\phi$-transitions) for a more compact and therefore more
efficient automata representation.  Here, we design a new algorithm
(Appendix~\ref{app:phiaut}) that can convert any weighted finite
automaton into a weighted finite automaton with $\phi$-transitions
($\phi$-WFA).  We then extend the classical composition and
shortest-distance algorithms for WFAs to the setting of
$\phi$-WFAs. The shortest-distance algorithm is designed by extending
the probability semiring structure of the $\phi$-WFA to that of a
ring. We show that if the number of consecutive $\phi$-transitions is
not too large, then these algorithms have a computational complexity
that is comparable to those for standard WFAs. At the same time, our
conversion algorithm can dramatically reduce the size of a WFA.

Finally, we extend the results above to the sleeping expert setting
\citep{FreundSchapireSingerWarmuth1997}, where the learner may not
have access to advice from every expert at each round
(Section~\ref{sec:sleep}).  \ignore{Finally, we extend the ideas for
prediction with expert advice to online convex optimization and a
general mirror descent setting (Section~\ref{sec:oco}). Here, we
describe a related framework that parallels the previous discussion
and also recovers existing algorithms for $k$-shifting regret.}
}

\section{Learning setup}
\label{sec:setup}

We consider the setting of prediction with expert advice over
$T \in \Nset$ rounds. Let $\Sigma = \set{a_1, \ldots, a_N}$ denote a set
of $N$ experts.  At each round $t \in [T]$, an algorithm $\cA$ specifies
a probability distribution $\sfp_t$ over $\Sigma$, samples an expert
$i_t$ from $\sfp_t$, receives the vector of losses of all experts
$\bfl_t \in [0, 1]^N$, and incurs the specific loss $l_t[i_t]$.  A
commonly adopted goal for the algorithm is to minimize its static
(expected) regret $\Reg_T(\cA, \Sigma)$, that is the difference
between its cumulative expected loss and that of the best expert in
hindsight:
\begin{align}
\label{eq:staticregret}
\Reg_T(\cA, \Sigma) = 
\max_{x \in \Sigma} \sum_{t = 1}^T \sfp_t \cdot \bfl_t - \sum_{t = 1}^T l_t[x].
\end{align}
Here, we will consider an alternative benchmark, typically more
demanding, where the cumulative loss of the algorithm is compared
against the loss of the best sequence of experts
$\bx \in \Sigma^T$ among those accepted by a weighted
finite automaton (WFA) $\sC$ over the semiring
$(\Rset_+ \cup \set{+\infty}, +, \times, 0, 1)$.\footnote{Thus, the
  weights in $\sC$ are non-negative; the weight of a path is
  obtained by multiplying the transition weights along that path and
  the weight assigned to a sequence is obtained by summing the weights
  of all accepting paths labeled with that sequence.} The sequences
$\bx$ accepted by $\sC$ are those which are assigned a positive
value by $\sC$, $\sC(\bx) > 0$, which we will assume to be
non-empty. We will denote by $K \geq 1$ the cardinality of that set.

We will take into account the probability distribution $\sfq$ defined
by the weights assigned by $\sC$ to sequences of length $T$:
$\sfq(\bx) = \frac{\sC(\bx)}{\sum_{\bx \in \Sigma^T}
  \sC(\bx)}$. This leads to the following definition of
\emph{weighted regret} at time $T$ given a WFA $\sC$:
\begin{align}
\label{eq:weightedregret}
& \Reg_T(\cA, \sC) \\\nonumber
& = \max_{\substack{\bx \in \Sigma^T\\ \sC(\bx) > 0}} \set[\Bigg]{ \sum_{t = 1}^T \sfp_t
\cdot \bfl_t -  \sum_{t = 1}^T l_t[\bx[t]] 
  + \log [\sfq(\bx) K ] },
\end{align}
where $\bx[t]$ denotes the $t$th symbol of $\bx$.  The presence of the
factor $K$ only affects the regret definition by a constant additive
term $\log K$ and is only intended to make the last term
vanish when the probability distribution $\sfq$ is uniform, i.e.
$\sfq(\bx) = \frac{1}{K}$ for all $\bx$.  The last term in the
weighted regret definition can be interpreted as follows: for a given
value of an expert sequence loss $\sum_{t = 1}^T l_t[\bx[t]]$, the
regret is larger for sequences $\bx$ with a larger probability
$\sfq(\bx)$.  Thus, with this definition of regret, the learning
algorithm is pressed to achieve a small cumulative loss compared 
to expert sequences with small loss and high
probability. Notice that when $\sC$ accepts only constant sequences,
that is sequences $\bx$ with $\bx[1] = \ldots = \bx[T]$ and assigns
the same weight to them, then the notion of weighted regret coincides
with that of static regret (Formula~\ref{eq:staticregret}).

We also define the \emph{unweighted regret} $\Reg_T^0(\cA, \sC)$ of
algorithm $\cA$ at time $T$ given the WFA $\sC$ as:
\begin{align}
\label{eq:unweightedregret}
 \Reg_T^0(\cA, \sC) 
= \max_{\substack{\bx \in \Sigma^T\\ \sC(\bx) > 0}} \set[\Bigg]{
    \sum_{t = 1}^T \sfp_t \cdot \bfl_t - \sum_{t = 1}^T l_t[\bx[t]] }.
\end{align}
The weights of the WFA $\sC$ play no role in this notion of regret.
When $\sC$ has uniform weights, then the unweighted regret and
weighted regret coincide.

As an example, the sequences of experts with $k$ shifts studied by
\cite{HerbsterWarmuth1998} can be represented by the WFA
$\sC_\text{$k$-shift}$ of
Figure~\ref{fig:kshift}(i). Figure~\ref{fig:kshift}(ii) shows an
alternative weighted model of shifting experts, and
Figure~\ref{fig:kshift}(iii) shows a hierarchical family of expert
sequences.

\section{Automata Weighted-Majority algorithm}
\label{sec:AWM}

In this section, we describe a simple algorithm, \emph{Automata
  Weighted-Majority} (\AWM), that can be viewed as an enhancement of
the weighted-majority algorithm \citep{LittlestoneWarmuth1994} to the
setting of experts paths represented by a WFA. \footnote{This
  algorithm is in fact closer to the EXP4 algorithm
  \citep{AuerCesaBianchiFreundSchapire2002}.  However, EXP4 is
  designed for the bandit setting, so we use the weighted-majority
  naming convention.} We will show that it benefits from favorable
weighted and unweighted regret guarantees.

As with standard weighted-majority, \AWM\ maintains a distribution
$\sfq_t$ over the set of expert sequences $\bx \in \Sigma^T$ accepted
by $\sC$ at any time $t$ and admits a learning parameter $\eta >
0$. The initial distribution $\sfq_1$ is defined in terms of the
distribution $\sfq$ induced by $\sC$ over $\Sigma^T$, and
$\sfq_{t + 1}$ is defined from $\sfq_t$ via an exponential update:
for all $\bx \in \Sigma^T, t \geq 1$,
\begin{align}
  & \sfq_1[\bx] = \frac{\sfq[\bx]^\eta}{\sum_{\bx \in \Sigma^T}
  \sfq[\bx]^\eta}, \nonumber \\
  & \sfq_{t + 1}[\bx] = \frac{e^{-\eta \, l_t[\bx[t]]}\sfq_t[\bx]}{\sum_{\bx
    \in \Sigma^T} e^{-\eta \, l_t[\bx[t]]}\sfq_t[\bx]},
\end{align}
where we denote by $\bx[t] \in \Sigma$ the $t$th symbol in $\bx$.
$\sfq_t$ induces a distribution $\sfp_t$ over the expert set $\Sigma$
defined for all $a \in \Sigma$ by
\begin{equation}
\sfp_t[a] = \frac{\sum_{\bx \in \Sigma^T} \sfq_t[\bx] 1_{\bx[t] =
    a}}{\sum_{a \in \Sigma} \sum_{\bx \in \Sigma^T} \sfq_t[\bx] 1_{\bx[t] = a}}.
\end{equation}
Thus, $\sfp_t[a]$ is obtained by summing up the $\sfq_t$-weights of
all sequences with the $t$th symbol equal to $a$ and normalization.
The distributions $\sfp_t$ define the \AWM\ algorithm. Note that the
algorithm cannot be viewed as weighted-majority with $\sfq$-priors on
expert sequences as $\sfq_1$ is defined in terms of $\sfq^\eta$.

The following regret guarantees hold for \AWM.

\begin{theorem}
\label{th:awm}
Let $\sfq$ denote the probability distribution over expert sequences
of length $T$ defined by $\sC$ and let $K$ denote the cardinality of
its support. Then, the following upper bound holds for the weighted
regret of \AWM:
\begin{align*}
  \Reg_T(\AWM, \sC) 
  & \leq \frac{\eta T}{8} + \frac{1}{\eta} \log \bigg[ K^\eta \sum_{\bx
    \in \Sigma^T} \sfq[\bx]^\eta \bigg] \\
    &\leq \frac{\eta T}{8} + \frac{1}{\eta} \log K. 
\end{align*}
Furthermore, when $K \geq 2$, for any $T > 0$, there exists
$\eta^* > 0$, decreasing as a function of $T$, such that:
\begin{align*}
    \Reg_T(\AWM, \sC) 
    & \leq \sqrt{\frac{T H_{\eta^*}(\sfq)}{2}} - H_{\eta^*}(\sfq) + \log
    K,
\end{align*}
where
$H_\eta(\sfq) = \frac{1}{1 - \eta} \log\left(\sum_{\bx \in \Sigma^T}
  \sfq[\bx]^\eta \right)$ is the $\eta$-R\'{e}nyi entropy of $\sfq$.  The
unweighted regret of \AWM\ can be upper-bounded as follows:
\begin{align*}
  \Reg_T^0(\AWM, \sC) 
  & \leq \frac{\eta T}{8} + \frac{1}{\eta} \log K.
\end{align*}

\end{theorem}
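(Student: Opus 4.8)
The plan is to view \AWM\ as a standard weighted-majority / exponential-weights algorithm run over the finite set of $K$ ``meta-experts'' corresponding to the expert sequences $\bx \in \Sigma^T$ with $\sC(\bx) > 0$, each incurring loss $l_t[\bx[t]]$ at round $t$, but with the nonuniform prior $\sfq_1$ rather than the uniform one. First I would observe that, because $\sfp_t$ is exactly the marginal of $\sfq_t$ on the $t$th coordinate, the instantaneous loss of \AWM\ is $\sfp_t\cdot\bfl_t = \sum_{\bx}\sfq_t[\bx]\,l_t[\bx[t]]$, i.e.\ the expected loss of the meta-expert drawn from $\sfq_t$. So the whole analysis reduces to the classical regret bound for exponential weights with a prior.

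Next I would carry out the standard potential-function argument. Let $\Phi_t = \frac{1}{\eta}\log\big(\sum_{\bx} \sfq_1[\bx]\, e^{-\eta \sum_{s=1}^{t-1} l_s[\bx[s]]}\big)$. A one-step drop of $\Phi$ combined with Hoeffding's lemma (the losses $l_t[\bx[t]]$ lie in $[0,1]$, so the log-moment-generating function is bounded by $\eta^2/8$) gives, after telescoping over $t = 1,\dots,T$,
\begin{align*}
\sum_{t=1}^T \sfp_t\cdot\bfl_t \;\le\; -\frac{1}{\eta}\log\Big(\sum_{\bx}\sfq_1[\bx]\,e^{-\eta\sum_{t=1}^T l_t[\bx[t]]}\Big) + \frac{\eta T}{8}.
\end{align*}
Lower-bounding the sum by a single term $\bx$, $\sum_{\bx'}\sfq_1[\bx']e^{-\eta\sum_t l_t[\bx'[t]]} \ge \sfq_1[\bx]\,e^{-\eta\sum_t l_t[\bx[t]]}$, yields for every accepted $\bx$
\begin{align*}
\sum_{t=1}^T \sfp_t\cdot\bfl_t - \sum_{t=1}^T l_t[\bx[t]] \;\le\; \frac{\eta T}{8} - \frac{1}{\eta}\log\sfq_1[\bx].
\end{align*}
Now I would substitute $\sfq_1[\bx] = \sfq[\bx]^\eta / \big(\sum_{\bx'}\sfq[\bx']^\eta\big)$, so that $-\frac1\eta\log\sfq_1[\bx] = -\log\sfq[\bx] + \frac1\eta\log\big(\sum_{\bx'}\sfq[\bx']^\eta\big)$. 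Adding $\log(\sfq[\bx]K)$ to both sides (as in the definition of weighted regret) cancels the $-\log\sfq[\bx]$ term and produces $\frac{\eta T}{8} + \frac1\eta\log\big[K^\eta\sum_{\bx}\sfq[\bx]^\eta\big]$, uniformly in $\bx$; taking the max over $\bx$ gives the first displayed inequality. The second follows from $\sum_{\bx}\sfq[\bx]^\eta \le K^{1-\eta}$ by concavity of $u\mapsto u^\eta$ (power-mean / Jensen applied to the $K$-point support), whence $K^\eta\sum_{\bx}\sfq[\bx]^\eta \le K$, so $\frac1\eta\log[\,\cdot\,]\le \frac1\eta\log K$. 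The unweighted bound is identical except one does not add the $\log(\sfq[\bx]K)$ term: starting from $\sum_t\sfp_t\cdot\bfl_t - \sum_t l_t[\bx[t]] \le \frac{\eta T}{8} - \frac1\eta\log\sfq_1[\bx]$ and bounding $-\log\sfq_1[\bx] = -\log\sfq[\bx]^\eta + \log\sum_{\bx'}\sfq[\bx']^\eta \le \eta\log K \cdot(1) + \dots$; more simply, $-\frac1\eta\log\sfq_1[\bx]\le \frac1\eta\log K$ because $\sfq_1$ is a distribution on $K$ points so its minimum value is at most… — here one uses $\sfq_1[\bx]\ge$ nothing a priori, so instead bound directly $-\frac1\eta\log\sfq_1[\bx] \le -\frac1\eta\log\big(\min_{\bx}\sfq_1[\bx]\big)$; cleaner is to redo the telescoping keeping the uniform bound $\sfq_1[\bx]\ge K^{-1}\cdot$(ratio), i.e.\ note $\sfq_1[\bx] = \sfq[\bx]^\eta/\sum\sfq[\bx']^\eta \ge \sfq[\bx]^\eta/(K\max\sfq[\bx']^\eta)$, which is awkward — the intended route is simply to observe $-\frac1\eta\log\sfq_1[\bx]\le\frac1\eta\log K + \big(\log\sfq[\bx]^{-1} - \frac1\eta\log\sum\sfq[\bx']^\eta\big)$ and that for the unweighted regret one is free to also invoke the bound $\sum_{\bx}\sfq[\bx]^\eta\ge (\max_\bx \sfq[\bx])^\eta$, giving $-\frac1\eta\log\sfq_1[\bx]\le \frac1\eta\log K + \log(\max_\bx\sfq[\bx]/\sfq[\bx])$, and then taking the $\bx$ attaining the max.

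For the third claim ($\eta^*$-tuned bound), I would start from the weighted-regret bound written as $\Reg_T(\AWM,\sC)\le \frac{\eta T}{8} + \frac{1}{\eta}\log\big(\sum_{\bx}\sfq[\bx]^\eta\big) + \log K = \frac{\eta T}{8} - (1-\eta)H_\eta(\sfq) + \log K$, using $\frac1\eta\log\sum\sfq[\bx]^\eta = \frac{1-\eta}{\eta}H_\eta(\sfq)$ — careful: $\frac1\eta\log\sum\sfq^\eta$, and $H_\eta = \frac1{1-\eta}\log\sum\sfq^\eta$, so $\log\sum\sfq^\eta = (1-\eta)H_\eta$, hence $\frac1\eta\log\sum\sfq^\eta = \frac{1-\eta}{\eta}H_\eta(\sfq)$. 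Thus $\Reg_T(\AWM,\sC)\le \frac{\eta T}{8} + \frac{1-\eta}{\eta}H_\eta(\sfq) + \log K$. Then optimize over $\eta\in(0,1)$: treating $H_\eta$ as roughly constant near the optimum (or, more carefully, using that $H_\eta$ is nonincreasing in $\eta$ so replacing it by $H_{\eta^*}$ is valid with a sign check), the $\eta$-dependent part $\frac{\eta T}{8} + \frac{H_{\eta^*}}{\eta}$ is minimized at $\eta^* = \sqrt{8H_{\eta^*}/T}$, giving $\frac{\eta^* T}{8} + \frac{H_{\eta^*}}{\eta^*} = \sqrt{TH_{\eta^*}/2}$, plus the leftover $-H_{\eta^*}$ from the $\frac{1-\eta}{\eta}H = \frac1\eta H - H$ split, plus $\log K$, which is exactly $\sqrt{TH_{\eta^*}(\sfq)/2} - H_{\eta^*}(\sfq) + \log K$. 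The fixed-point definition of $\eta^*$ is consistent since $H_\eta\ge 0$ (as $K\ge 2$ forces $\sum\sfq^\eta\ge 1$ for $\eta\le 1$, whence $H_\eta\ge 0$), and $\eta\mapsto\sqrt{8H_\eta/T}$ being decreasing in $T$ gives the claimed monotonicity.

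\textbf{Main obstacle.} The potential-telescoping and Hoeffding step is entirely routine; the delicate points are purely bookkeeping: (i) verifying that the $\log(\sfq[\bx]K)$ term cancels $-\log\sfq[\bx]$ cleanly and uniformly in $\bx$ so the $\max_\bx$ can be taken at the end, and (ii) handling the self-referential definition of $\eta^*$ — one must argue that a fixed point $\eta^* = \sqrt{8H_{\eta^*}(\sfq)/T}$ exists (intermediate value theorem on a continuous monotone map of $[0,1]$ into itself, using $H_\eta$ bounded and continuous) and that plugging it back in is valid given the monotonicity of $H_\eta$ in $\eta$. I expect step (ii), making the $\eta^*$ bound rigorous rather than heuristic, to be the part requiring the most care.
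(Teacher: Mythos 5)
Your proposal follows essentially the same route as the paper: the same potential-function argument over meta-experts with prior $\sfq_1 \propto \sfq^\eta$, Hoeffding's lemma for the one-step drop, a single-term lower bound on the final potential, Jensen's inequality in the form $\sum_{\bx}\sfq[\bx]^\eta \leq K^{1-\eta}$ for the second inequality, and the identical fixed-point tuning $\eta^* = \sqrt{8H_{\eta^*}(\sfq)/T}$ (the paper phrases existence via monotonicity of $\eta \mapsto \eta/\sqrt{H_\eta(\sfq)}$, you via the intermediate value theorem; these are the same argument). Your computation $\frac{\eta^* T}{8} + \frac{1-\eta^*}{\eta^*}H_{\eta^*} = \sqrt{TH_{\eta^*}/2} - H_{\eta^*}$ matches the paper exactly.

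The one place your argument does not close is the unweighted regret bound, and your own text shows you noticed: you end up with $-\tfrac{1}{\eta}\log \sfq_1[\bx] \leq \tfrac{1}{\eta}\log K + \log\big(\max_{\bx'}\sfq[\bx']/\sfq[\bx]\big)$, which carries an extra nonnegative term for non-uniform $\sfq$ and therefore does not yield $\Reg_T^0 \leq \tfrac{\eta T}{8} + \tfrac{1}{\eta}\log K$ as stated. The paper resolves this simply by substituting $\sfq[\bx] = \tfrac{1}{K}$ into the general inequality
\begin{align*}
\sum_{t=1}^T \E_{a\sim\sfp_t}[l_t[a]] - \sum_{t=1}^T l_t[\bx_0[t]] \leq \frac{\eta T}{8} - \log \sfq[\bx_0] + \frac{1}{\eta}\log\Big[\sum_{\bx}\sfq[\bx]^\eta\Big],
\end{align*}
i.e., the unweighted guarantee is the weighted analysis specialized to the algorithm run with the uniform prior over the $K$ accepted sequences, for which $-\log\sfq[\bx_0] + \tfrac{1}{\eta}\log\sum_\bx \sfq[\bx]^\eta = \log K + \tfrac{1-\eta}{\eta}\log K = \tfrac{1}{\eta}\log K$. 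With that reading your derivation goes through verbatim; without it, the extra $\log(\max\sfq/\sfq[\bx])$ term cannot be removed, so you should make the uniform-prior instantiation explicit rather than trying to bound $-\log\sfq_1[\bx]$ for an arbitrary prior.
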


The proof is an extension of the standard proof for the
weighted-majority algorithm and is given in Appendix~\ref{app:awm}.
The bound in terms of the R\'enyi entropy shows that the regret
guarantee can be substantially more favorable than standard bounds of
the form $O(\sqrt{T \log K})$, depending on the properties of the
distribution $\sfq$. First, since the $\eta$-R\'enyi entropy is
non-increasing in $\eta$ \citep{VanErvenHarremos2014}, we have
$H_{\eta^*}(\sfq) \leq H_0(\sfq) = \log(|\supp(\sfq)|) \leq \log
K$. Second, if the distribution $\sfq$ is concentrated on a subset
$\Delta$ with a small cardinality, $|\Delta| \ll K$, that is
$\sum_{\bx \not \in \Delta} \sfq[\bx]^{\eta^*} < \e (1-\eta^*)
\sum_{\bx \in \Delta} \sfq[\bx]^{\eta^*}$ for some $\e > 0$ and for
$\eta^* < 1$, then, by Jensen's inequality, the following upper bound
holds:
\begin{align*}
  H_\eta^*(\sfq)
\ignore{
  & \leq \frac{1}{1 - \eta^*} \log \bigg( \sum_{\bx \in \Delta}
    \sfq[\bx]^{\eta^*\!} + \e (1 - \eta^*) \sum_{\bx \in \Delta}
  \sfq[\bx] \bigg) \\
}
  & \leq \frac{1}{1 - \eta^*} \log \bigg(\sum_{\bx \in \Delta}
    \sfq[\bx]^{\eta^*\!}\bigg) + \e \\ 
    &\leq \frac{1}{1 - \eta^*} \log \bigg(|\Delta| \bigg(\frac{1}{|\Delta|}\sum_{\bx \in \Delta}
    \sfq[\bx]\bigg)^{\eta^*\!}\bigg) + \e \\ 
    % &\leq \frac{1}{1 - \eta^*} \log \big(|\Delta|^{1-\eta^*}\big) + \e \\ 
    &\leq \log(|\Delta|) + \e. 
\end{align*}

{\bf Efficient algorithm}. We now present an efficient computation
of the distributions $\sfp_t$.  Algorithm~\ref{alg:awm} gives the
pseudocode of our algorithm.  We will assume throughout that $\sC$ is
deterministic, that is it admits a single initial state and no two
transitions leaving the same state share the same label.  We can
efficiently compute a WFA accepting the set of sequences of length $T$
accepted by $\sC$ by using the standard intersection algorithm for
WFAs (see Appendix~\ref{app:intersection} for more detail on this
algorithm). Let $\sS_T$ be a deterministic WFA accepting the
set of sequences of length $T$ and assigning weight one to each (see
Figure~\ref{fig:S_T}). Then, the intersection of $\sC$ and $\sS_T$ is
a WFA denoted by $\sC \cap \sS_T$ which, by definition, assigns to
each sequence $\bx \in \Sigma^T$ the weight
\begin{equation}
  (\sC \cap \sS_T) (\bx) = \sC(\bx) \times \sS_T(\bx) = \sC(\bx),
\end{equation}
and assigns weight zero to all other sequences. Furthermore, the WFA
$\sB = (\sC \cap \sS_T)$ returned by the intersection algorithm is
deterministic since both $\sC$ and $\sS_T$ are deterministic.  Next,
we replace each transition weight of $\sB$ by its $\eta$-power. Since
$\sB$ is deterministic, this results in a WFA that we denote by
$\sB^\eta$ and that associates to each sequence $\bx$ the weight
$\sC[\bx]^\eta$. Normalizing $\sB^\eta$ results in a WFA $\sA$
assigning weight
$\sA[\bx] = \frac{\sB[\bx]^\eta}{\sum_x \sB[\bx]^\eta} = \sfq_1[x]$ to
any $\bx \in \Sigma^T$.  This normalization can be achieved in time
that is linear in the size of the WFA $\sB^\eta$ using the
\textsc{Weight-Pushing} algorithm \citep{Mohri1997bis,Mohri2009}. For
completeness, we describe this algorithm in
Appendix~\ref{app:weightpush}. Note that since $\sB^\eta$ is acyclic,
its size is in $\cO(|E_\sA|)$.\footnote{The \textsc{Weight-Pushing}
  algorithm has been used in many other contexts to make a directed
  weighted graph stochastic. This includes network normalization in
  speech recognition \citep{MohriRiley2001}, and online learning with
  large expert sets
  \citep{TakimotoWarmuth2003,CortesKuznetsovMohriWarmuth2015}, where
  the resulting stochastic graph enables efficient sampling.  The
  problem setting, algorithms and objectives in the last two
  references are completely distinct from ours. In particular, (a) in
  those, each path of the graph represents a single expert, while in
  our case each path is a sequence of experts; (b) in those,
  weight-pushing is applied at every round, while in our case it is
  used once at the start of the algorithm; (c) the regret is with
  respect to a static expert, while in our case it is with respect to
  a WFA of expert sequences.}  We will denote by $\sA$ the resulting
WFA.

For any state $u$ of $\sA$, we will denote by $\bbeta[u]$ the sum of
the weights of all paths from $u$ to a final state. The vector
$\bbeta$ can be computed in time that is linear in the number of states and
transitions of $\sA$ using a simple \emph{single-source
  shortest-distance} algorithm in the semiring
$(\Rset_+ \cup \set{+\infty}, +, \times, 0, 1)$ \citep{Mohri2009}, or
the forward-backward algorithm. We call this subroutine \textsc{BwdDist} in 
the pseudocode.

\begin{algorithm2e}[t]
  \TitleOfAlgo{\AWM($\sC$, $\eta$)}
  $\sB \gets \sC \cap \sS_T$ \\
  $\sA \gets \textsc{Weight-Pushing}(\sB^\eta)$ \\
  $\bbeta \gets \textsc{BwdDist}(\sA)$ \\
  $\balpha \gets 0$; $\balpha[I_\sA] \gets 1$ \\ 
  \ForEach{$e \in E_{\sA}^{0 \to 1}$}{
      $\sfp_1[\lab[e]] \gets \weight[e]$.
    }
 \For{$t \gets 1$ \KwTo $T$}{
    $i_t \gets $\textsc{Sample}($\sfp_t$); \textsc{Play}($i_t$); \textsc{Receive}($\bfl_t$)\\
    $Z \gets 0$; $\bw \gets 0$\\ 
    \ForEach{$e \in E_{\sA}^{t \to t + 1}$}{
      $\weight[e] \gets \weight[e] \, e^{-\eta l_t[\lab[e]]}$ \\
      $\bw[\lab[e]] \gets \bw[\lab[e]] + \balpha[\src[e]] \, \weight[e] \, \bbeta[\dest[e]]$\\
      $Z \gets Z + \bw[\lab[e]]$\\
      $\balpha[\dest[e]] \gets \balpha[\dest[e]] + \balpha[\src[e]] \,
      \weight[e] $
    }
    $\sfp_{t + 1} \gets \frac{\bw}{Z}$ \\ 
  }
\caption{\textsc{AutomataWeightedMajority}(AWM).} 
\label{alg:awm}
\end{algorithm2e}

\begin{figure}[t]
\vskip -.15in
\centering
\includegraphics[scale=0.55]{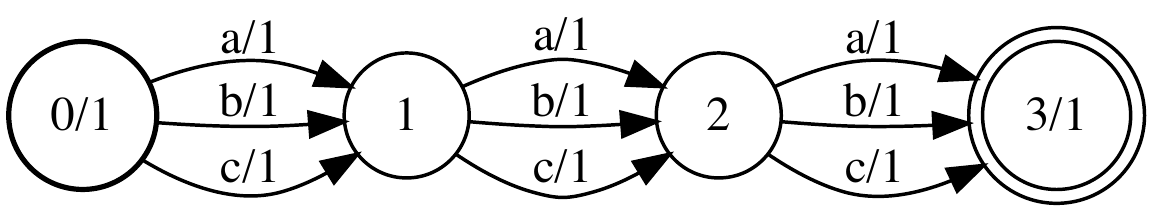}
\caption{WFA $\sS_T$, for $\Sigma = \set{a, b, c}$ and $T = 3$.}
\label{fig:S_T}
\vskip .1in
\centering
\includegraphics[scale=0.65]{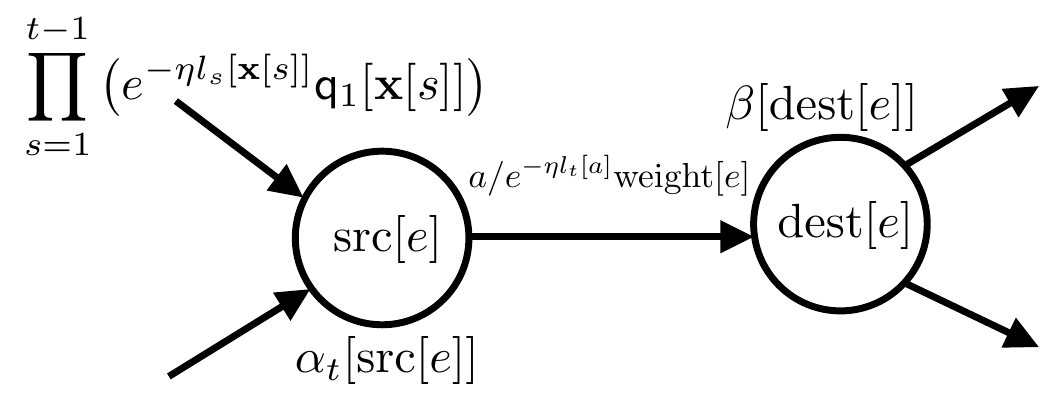}
\caption{Illustration of algorithm \AWM.}
\label{fig:awm}
\vskip -.15in
\end{figure}
 
We will denote by $Q_t$ the set of states in $\sA$ that can be reached
by sequences of length $t$ and by $E_{\sA}^{t \to t + 1}$ the set of
transitions from a state in $Q_t$ to a state in $Q_{t + 1}$.  For each
transition $e$, let $\src[e]$ denote its source state, $\dest[e]$ its
destination state, $\lab[e] \in \Sigma$ its label, and
$\weight[e] \geq 0$ its weight.  Since $\sA$ is normalized, the expert
probabilities $\sfp_1[a]$ for $a \in \Sigma$ can be read off the
transitions leaving the initial state: $\sfp_1[a]$ is the weight of
the transition in $E_{\sA}^{0 \to 1}$ labeled with $a$.

Let $\balpha_t[u]$ denote the \emph{forward weights}, that is the sum
of the weights of all paths from the initial state to state $u$ just
before the $t$th round.  At round $t$, the weight of each transition
$e$ in $E_{\sA}^{t \to t + 1}$ is multiplied by
$e^{-\eta l_t[\lab[e]]}$.  This results in new forward weights
$\balpha_{t + 1}[u]$ at the end of the $t$-th iteration.  $\balpha_{t + 1}$
can be straightforwardly derived from $\balpha_t$ since for
$u \in Q_{t + 1}$, $\balpha_{t + 1}[u]$ is given by
$\balpha_{t + 1}[u] = \sum_{e\colon \dest[e] = u} \balpha_t[\src[e]]
\weight[e]$. 

Observe that for any $t \in [T]$ and $\bx$, $\sfq_t[\bx]$ can be
written as follows by unwrapping its recursive update definition:
$$\sfq_t[\bx] = \frac{e^{-\eta \sum_{s = 1}^{t - 1} l_s[\bx[s]]}
  \sfq_1[\bx]}{\sum_{\bx \in \Sigma^T} e^{-\eta \sum_{s = 1}^{t - 1}
    l_s[\bx[s]]} \sfq_1[\bx]}.$$
    In view of that, for any
$a \in \Sigma$, $\sfp_{t + 1}[a]$ can be written as follows:
\begin{align*}
\sfp_{t + 1}[a] 
& = \frac{\sum_{\bx \in \Sigma^T} e^{-\eta \sum_{s = 1}^t
    l_s[\bx[s]]} \sfq_1[\bx] 1_{\bx[t] =
    a}}{\sum_{a \in \Sigma} \sum_{\bx \in \Sigma^T} e^{-\eta \sum_{s = 1}^t
    l_s[\bx[s]]} \sfq_1[\bx] 1_{\bx[t] =
    a}}.
\end{align*}
Since the WFA $\sA$ is deterministic, for any $\bx$ accepted by $\sA$
there is a unique accepting path $\pi$ in $\sA$ labeled with
$\bx$. The numerator of the expression of $\sfp_{t + 1}[a]$ is then
the sum of the weights of all paths in $\sA$ with the $t$th symbol $a$
at the end of $t$th iteration. This can be expressed as the sum over
all transitions $e$ in $E_{\sA}^{t \to t + 1}$ with label $a$ of the
total \emph{flow} through $e$, that is the sum of the weights of all
accepting path going through $e$:
$\balpha_t[\src[e]] \, \weight[e] \, \bbeta[\dest[e]]$ (see
Figure~\ref{fig:awm}).  This is precisely the formula 
determining $\sfp_{t + 1}$ in the pseudocode, where $Z$ is the
normalization factor.

The \AWM\ algorithm is closely related to the Expert Hidden Markov
Model of \cite{KoolenDeRooij2013} given for the log loss. It can be
viewed as a generalization of that algorithm to arbitrary loss
functions. A key difference between our setup and the perspective
adopted by \cite{KoolenDeRooij2013} is that they assume a Bayesian
setting where a prior distribution over expert sequences is given and
must be used. We assume the existence of a competitor automaton
$\sC$, but do not necessarily need to sample from it for making
predictions. This will be crucial in the next section, where we use a
different WFA than $\sC$ to improve computational efficiency while
preserving regret performance. Also,  
the prior distribution in \citep{KoolenDeRooij2013} would be over
$\sC_T$ (for a large $T$) and not $\sC$.

The computational complexity of \AWM\ at each round $t$ is
$\cO\big(|E_{\sA}^{t \to t + 1}|\big)$, that is the time to update the
weights of the transitions in $E_{\sA}^{t \to t + 1}$ and to
incrementally compute $\balpha$ for states reached by paths of length
$t + 1$. The total computational cost of the algorithm is thus
$\cO\big (\sum_{t = 1}^T |E_{\sA}^{t \to t + 1}| \big) =
\cO(|E_\sA|)$, where $E_\sA$ is the set of transitions of
$\sA$.\footnote{By the discussion above and
  Appendix~\ref{app:intersection}, the total complexity of the
  intersection and weight-pushing operations is also in
  $\cO(|E_\sA|)$, so that they do not add any additional
  cost. Moreover, these two operations need only be carried out once
  and can be performed offline.}  Note that $\sA$ and $\sC \cap \sS_T$
admit the same topology, thus the total complexity of the algorithm
depends on the number of transitions of the intersection WFA
$\sC \cap \sS_T$, which is at most $|\sC| NT$. This can be
substantially more favorable than a na\"{i}ve algorithm, whose
worst-case complexity is exponential in $T$.

When the number of transitions of the intersection WFA
$\sC \cap \sS_T$ is not too large compared to the number of experts
$N$, the \AWM\ algorithm is quite efficient.  However, it is natural
to ask whether one can design efficient algorithms even if the number
of transitions $E_{\sA}^{t \to t + 1}$ to process per round is large
(which may be the case even for a \emph{minimized} WFA
$\sC \cap \sS_T$ \citep{Mohri2009}).

% Note that this depends on
% the properties of the WFA $\sC$ used to represent the set of expert
% sequences. Different WFAs can be used to represent the same set of
% expert sequences, with the same sequence weights. If a
% space-inefficient WFA $\sC$ is used, in extreme cases a tree-like WFA,
% then the intersection WFA $\sC \cap \sS_T$ would also be large and the
% resulting \AWM\ algorithm inefficient. In contrast, a smaller WFA
% $\sC$ would lead to a smaller intersection and thus a more efficient
% \AWM\ algorithm. This issue can be addressed by applying the WFA
% minimization to the deterministic WFA $\sC \cap \sS_T$
% \citep{Mohri2009}. The complexity of that algorithm is only linear for
% an acyclic WFA. The algorithm returns the deterministic WFA equivalent
% to $\sC \cap \sS_T$ with the smallest number of states (and
% transitions), thereby making \AWM\ efficient, regardless of the
% inefficiency properties of $\sC$.

% Nevertheless, even a minimized WFA $\sC \cap \sS_T$ may have too
% large a transition set for some sets of expert sequences. The number
% of transitions $E_{\sA}^{t \to t + 1}$ to process per round even for
% a minimized WFA $\sC \cap \sS_T$ could be extremely large in some
% cases. Can we still derive an efficient online learning algorithm
% competing against such sets of expert sequences? 

We will give two sets
of solutions to derive a more efficient algorithm, which can be
combined for further efficiency. In the next section, we
discuss a solution that consists of using an
approximate WFA with a smaller number of transitions.
In Appendix~\ref{app:phiaut}, we show
that the notion of \emph{failure transition}, originally used in the
design of string-matching algorithms and recently employed for 
parameter estimation in backoff $n$-gram language models
\citep{RoarkAllauzenRiley2013}, can be used to derive a more compact representation
of the WFA $\sC \cap \sS_T$, thereby resulting in a significantly
more efficient online learning algorithm that still admits compelling regret
guarantees.

%\newpage
\section{Approximation algorithms}
\label{sec:approx}

In this section, we present approximation algorithms for the problem
of online learning against a weighted sequence of experts represented
by a WFA $\sC$.  Rather than using the intersection WFA
$\sC_T = \sC \cap \sS_T$, we will assume that \AWM\ is run with an
approximate WFA $\h \sC_T$. The main motivation for doing so is that
the algorithm can be substantially more efficient if $\h \sC_T$ admits
significantly fewer transitions than $\sC_T$. Of course, this comes at
the price of a somewhat weaker regret guarantee that we now analyze in
detail.
\ignore{In Appendix~\ref{app:timeindepapprox}, we will also discuss
how to perform approximations directly on the WFA $\sC$.}

\subsection{Effect of WFA approximation}
\label{sec:WFAapprox}

We first analyze the effect of automata approximation on the regret of \AWM.
As in the previous section, we denote by $\sfq$ the distribution
defined by $\sC_T$ over sequences of length $T$. We will similarly
denote by $\h \sfq$ the distribution defined by $\h \sC_T$ over the
same set. The effect of the WFA approximation on the regret can be
naturally expressed in terms of the $\infty$-R\'{e}nyi divergence
$D_\infty(\sfq \| \h \sfq)$ between the distributions $\sfq$ and
$\h \sfq$, which is defined by
$D_\infty(\sfq \| \h \sfq) = \sup_{\bx \in \Sigma^T} \log [
\sfq(\bx)/\h \sfq(\bx) ]$\ignore{ \citep{Renyi1961}}.

\begin{theorem}
\label{th:WFAapprox}
The weighted regret of the \AWM\ algorithm with respect to the WFA
$\sC$ when run with $\h \sC_T$ instead of $\sC_T$ can be
upper bounded as follows:
\begin{align*}
  \Reg_T(\cA, \sC) 
  & \leq \frac{\eta T}{8} + \frac{1}{\eta} \log \Big[K^\eta \sum_{\bx} \h
  \sfq[\bx]^\eta \Big] + D_\infty(\sfq \| \h \sfq) \\ 
  & \leq \frac{\eta T}{8} + \frac{1}{\eta} \log K + D_\infty(\sfq \| \h \sfq). 
\end{align*}
Its unweighted regret can be upper bounded as follows:
\begin{align*}
  \Reg_T^0(\cA, \sC) &\leq \max_{\sC(\bx) > 0} \frac{\eta T}{8} + \frac{1}{\eta} \log \bigg[
    \frac{1}{\sfq[\bx]}\bigg] + \frac{1}{\eta}
D_\infty(\sfq \| \h \sfq).
\end{align*}
\end{theorem}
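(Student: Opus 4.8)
The natural approach is to reduce everything to Theorem~\ref{th:awm}, applied to the approximate WFA $\h\sC_T$ rather than to $\sC_T$, and then to pay for the mismatch between $\sfq$ and $\h\sfq$ using the definition of the $\infty$-R\'enyi divergence. Concretely, running \AWM\ with $\h\sC_T$ gives, by Theorem~\ref{th:awm} applied with $\h\sfq$ in place of $\sfq$,
\begin{align*}
  \sum_{t=1}^T \sfp_t\cdot\bfl_t - \sum_{t=1}^T l_t[\bx[t]] + \log[\h\sfq(\bx)\h K]
  \;\le\; \frac{\eta T}{8} + \frac{1}{\eta}\log\Big[\h K^\eta \sum_{\bx}\h\sfq[\bx]^\eta\Big]
\end{align*}
for every $\bx$ with $\h\sfq(\bx)>0$, where $\h K$ is the support size of $\h\sfq$; I would in fact want $\h\sC_T$ chosen so that $\supp(\h\sfq)\supseteq\supp(\sfq)$, so that every competitor sequence of $\sC$ is still scored. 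The first step is therefore to isolate $\sum_t\sfp_t\cdot\bfl_t - \sum_t l_t[\bx[t]] + \log[\sfq(\bx)K]$ on the left-hand side by replacing $\log[\h\sfq(\bx)\h K]$ with $\log[\sfq(\bx)K]$ and absorbing the difference: the extra term is $\log[\sfq(\bx)/\h\sfq(\bx)] + \log[K/\h K]$, and $\log[\sfq(\bx)/\h\sfq(\bx)] \le D_\infty(\sfq\|\h\sfq)$ by definition of the $\infty$-R\'enyi divergence (and the $\log[K/\h K]$ piece is handled by how the $K$-normalization in the regret definition interacts with $\h K$; with $\supp(\h\sfq)\supseteq\supp(\sfq)$ one has $\h K\ge K$, which only helps). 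Taking the max over $\bx$ with $\sfq(\bx)>0$ then yields the first displayed inequality; the second follows from $\sum_\bx\h\sfq[\bx]^\eta \le \h K^{1-\eta}$ (by Jensen / power-mean, exactly as in Theorem~\ref{th:awm}) together with $\h K\le\cdots$, i.e.\ the same simplification $\frac1\eta\log[\h K^\eta\sum_\bx\h\sfq[\bx]^\eta]\le\frac1\eta\log\h K$ up to controlling $\h K$ against $K$.

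For the unweighted regret, I would start instead from the unweighted-regret half of Theorem~\ref{th:awm} — or, more carefully, redo the telescoping argument underlying it — and track that the $\sfq_1$-initialization is built from $\h\sfq^\eta$. Unwinding \AWM's update with initial weights $\sfq_1[\bx]\propto\h\sfq[\bx]^\eta$, the standard potential argument gives, for each competitor $\bx$,
\begin{align*}
  \sum_{t=1}^T \sfp_t\cdot\bfl_t - \sum_{t=1}^T l_t[\bx[t]]
  \;\le\; \frac{\eta T}{8} + \frac1\eta\log\frac{1}{\sfq_1[\bx]}
  \;=\; \frac{\eta T}{8} + \frac1\eta\log\frac{\sum_{\bx'}\h\sfq[\bx']^\eta}{\h\sfq[\bx]^\eta}
  \;\le\; \frac{\eta T}{8} + \log\h K + \log\frac{1}{\h\sfq[\bx]},
\end{align*}
using $\sum_{\bx'}\h\sfq[\bx']^\eta\le\h K^{1-\eta}$ again. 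Now convert $\h\sfq[\bx]$ back to $\sfq[\bx]$: since $\log(1/\h\sfq[\bx]) = \log(1/\sfq[\bx]) + \log(\sfq[\bx]/\h\sfq[\bx]) \le \log(1/\sfq[\bx]) + D_\infty(\sfq\|\h\sfq)$, and then taking $\max$ over $\bx$ with $\sfq(\bx)>0$, one gets a bound of the shape $\frac{\eta T}{8} + \frac1\eta\log\h K + \max_{\sfq(\bx)>0}\frac1\eta\log\frac1{\sfq[\bx]} + \frac1\eta D_\infty(\sfq\|\h\sfq)$, which matches the claimed form (the $\frac1\eta\log\h K$ and the way $\log\h K$ versus $\frac1\eta\log\h K$ appears is the only bookkeeping subtlety, resolved by the same $\sum\h\sfq^\eta\le\h K^{1-\eta}$ estimate that turns $\frac1\eta\log[\h K^\eta\sum\h\sfq^\eta]$ into $\frac1\eta\log\h K$).

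The main obstacle I anticipate is not the analytic content — the R\'enyi-divergence bound $\log[\sfq(\bx)/\h\sfq(\bx)]\le D_\infty(\sfq\|\h\sfq)$ is immediate from the definition — but rather the careful bookkeeping of the three distinct normalization constants ($K$ for $\sfq$, $\h K$ for $\h\sfq$, and the regret definition's built-in factor $K$) and of the two logarithmic weightings ($\frac1\eta\log$ in the weighted bound versus the plain $\log$-prior term that the unweighted analysis produces). One must be careful that the $\max$ over competitors is taken over $\{\bx : \sC(\bx)>0\}$ throughout (since that is what the weighted/unweighted regret against $\sC$ demands), which is exactly why the containment $\supp(\h\sfq)\supseteq\supp(\sfq)$ is needed: otherwise $\h\sfq(\bx)$ could be $0$ for a legitimate competitor $\bx$ and $D_\infty(\sfq\|\h\sfq)$ would be infinite, so this containment should be stated as a standing assumption on the approximation $\h\sC_T$.
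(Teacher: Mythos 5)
Your proposal follows essentially the same route as the paper's proof: invoke the potential-argument inequality from Theorem~\ref{th:awm} for \AWM\ run with $\h\sC_T$, then swap $\h\sfq$ for $\sfq$ at the cost of $\log[\sfq(\bx)/\h\sfq(\bx)] \leq D_\infty(\sfq\,\|\,\h\sfq)$ (respectively $\tfrac{1}{\eta}$ times that quantity in the unweighted case) before maximizing over $\{\bx \colon \sC(\bx) > 0\}$. The only difference is bookkeeping: the paper starts from the $K$-free intermediate inequality of Theorem~\ref{th:awm}'s proof and adds $\log[\sfq(\bx_0)K] - \log[\h\sfq(\bx_0)]$ to both sides, so the normalization constant $\h K$ of $\h\sfq$ that you track never enters and the extra $\log \h K$ term in your unweighted bound does not arise.
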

The proof is given in Appendix~\ref{app:WFAapprox}.
Theorem~\ref{th:WFAapprox} shows that the extra cost of using an
approximate WFA $\h \sC_T$ instead of $\sC_T$ is
$D_\infty(\sfq \| \h \sfq)$ for the weighted regret and similarly
$\frac{1}{\eta} D_\infty(\sfq \| \h \sfq)$ for the unweighted regret.
The bound is tight since the best sequence in hindsight in the regret
definition may also be the one maximizing the log-ratio.

The theorem suggests a general algorithm for selecting an approximate
WFA $\h \sC$ out of a family $\cC$ of WFAs with a relatively small
number of transitions. This consists of choosing $\h \sC$ to
minimize the R\'enyi divergence as defined by the following program:
\begin{equation}
\label{eq:autapprox}
\min_{\h \sC \in \cC} D_\infty( \sfq \| \h \sfq),
\end{equation}
where $\h q$ is the distribution induced by $\h \sC$ over $\Sigma^T$
(the one obtained by computing $\h \sC_T = \h \sC \cap \sS_T$ and
normalizing the weights). The theorem ensures that the solution
benefits from the most favorable regret guarantee among the WFAs in
$\cC$.  When the set of distributions associated to $\cC$ is convex,
then the set of distributions defined over $\Sigma^T$ is also
convex. This is then a convex optimization problem, since
$\h \sfq \mapsto \log(\sfq/\h \sfq)$ is a convex function and the
supremum of convex functions is convex.

The choice of the family $\cC$ is subject to a trade-off: approximation
accuracy versus computational efficiency of using WFAs in $\cC$.
This raises a model selection question for which we discuss in
detail a solution in Section~\ref{sec:min-Renyi}:
given a sequence of families $(\cC_n)_{n \in \bN}$ with growing
complexity and computational cost, the problem consists of selecting
the best $n$.

In the following, we will consider the case where the family $\cC$ of
weighted automata is that of \emph{$n$-gram models}, for which we can
upper bound the computational complexity.

\subsection{Minimum R\'{e}nyi divergence $n$-gram models}
\label{sec:min-Renyi}

% \begin{figure}[t]
% \vskip -.15in
% \centering
% \begin{tabular}{c@{\hspace{1cm}}c}
% \includegraphics[scale=0.35]{hierarchy} &
% \raisebox{-.4cm}{\includegraphics[scale=0.38]{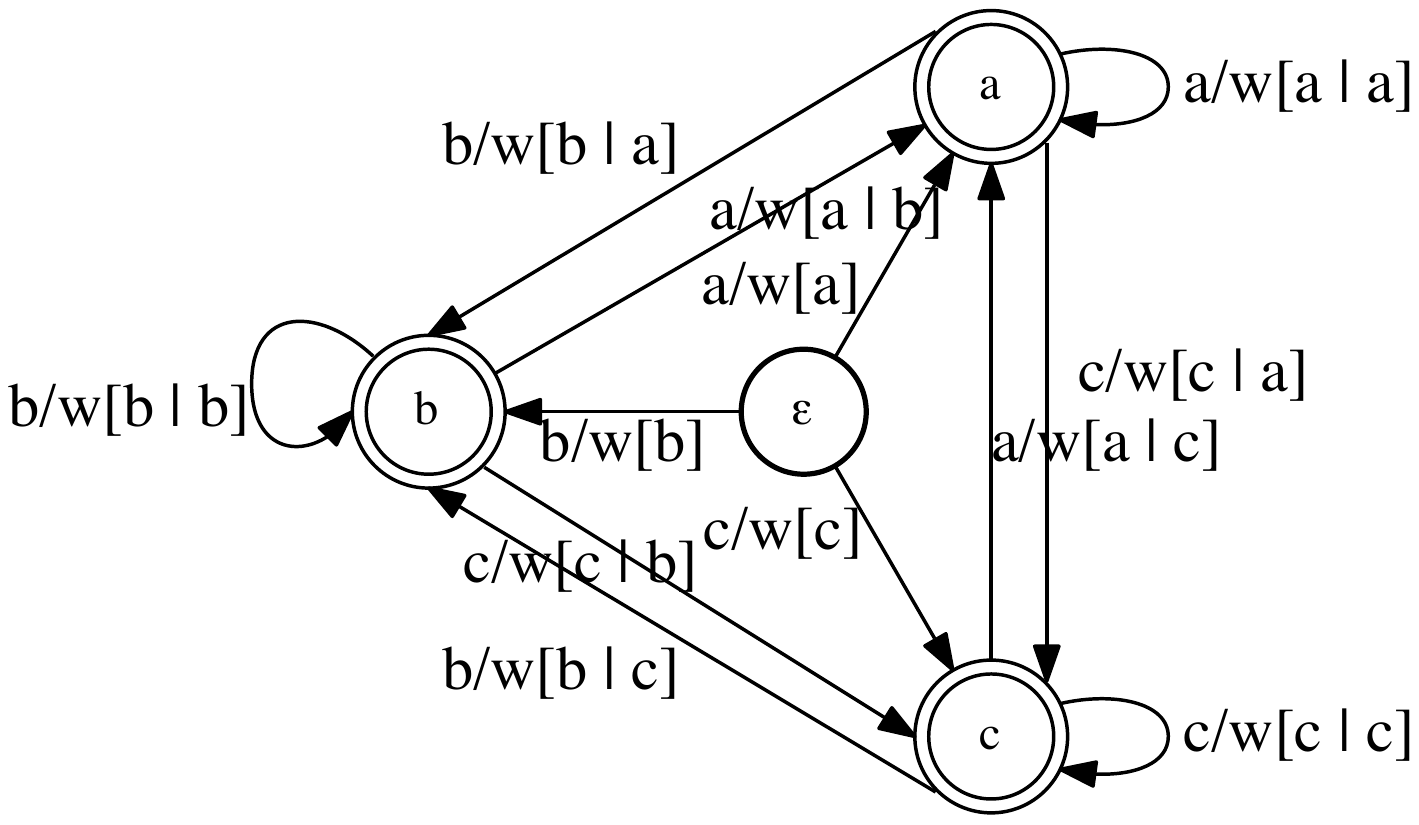}}\\[-.25cm]
% (i) & (ii)
% \end{tabular}
%   \caption{(i) An automaton representing a hierarchical family of expert 
%   sequences. The learner must select expert $a$ from the start, can only shift 
%   onto $b$ once, and can only shift onto $c$ twice. (ii) A bigram language 
%   model over the alphabet $\Sigma =\set{a, b, c}$.
%  }
% \label{fig:bigram}
% \vskip -.15in
% \end{figure}

% \begin{figure}[t]
% \vskip -.15in
%   \centering
%   \includegraphics[scale=0.38]{nbigram}
%   \caption{A bigram language model over the alphabet $\Sigma =
%     \set{a, b, c}$.}  
% \label{fig:bigram}
% \vskip -.15in
% \end{figure}

Let $\Sigma^{\leq n - 1}$ denote the set of sequences of length at
most $n - 1$.  An $n$-gram language model is a Markovian model of
order $(n - 1)$ defined over $\Sigma^*$, which can be compactly
represented by a WFA with each state identified with a sequence
$\bx \in \Sigma^{\leq n - 1}$, thereby encoding the sequence
\emph{just read} to reach that state.  The WFA admits a transition
from state $(\bx[1] \cdots \bx[n - 1])$ to state
$(\bx[2] \cdots \bx[n - 1] a)$ with weight
$\sfw \big[a \, | \, \bx[1] \cdots \bx[n - 1] \big]$, for any
$a \in \Sigma$, and, for any $k \leq n - 1$, a transition from state
$(\bx[1] \cdots \bx[k - 1])$ to state $(\bx[1] \cdots \bx[k - 1] a)$
with weight $\sfw \big[ a \, | \, \bx[1] \cdots \bx[k - 1] \big]$, for
any $a \in \Sigma$. It admits a unique initial state which is the one
labeled with the empty string $\e$ (sequence of length zero) and all
its states are final. The WFA is stochastic, that is outgoing
transition weights sum to one at every state: thus,
$\sum_{a \in \Sigma} \sfw[a | \bx ] = 1$ for all
$\bx \in \Sigma^{\leq n - 1}$. Notice that this WFA is also
deterministic since it admits a unique initial state and no two
transitions with the same label leaving any
state. Figure~\ref{fig:bigram} illustrates this definition in the case
of a simple bigram model.\ignore{\footnote{Notice that $n$-gram models of this
  form are commonly used in language and speech processing. In these
  applications, the models are typically \emph{smoothed} since they
  are trained on a finite sample. In our case, smoothing will not be
  needed since we can train directly on $\sC_T$, which we interpret as
  the full language.}}

Note that the transition weights $\sfw[a | \bx]$, with $a \in \Sigma$
and $\bx \in \cup_{k \leq n - 1} \Sigma^k$ fully specify an $n$-gram
model. Since for a fixed $\bx \in \cup_{k \leq n - 1} \Sigma^k$,
$\sfw[\cdot | \bx]$ is an element of the simplex, an $n$-gram model
can be viewed as an element of the product of
$\sum_{k = 0}^{n - 1} \Sigma^k$ simplices, a convex set. We will denote
by $\cW_n$ the family of all $n$-gram models.

\begin{figure}[t]
\vskip -.15in
\centering
\includegraphics[scale=0.5]{} 
  \caption{A bigram language 
  model over the alphabet $\Sigma =\set{a, b, c}$.
 }
\label{fig:bigram}
\vskip -.15in
\end{figure}

One key advantage of $n$-gram models in this context is that the
per-iteration complexity can be bounded in terms of the number of
symbols. Since an $n$-gram model has at most $|\Sigma|^{n - 1}$
states, its per-iteration computational cost is in
$\cO\big(|\Sigma|^{n}\big)$ as each state can take one of $|\Sigma|$
possible transitions. For $n$ small, this can be very advantageous
compared to the original $\sC_T$, since in general the maximum
out-degree of states reached by sequences of length $t$ in the latter
can be very large. For instance, the automaton $\sC_\text{weighted-shift}$ 
in Figure~\ref{fig:kshift} (ii) can itself be viewed as a bigram model and 
admits efficient computation.   

For $n$-gram models, our approximation algorithm
(Problem~\ref{eq:autapprox}) can be written as follows:
\begin{equation}
\label{eq:minrenyingram}
\min_{\sfw \in \cW_n} D_\infty(\sfq \| \sfq_{\sfw})
=  \min_{\sfw \in \cW_n} \sup_{\bx \in \Sigma^T} 
\log\bigg[ \frac{\sfq[\bx]}{\sfq_{\sfw}[\bx]} \bigg],
\end{equation}
where $\sfq_{\sfw}$ is the distribution induced by the $n$-gram model
$\sfw$ on sequences in $\Sigma^T$. By definition of the $n$-gram
model, for any $\bx \in \Sigma^T$, $\sfq_{\sfw}[\bx]$ is given by the
following:
\begin{equation*}
\sfq_{\sfw}[\bx] = \prod_{t = 1}^T \sfw \big[\bx[t] \big| \bx_{\max(t - n +
  1, 1)}^{t - 1} \big],
\end{equation*}
since the weights of sequences of any fixed
length sum to one in an $n$-gram model. Problem~\ref{eq:minrenyingram} is a convex
optimization problem over $\cW_n$. The problem can be solved
using as an an extension of the Exponentiated Gradient (EG) algorithm
of \cite{KivinenWarmuth1997}, which we will refer to as
\textsc{Prod-EG}. The pseudocode of \textsc{Prod-EG}, a general convergence
guarantee, and its convergence guarantee in
the specific case of $n$-gram models are given in detail in
Appendix~\ref{app:Prod-EG} as Algorithm~\ref{alg:prodeg}, Theorem~\ref{th:prodeg},
and Corollary~\ref{cor:prodegngram} respectively.

{\bf Model selection}. In practice, we seek an $n$-gram model that
balances the tradeoff between approximation error and computational
cost. Assume that we are given a maximum per-iteration computational
budget $B$. We therefore wish to determine an $n$-gram approximation
model affordable within our budget and with the most favorable regret
guarantee.  Let $F(\sfq, \sfq_\sfw)$ denote the objective function of
Problem~\eqref{eq:minrenyingram}:
$F(\sfq, \sfq_\sfw) = D_\infty(\sfq \| \sfq_{\sfw})$.  By the
convergence guarantee of Corollary~\ref{cor:prodegngram}, if
$\sfq_\sfw$ is the $n$-gram model returned by \textsc{Prod-EG} after
$\tau$ iterations, we can write
$F(\sfq, \sfq_\sfw) - F(\sfq, \sfq_{\sfw^*}) \leq \Delta(\tau, n)$,
where $\sfw^*$ is the $n$-gram model minimizing
Problem~\eqref{eq:minrenyingram} over $\cW_n$ and $\Delta(\tau, n)$
the upper bound given by Corollary~\ref{cor:prodegngram}.  Thus, if
$F(\sfq, \sfq_\sfw) - \Delta(\tau, n) > \sqrt{T}$ for some $n$, then,
even the optimal $n$-gram model for this $n$ will cause an increase in
the regret.

Let $n^*$ be the smallest $n$ such that
$F(\sfq, \sfq_\sfw) - \Delta(\tau, n) \leq \sqrt{T}$ (or the smallest
value that exceeds our budget).  We can find this value in $\log(n^*)$
time using a two-stage process. In the first stage, we double $n$
after every violation until we find an upper bound on $n^*$, which we
denote by $n_{\text{max}}$. In the second stage, we perform a binary
search within $[1, n_{\text{max}}]$ to determine $n^*$. Each stage
takes $\log(n^*)$ iterations, and each iteration is the cost of
running \textsc{Prod-EG} for that specific value of $n$.  Thus, the
overall complexity of the algorithm is
$\cO\left(\log(n^*) \, \text{Cost}(\textsc{Prod-EG})\right)$, where
$\text{Cost}(\textsc{Prod-EG})$ is the cost of a call to
\textsc{Prod-EG}. The full pseudocode of this algorithm, 
\textsc{$n$-GramModelSelect}, is presented as Algorithm~\ref{alg:ngramselect},
where $\sfu_n$ denotes the
uniform $n$-gram model and $\textsc{Prod-EG-Update}(\sfq_\sfw, \cW_n)$
denotes one update made by \textsc{Prod-EG} when optimizing over
$\cW_n$.

\begin{algorithm2e}[t]
  \TitleOfAlgo{\textsc{$n$-GramModelSelect}($\sfq$, $\tau$, $B$)}
  $n \gets 1$;
  $\sfq_\sfw \gets \sfq_{\sfu_n}$;
  $s \gets 0$ \\
  \While{$s \leq \tau$}{
    $\sfq_\sfw \gets \textsc{Prod-EG-Update}(\sfq_\sfw, \cW_n)$  \\
    $s \gets s + 1$ \\
    \If{$F(\sfq, \sfq_\sfw) - \Delta(s, n) > \sqrt{T}$ and $|\Sigma|^n \leq B$}{
      $n \gets 2n$;
      $s \gets 0$;
      $\sfq_\sfw \gets \sfq_{\sfu_n}$
    }
  }
  $n_{\max} \gets n$.\\
  $\sfq_\sfw \gets \textsc{BinarySearch}([1, n_\text{max}], F(\sfq, \sfq_\sfw) -
  \Delta(\tau, n) \leq \sqrt{T})$\\
  \RETURN{$\sfq_\sfw$}
\caption{\textsc{$n$-GramModelSelect}.}
\label{alg:ngramselect}
\end{algorithm2e}

In the simple case of a unigram automaton model over two symbols and when the
distribution $\sfq$ defined by the intersection WFA $\sC_T$ is uniform,
we can give an explicit form of the solution of Problem~\ref{eq:minrenyingram}.
The solution is obtained from the paths with the
smallest number of occurrences of each symbol, which can be
straightforwardly found via a shortest-path algorithm in linear time.

\begin{theorem}
\label{th:inftyrd1gram}
Assume that $\sC_T$ admits uniform weights over all paths and
$\Sigma = \set{a_1, a_2}$.  For $j \in \set{1, 2}$, let $n(a_j)$ be the smallest
number of occurrences of $a_j$ in a path of $\sC_T$.  For any $j \in
\set{1, 2}$, define
\begin{equation*}
\sfq[a_j] = \frac{\max\left\{1, \frac{n(a_j)}{T -
      n(a_j)}\right\}}{1+\max\left\{1, \frac{n(a_j)}{T -
      n(a_j)}\right\}}.
\end{equation*}

Then, the unigram model $\sfw \in \cW_1$ solution of
$\infty$-R\'{e}nyi divergence optimization problem is defined by
$\sfw[a_{j^*}] = \sfq[a_{j*}]$, $\sfw[a_{j'}] = 1 - \sfw[a_{j^*}]$,
with
$j^* = \argmax_{j \in \set{1, 2}} \ n(a_j) \log \sfq[a_j] + \left[T -
  n(a_j) \right] \log\big[ 1 - \sfq[a_j] \big]$.
\end{theorem}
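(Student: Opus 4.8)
The plan is to reduce the $\infty$-R\'enyi divergence minimization to a one-dimensional problem and then solve it explicitly. Since $\sfq$ is uniform on the $K$ accepting paths of $\sC_T$, we have $\sfq[\bx] = 1/K$ for every accepted $\bx$, so
\[
D_\infty(\sfq \| \sfq_{\sfw}) = \sup_{\sC_T(\bx) > 0} \log\frac{\sfq[\bx]}{\sfq_{\sfw}[\bx]} = -\log K - \inf_{\sC_T(\bx) > 0} \log \sfq_{\sfw}[\bx].
\]
For a unigram model with $\sfw[a_1] = p$ and $\sfw[a_2] = 1 - p$, and for a path $\bx \in \Sigma^T$ containing $m$ occurrences of $a_1$ (hence $T - m$ of $a_2$), we have $\log \sfq_{\sfw}[\bx] = m \log p + (T - m)\log(1 - p)$, which depends on $\bx$ only through $m$. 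So minimizing $D_\infty$ over $\sfw$ is equivalent to
\[
\max_{p \in [0,1]} \; \min_{m \in M} \; \big[\, m \log p + (T - m)\log(1 - p) \,\big],
\]
where $M \subseteq \{0, 1, \dots, T\}$ is the set of counts of $a_1$ realized by paths of $\sC_T$.

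The first key step is to argue that the inner minimum over $m \in M$ is attained at an endpoint of $M$, and more precisely that only $m_{\min} = n(a_1)$ and $m_{\max} = T - n(a_2)$ matter. The function $m \mapsto m\log p + (T-m)\log(1-p)$ is affine (hence monotone) in $m$: it is decreasing in $m$ when $p < 1/2$ and increasing when $p > 1/2$. Therefore $\min_{m \in M}$ equals the value at $m = m_{\max}$ when $p < 1/2$ and the value at $m = m_{\min}$ when $p > 1/2$; at $p = 1/2$ the value is $-T\log 2$ regardless. So the outer problem splits into two univariate concave maximizations: maximize $g_1(p) = m_{\min}\log p + (T - m_{\min})\log(1-p)$ over $p \ge 1/2$, and maximize $g_2(p) = m_{\max}\log p + (T - m_{\max})\log(1-p)$ over $p \le 1/2$. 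The unconstrained maximizer of $m\log p + (T-m)\log(1-p)$ is the empirical frequency $p = m/T$ (this is just the ML estimate / a one-line derivative computation), and the constrained maximizer over a half-line is this value clipped to the boundary $1/2$ when it falls outside. A short unit analysis shows that the constraint $p \ge 1/2$ is active for $g_1$ exactly when $m_{\min} < T/2$, in which case the optimal $p = 1/2$; otherwise $p = m_{\min}/T$. This is exactly what the $\max\{1, n(a_j)/(T - n(a_j))\}$ expression encodes, since writing $r = \max\{1, n(a_j)/(T - n(a_j))\}$ gives $\sfq[a_j] = r/(1+r) \ge 1/2$, equal to $1/2$ precisely when $n(a_j) < T - n(a_j)$ and equal to $n(a_j)/T$ otherwise. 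One then observes $m_{\max} = T - n(a_2)$, so the mirror case $g_2$ with constraint $p \le 1/2$ yields optimal $p = \sfq[a_2]$ reflected, i.e.\ $1 - \sfq[a_2]$, written in terms of the count $n(a_2)$; and the two candidate solutions are $\sfw[a_1] = \sfq[a_1]$ (with $\sfw[a_2] = 1 - \sfq[a_1]$) and $\sfw[a_1] = 1 - \sfq[a_2]$ (with $\sfw[a_2] = \sfq[a_2]$). The final step is to compare the two candidates' objective values: the better one is the one achieving the larger value of the inner minimum, and since candidate $j$ achieves inner value $n(a_j)\log\sfq[a_j] + (T - n(a_j))\log(1 - \sfq[a_j])$ by construction, we pick $j^* = \argmax_j$ of that quantity, which is precisely the stated formula.

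The main obstacle I anticipate is the bookkeeping around which count endpoint governs the inner minimum and getting the boundary/clipping case right --- in particular verifying that a path attaining $n(a_1)$ occurrences of $a_1$ and a path attaining $n(a_2)$ occurrences of $a_2$ are the only ones that can be binding, and handling the edge case where these coincide or where $n(a_j) \in \{0, T\}$ cleanly. One should also double-check that we are genuinely solving $\min_{\sfw} \sup_{\bx}$ and not accidentally swapping the order: here it is fine because the inner $\sup$ is taken first for each fixed $\sfw$, and the reduction to the count $m$ is exact, so the minimax is just a nested pair of scalar optimizations with no duality gap to worry about. Convexity of the overall program (already noted in the text) guarantees the stationary/clipped point we find is the global optimum, so no separate verification of optimality beyond first-order conditions is needed.
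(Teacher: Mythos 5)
Your proposal is correct and follows essentially the same route as the paper's proof: reduce to the scalar minimax $\max_p \min_m\, m\log p + (T-m)\log(1-p)$ using uniformity of $\sfq$, use monotonicity (affineness) in the count $m$ to identify the binding path as the one with the minimal count of whichever symbol has probability at least $\tfrac12$, solve each of the two resulting concave one-dimensional problems via the first-order condition clipped at $p=\tfrac12$ (which is exactly what the $\max\{1, n(a_j)/(T-n(a_j))\}$ expression encodes), and compare the two candidates' objective values to get $j^*$. Your treatment of the endpoint/clipping bookkeeping is if anything slightly more explicit than the paper's, but there is no substantive difference in approach.
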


The proof of this result is provided in Appendix~\ref{app:unigram}.

Theorem~\ref{th:inftyrd1gram} shows that the solutions of the
$\infty$-R\'enyi divergence optimization are based on the $n$-gram
counts of sequences in $\sC_T$ with ``high entropy''. This can be
very different from the maximum likelihood solutions, which are based
on the average $n$-gram counts.  For instance, suppose we are under
the assumptions of Theorem~\ref{th:inftyrd1gram}, and specifically,
assume that there are $T$ sequences in $\sC_T$. Assume that one of
the sequences has $\left(\frac{1}{2} + \gamma\right)T$ occurrences of
$a_1$ for some small $\gamma > 0$ and that the other $T-1$ sequences
have $T-1$ occurrences of $a_1$. Then,
$n(a_1) = \left(\frac{1}{2} + \gamma\right)T$, and the solution
of the $\infty$-R\'enyi divergence optimization problem is given by
$\sfq_\infty(a_1) = \frac{1 + 2\gamma}{2}$ and
$\sfq_\infty(a_2) = \frac{1 - 2\gamma}{2}$. On the other hand,
the maximum-likelihood solution would be
$\sfq_1(a_1) = 1 + \frac{\gamma}{T} - \frac{3}{2T} + \frac{1}{T^2} \approx 1$
and $\sfq_1(a_2) = \frac{3}{2T} - \frac{\gamma}{T} - \frac{1}{T^2} \approx 0$
for large $T$.

\subsection{Maximum-Likelihood $n$-gram models}
\label{subsec:mlapprox}

A standard method for learning $n$-gram models is via
Maximum-Likelihood, which is equivalent to minimizing the
relatively entropy between the target distribution $\sfq$ and the
language model, that is via
\begin{align}
  \label{eq:mlngram}
\min_{\sfw \in \cW_n} D(\sfq \| \sfq_{\sfw}),
\end{align}
where, $D(\sfq \| \sfq_{\sfw})$ denotes the relative entropy,
$D(\sfq \| \sfq_{\sfw}) = \sum_\bx \sfq[\bx]
\log\Big[\frac{\sfq[\bx]}{\sfq_\sfw[\bx]} \Big]$.  Maximum likelihood
$n$-gram solutions are simple. For standard text data, the weight of each
transition is the frequency of appearance of the corresponding
$n$-gram in the text. For a probabilistic $\sC_T$, the weight can be
similarly obtained from the expected count of the $n$-gram in the
paths of $\sC_T$, where the expectation is taken over the probability
distribution defined by $\sC_T$ and can be computed efficiently
\citep{AllauzenMohriRoark2003}. In general, the solution of this
optimization problem does not benefit from the guarantee of
Theorem~\ref{th:WFAapprox} since the $\infty$-R\'{e}nyi divergence is
an upper bound on the relative entropy.\ignore{\footnote{The relative entropy
  also coincides with the R\'{e}nyi divergence of order $1$ and the
  R\'{e}nyi divergence is non-decreasing function of the order.}}
However, in some cases, maximum likelihood solutions do benefit from
favorable regret guarantees.  In particular, as shown by the following
theorem, remarkably, the maximum-likelihood bigram approximation to
the $k$-shifting automaton coincides with the \textsc{Fixed-Share}
algorithm of \cite{HerbsterWarmuth1998} and benefits from a constant
approximation error. Thus, we can view and motivate the design of the
\textsc{Fixed-Share} algorithm as that of a bigram approximation of
the desired competitor automaton, which represents the family of
$k$-shifting sequences.

\begin{theorem}
\label{th:bigramkshift}
Let $\sC_T$ be the $k$-shifting automaton for some $k$.  Then, the
bigram model $\sfw_2$ obtained by minimizing relative entropy is
defined for all $a_1, a_2 \in \Sigma$ by
\begin{align*}
\sfp_{\sfw_2}[a_1a_2]
\! = \! \frac{\big[ 1 - \frac{k}{(T - 1)}\big] \, 1_{a_1  = a_2} 
+ \big[ \frac{k}{(T - 1)(N - 1)} \big] \, 1_{a_1 \neq a_2}}{N} .
\end{align*}
Moreover, its approximation error can be bounded by a constant
(independent of $T$):
\begin{equation*}
    D_\infty(\sfq \| \sfq_{\sfw_2}) \leq  - \log \big[ 1 - 2e^{-\frac{1}{12k}} \big].
\end{equation*}
\end{theorem}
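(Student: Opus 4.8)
The plan is to split the theorem into two parts: the closed-form expression for the ML bigram model, and the bound on the $\infty$-R\'enyi divergence.

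For the first part, recall from Section~\ref{subsec:mlapprox} that minimizing relative entropy $D(\sfq \| \sfq_{\sfw})$ over bigram models $\sfw \in \cW_2$ is equivalent to maximum likelihood estimation, whose solution assigns to each bigram transition its (expected) relative frequency in the paths of $\sC_T$. So I would first count, for each ordered pair $(a_1,a_2) \in \Sigma^2$, the number of length-$T$ $k$-shifting sequences (equivalently, accepting paths of $\sC_T$, which all have weight one) in which position $t$ carries symbol $a_1$ and position $t+1$ carries $a_2$, summed over $t \in [T-1]$, and divide by the total number of (path, position)-pairs. A $k$-shifting sequence is specified by choosing at most $k$ of the $T-1$ adjacent-pair boundaries to be ``shifts'' and a symbol for each of the resulting at most $k+1$ blocks; by symmetry over $\Sigma$, the fraction of (path, position) pairs with $a_1 = a_2$ is exactly the expected fraction of non-shift boundaries, which a short counting argument shows equals $1 - \tfrac{k}{T-1}$, and the remaining mass $\tfrac{k}{T-1}$ is split uniformly over the $N-1$ off-diagonal symbols $a_2 \neq a_1$; the leading factor $\tfrac1N$ is the (uniform, again by symmetry) marginal probability of $a_1$ at a random position. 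This yields exactly the stated formula for $\sfp_{\sfw_2}[a_1 a_2]$. I should double-check that the boundary positions $t=1$ and within-block positions are handled uniformly — the symmetry of the $k$-shift automaton over $\Sigma$ makes the marginal at every position uniform, so this is clean. One subtlety: the initial-state unigram transition weights and the uniform initial distribution must also come out consistent; I would verify that summing the stated $\sfp_{\sfw_2}[a_1 a_2]$ over $a_2$ gives $\tfrac1N$ for every $a_1$, confirming the conditional $\sfw_2[a_2\mid a_1]$ is a valid distribution and that the induced length-$T$ distribution $\sfq_{\sfw_2}$ is as claimed.

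For the second part, I need to bound $D_\infty(\sfq \| \sfq_{\sfw_2}) = \sup_{\sfq(\bx)>0} \log\big[\sfq[\bx] / \sfq_{\sfw_2}[\bx]\big]$. Here $\sfq$ is uniform on the $K$ many $k$-shifting sequences, so $\sfq[\bx] = 1/K$, and $\sfq_{\sfw_2}[\bx] = \prod_t \sfw_2[\bx[t]\mid\bx[t-1]]$ factors as $\big(\tfrac1N\big)\big(1-\tfrac{k}{T-1}\big)^{T-1-s(\bx)}\big(\tfrac{k}{(T-1)(N-1)}\big)^{s(\bx)}$ where $s(\bx) \le k$ is the number of shifts in $\bx$. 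So I must upper bound $\log\big[1/(K\,\sfq_{\sfw_2}[\bx])\big]$ over all valid $\bx$; the worst case is the $\bx$ with smallest $\sfq_{\sfw_2}$-probability, and I'd combine this with a lower bound on $K$. Writing everything out, the ratio becomes a function of $s = s(\bx) \in \{0,1,\dots,k\}$ and one reduces to showing the maximum over $s$ of the resulting expression is at most $-\log[1 - 2e^{-1/(12k)}]$. The key inputs are: a clean lower bound on $K$ (the number of $k$-shift sequences, at least $\binom{T-1}{k}N(N-1)^k$ from using exactly $k$ shifts), and a careful estimate of $\binom{T-1}{k}\big(1-\tfrac{k}{T-1}\big)^{T-1-k}\big(\tfrac{k}{T-1}\big)^{k}$ — this is essentially a binomial point-mass $\Pr[\mathrm{Bin}(T-1, k/(T-1)) = k]$, which by Stirling-type bounds is at least something like $\tfrac{1}{\sqrt{2\pi k}}e^{-1/(12k)}$ or a comparable constant; the $e^{-1/(12k)}$ factor in the final bound is the fingerprint of the Stirling remainder term $1/(12k)$.

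The main obstacle I anticipate is the second part: getting the constant exactly right. The factorization of $\sfq_{\sfw_2}[\bx]$ and the count of $K$ are routine, but the $\infty$-divergence forces a worst-case (not average) analysis, so I need a lower bound on the binomial point mass $\Pr[\mathrm{Bin}(T-1,p)=k]$ with $p = k/(T-1)$ that holds uniformly in $T$ and produces precisely the $1-2e^{-1/(12k)}$ form after taking $-\log$ — this means tracking the Stirling remainder carefully rather than using a crude $\Theta(1/\sqrt k)$ bound, and being careful about whether the worst case over $s$ is at $s=0$ or $s=k$ (using fewer shifts makes $\sfq_{\sfw_2}[\bx]$ larger via the $(1-k/(T-1))$ factors but there are correspondingly more such sequences, so the extremal $\bx$ should be one with $s=k$ shifts, which is where the bound is tightest). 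I'd also need to confirm the bound is vacuous-free, i.e.\ that $2e^{-1/(12k)} < 1$, which fails for small $k$ — so likely the statement implicitly assumes $k$ large enough (or the genuine argument yields a slightly different but always-finite constant that is then weakened to this clean form for the regime of interest).
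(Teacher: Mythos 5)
Your derivation of the bigram weights follows the same symmetry/counting argument as the paper, but for the divergence bound you take a genuinely different route, and you have misidentified the mechanism behind the constant. The paper does \emph{not} bound a binomial point mass by Stirling. Its key structural observation is that $\sfq_{\sfw_2}$ is uniform on the support of $\sC_T$ (every accepted sequence has the same diagonal/off-diagonal bigram counts), so by decomposing $\sfq_{\sfw_2}[\bx]$ over the event $\{\bz \in \sC_T\}$ it obtains
\[
\log\frac{\sfq[\bx]}{\sfq_{\sfw_2}[\bx]} \;\leq\; \log\frac{1}{\sfq_{\sfw_2}[\bz\in\sC_T]},
\]
reducing everything to a lower bound on the total mass the bigram model places on the support. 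That mass is $\Pr\big[\sum_{t=2}^{T}\xi_t = k\big]$ for i.i.d.\ Bernoulli$(k/(T-1))$ shift indicators, which the paper bounds as $1-\Pr\big[|\sum_t\xi_t-k|>\tfrac12\big]$ via Sanov's theorem; the $\frac{1}{12k}$ arises as a lower bound on the Kullback--Leibler terms $D\big(\tfrac{k\pm 1/2}{T-1}\,\big\|\,\tfrac{k}{T-1}\big)$ scaled by $T-1$, not as a Stirling remainder. Your direct route --- factorize $\sfq_{\sfw_2}[\bx]$, count $K$, and note that $1/(K\,\sfq_{\sfw_2}[\bx])$ is exactly the reciprocal of that same binomial point mass --- arrives at the identical quantity and is therefore sound; moreover your worries about the worst case over $s$ and about lower-bounding $K$ evaporate under the interpretation the paper's proof actually uses (all accepted sequences have exactly $k$ shifts, hence equal $\sfq_{\sfw_2}$-mass and $K=\binom{T-1}{k}N(N-1)^k$ exactly). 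What your route buys is a more transparent and honest estimate: a Stirling bound gives $-\log\Pr[\mathrm{Bin}(T-1,\tfrac{k}{T-1})=k]=\Theta(\log k)$, a perfectly good $T$-independent constant, but it will never produce the form $-\log[1-2e^{-1/(12k)}]$, which only comes from the complement-probability argument. Finally, your vacuousness concern is well taken but points the wrong way: $2e^{-1/(12k)}\geq 1$ for \emph{every} $k\geq 1$ (it tends to $2$ as $k$ grows), so the stated constant has a negative argument inside the logarithm for all $k$; what the paper's computation actually supports is $\sfq_{\sfw_2}[\bz\notin\sC_T]\leq 2e^{-(T-1)/(12k)}$, which is meaningful once $T-1>12k\log 2$, and your $\Theta(\log k)$ point-mass bound is arguably the cleaner fix.
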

The proof of the theorem as well as other details about
Maximum-Likelihood are given in Appendix~\ref{app:mlapprox}.
The proof technique is illustrative
because it reveals that the maximum likelihood $n$-gram model has low
approximation error whenever (1) the model's distribution is
proportional to the distribution of $\sC_T$ on $\sC_T$'s support and
(2) most of the model's mass lies on the support of $\sC_T$. When the
automaton $\sC_T$ has uniform weights, then condition (1) is satisfied
when the $n$-gram model is uniform on $\sC_T$. This is true whenever
all sequences in $\sC_T$ have the same set of $n$-gram counts, and
every permutation of symbols over these counts is a sequence that lies
in $\sC_T$, which is the case for the $k$-shifting
automaton. Condition (2) is satisfied when $n$ is large enough, which
necessarily exists since the distribution is exact for $n = T$. On the
other hand, note that a unigram approximation would have satisfied
condition (1) but not condition (2) for the $k$-shifting automaton.

To the best of our knowledge, this is the first framework that
motivates the design of \textsc{Fixed-Share} with a focus on
minimizing tracking regret. Other works that have recovered
\textsc{Fixed-Share} (e.g. \citep{CesaBianchiGaillardLugosiStoltz2012,
  KoolenDeRooij2013, GyorgySzepesvari2016}) have generally viewed the
algorithm itself as the main focus.

Our derivation of \textsc{Fixed-Share} also allows us to naturally
generalize the setting of standard $k$-shifting experts to
$k$-shifting experts with non-uniform weights. Specifically, consider
the case where $\sC_T$ is an automaton accepting up to $k$-shifts but
where the shifts now occur with probability
$\sfq[a_2 | a_1, a_1 \neq a_2] \neq \frac{1}{N-1} 1_{\{a_2 \neq
  a_1\}}$.  Since the bigram approximation will remain exact on
$\sC_T$, we recover the exact same guarantee as in
Theorem~\ref{th:bigramkshift}.

Maximum likelihood $n$-gram models can further benefit from our use of
failure transitions and the $\phi$-conversion algorithm presented in
Appendix~\ref{app:phiaut}. This can reduce the size of the automaton
and often dramatically improve its computational efficiency without
affecting its accuracy.

\section{Extension to sleeping experts}
\label{sec:sleep}

In many real-world applications, it may be natural for some experts to
abstain from making predictions on some of the rounds. For instance,
in a bag-of-words model for document classification, the presence of a
feature or subset of features in a document can be interpreted as an
expert that is awake. This extension of standard prediction with
expert advice is also known as the \emph{sleeping experts framework}
\citep{FreundSchapireSingerWarmuth1997}.  The experts are said to be
asleep when they are inactive and awake when they are active and available
to be selected. This framework is distinct from the
permutation-based definitions adopted in the studies in \citep{KleinbergNiculescuMizilSharma2010, KanadeMcMahanBryan2009, KanadeSteinke2014}.  

Formally, at each round $t$, the adversary chooses an awake set
$A_t \subseteq \Sigma$ from which the learner is allowed to query an
expert.  The algorithm then (randomly) chooses an expert $i_t$ from
$A_t$, receives a loss vector $l_t \in [0,1]^{|\Sigma|}$ supported on
$A_t$ and incurs loss $l_t[i_t]$. Since some experts may not be
available in some rounds, it is not reasonable to compare the loss
against that of the best static expert or sequence of experts. In
\citep{FreundSchapireSingerWarmuth1997}, the comparison is made
against the best fixed mixture of experts normalized at each round
over the awake set:
$\min_{\sfu \in \Delta_N}\sum_{t = 1}^T \frac{\sum_{a \in A_t} \sfu[a] l_t[a]
}{\sum_{a' \in A_t} \sfu[a']}$, where $\Delta_N$ is the $(N-1)$-dimensional 
simplex.

We extend the notion of sleeping experts to the path setting, so that
instead of comparing against fixed mixtures over experts, we compare
against fixed mixtures over the family of expert sequences. With some
abuse of notation, let $A_t$ also represent the automaton accepting
all paths of length $T$ whose $t$-th transition has label in $A_t$.
Thus, we want to design an algorithm that performs well with respect
to the following quantity:
$$\min_{\sfu \in \Delta_K}\sum_{t = 1}^T \frac{\sum_{\bx \in \sC_T \cap A_t} \sfu[\bx] l_t[\bx[t]] }{\sum_{\bx \in \sC_T \cap A_t} \sfu[\bx]},$$
where $K$ is the number of accepting paths of $\sC_T$.

This motivates the design of \textsc{AwakeAWM}, a 
path-based weighted majority algorithm that generalizes the algorithms
in \citep{FreundSchapireSingerWarmuth1997} to arbitrary families of
expert sequences.  Like \AWM, \textsc{AwakeAWM} maintains a set of
weights over all the paths in the input automaton. At each round $t$,
the algorithm performs a weighted majority-type update. However, it
normalizes the weights so that the total weight of the awake set
remains unchanged. This prevents the algorithm from ``overfitting'' to
experts that have been asleep for many rounds.  The pseudocode of this
algorithm and the proof of its accompanying guarantee,
Theorem~\ref{th:awakeawm}, 
are provided in Appendix~\ref{app:sleep}.

\begin{theorem}[Regret Bound for \textsc{AwakeAWM}]
  \label{th:awakeawm}
  Let $K$ denote the number of accepting paths of $\sC_T = \sC \cap \sS_T$,
  and for each $t \in [T]$, let $A_t\subseteq \Sigma$ denote the set of experts 
  that are awake at time $t$.
  Then for any distribution $\sfu\in \Delta_K$, \textsc{AwakeAWM} admits
  the following unweighted regret guarantee:
  \begin{align*}
    &\sum_{t = 1}^T \sum_{\bx \in \sC_T \cap A_t} \sfu[\bx] \E_{a \sim \sfp_t^{A_t}} [l_t[a]] - 
    \sum_{t=1}^T \sum_{\bx \in \sC_T \cap A_t} \sfu[\bx] l_t[\bx[t]] \\
    &\leq \frac{\eta}{8} \sum_{t = 1}^T \sfu(A_t) + \frac{1}{\eta} \log(K). 
  \end{align*}
\end{theorem}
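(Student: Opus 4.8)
The plan is to run the standard weighted-majority potential argument, but at the level of the $K$ paths of $\sC_T$ rather than individual experts, and with the sleeping-experts renormalization of \citep{FreundSchapireSingerWarmuth1997} applied once per round. Write $w_t[\bx]$ for the weight \textsc{AwakeAWM} assigns to the path $\bx$ just before round $t$, initialized to $w_1[\bx] = 1/K$, and let $W_t^{A_t} = \sum_{\bx \in \sC_T \cap A_t} w_t[\bx]$ be the total weight of the awake paths. The distribution $\sfp_t^{A_t}$ played on round $t$ is the pushforward onto $\Sigma$ of the awake weights renormalized to a probability distribution, so that $\E_{a \sim \sfp_t^{A_t}}[l_t[a]] = \sum_{\bx \in \sC_T \cap A_t} \frac{w_t[\bx]}{W_t^{A_t}}\, l_t[\bx[t]]$. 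The update multiplies the weight of each awake path by $e^{-\eta l_t[\bx[t]]}$, leaves asleep paths untouched, and then rescales the awake block back to total weight $W_t^{A_t}$; in particular the grand total $\sum_\bx w_t[\bx] = 1$ is preserved for all $t$.

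First I would prove a one-round ``mix loss'' bound. Set $Z_t = \sum_{\bx \in \sC_T \cap A_t} \frac{w_t[\bx]}{W_t^{A_t}} e^{-\eta l_t[\bx[t]]}$ and $m_t = -\tfrac1\eta \log Z_t$. Since $l_t[\bx[t]] \in [0,1]$, Hoeffding's lemma applied to the normalized awake-weight distribution gives $\E_{a \sim \sfp_t^{A_t}}[l_t[a]] \le \frac{\eta}{8} + m_t$. Moreover, unwinding the update together with the rescaling shows that $w_{t+1}[\bx] = w_t[\bx]\, e^{-\eta(\widehat\ell_t[\bx] - m_t)}$ for every path, where $\widehat\ell_t[\bx] = l_t[\bx[t]]$ if $\bx$ is awake at $t$ and $\widehat\ell_t[\bx] = m_t$ if $\bx$ is asleep at $t$ (so the exponent is $0$ on asleep rounds, matching the identity update there). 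Telescoping over $t = 1,\dots,T$ then yields, for each path $\bx$,
\[
  w_{T+1}[\bx] \;=\; \frac{1}{K}\,\exp\!\Big(-\eta \sum_{t \,:\, \bx[t] \in A_t}\big(l_t[\bx[t]] - m_t\big)\Big),
\]
since the $m_t$-terms from the rounds on which $\bx$ is asleep contribute nothing.

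Next I would pass to the comparator. Because $\sum_\bx w_{T+1}[\bx] = 1$ we have $w_{T+1}[\bx] \le 1$, hence $\log w_{T+1}[\bx] \le 0$; taking the expectation over $\bx \sim \sfu$ of the telescoped identity, rearranging, and using $\Pr_{\bx \sim \sfu}[\bx[t] \in A_t] = \sum_{\bx \in \sC_T \cap A_t}\sfu[\bx] = \sfu(A_t)$ gives
\[
  \sum_{t=1}^T \sfu(A_t)\, m_t \;\le\; \sum_{t=1}^T \sum_{\bx \in \sC_T \cap A_t} \sfu[\bx]\, l_t[\bx[t]] \;+\; \frac{\log K}{\eta}.
\]
Finally, multiplying the one-round bound $\E_{a\sim\sfp_t^{A_t}}[l_t[a]] \le \frac{\eta}{8} + m_t$ by $\sfu(A_t) \ge 0$, summing over $t$, substituting the previous inequality to eliminate $\sum_t \sfu(A_t) m_t$, and noting that $\sum_{\bx \in \sC_T \cap A_t}\sfu[\bx]\,\E_{a\sim\sfp_t^{A_t}}[l_t[a]] = \sfu(A_t)\,\E_{a\sim\sfp_t^{A_t}}[l_t[a]]$ yields exactly the claimed bound.

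The obstacles here are bookkeeping rather than conceptual: one must verify that the per-round rescaling leaves the grand total weight invariant (so that $w_{T+1}[\bx]\le 1$ is available), and that the ``phantom loss'' $m_t$ assigned to asleep paths is precisely what collapses the telescoped product to a sum over the awake rounds of $\bx$ only. I expect the main difficulty to be stating the update precisely enough that these two cancellations are transparent; once it is written in the form $w_{t+1}[\bx] = w_t[\bx]\,e^{-\eta(\widehat\ell_t[\bx] - m_t)}$, the remaining steps are routine.
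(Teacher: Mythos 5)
Your proposal is correct and is essentially the paper's own argument: both rest on the per-round Hoeffding bound over the renormalized awake paths and on the observation that asleep paths contribute nothing to the change in the potential, with the awake-mass-preserving rescaling making the telescoping work and $D(\sfu\|\sfq_1)\le\log K$ (equivalently your $w_1[\bx]=1/K$) supplying the $\frac{1}{\eta}\log K$ term. The only difference is presentational: the paper telescopes the relative entropy $D(\sfu\|\sfq_t)-D(\sfu\|\sfq_{t+1})$ directly, whereas you telescope $\log w_{T+1}[\bx]$ per path via the mix loss $m_t$ and take the $\sfu$-expectation at the end; these are the same computation.
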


As with \textsc{AWM}, \textsc{AwakeAWM} is an efficient algorithm
with a total computational cost that is linear in the number of transitions 
of $\sA$ (or equivalently, $\sC_T$). 
Moreover, as in the non-sleeping expert setting, we can further improve the
computational  complexity by applying $\phi$-conversion to arrive at a 
or $n$-gram approximation and then $\phi$-conversion.
All other improvements in the sleeping expert setting will similarly mirror
those for the non-sleeping expert algorithms.

\section{Conclusion}
\label{sec:conclusion}

We studied a general framework of online learning against a competitor
class represented by a WFA and presented a number of algorithmic
solutions for this problem achieving sublinear regret guarantees using
automata approximation and failure transitions.  We also extended our
algorithms and results to the sleeping experts framework
(Section~\ref{sec:sleep}).\ignore{and to the online convex optimization
setting (Appendix~\ref{app:oco}).}  Our results can be
straightforwardly extended to the adversarial bandit scenario using
standard surrogate losses based on importance weighting techniques and
to the case where more complex formal language families such as
(probabilistic) context-free languages over expert sequences are
considered.

\ignore{
We gave a series of algorithms for this problem, including an
automata-based algorithm extending weighted-majority whose
computational cost at round $t$ depends on the total number of
transitions leaving the states of the competitor automaton reachable
at time $t$, which substantially improves upon a na\"ive algorithm
based on path updates.  We used the notion of failure transitions to
provide a compact representation of the competitor automaton or its
intersection with the set of strings of length $t$, thereby resulting
in significant efficiency improvements. This required the
introduction of new failure-transition-based composition and
shortest-distance algorithms that could be of independent interest.

We further gave an extensive study of algorithms based on a compact
approximation of the competitor automata.  We showed that the key
quantity arising when using an approximate weighted automaton is the
R\'enyi divergence of the original and approximate automata. We
presented a specific study of approximations based on $n$-gram models
by minimizing the R\'enyi divergence and studied the properties of
maximum likelihood $n$-gram models. We pointed out the efficiency
benefits of such approximations and provides guarantees on the
approximations and the regret.  We also extended our algorithms and
results to the framework of sleeping experts.  We further described the
extension of the approximation methods to online convex optimization
and a general mirror descent setting.

Our description of this general (weighted) regret minimization
framework and the design of algorithms based on automata provides a
unifying view of many similar problems and leads to general
algorithmic solutions applicable to a wide variety of problems with
different competitor class automata.  In general, automata 
lead to a more general and cleaner analysis.  
An alternative approximation method consists of directly
minimizing the competitor class automaton before intersection
with the set of strings of length $t$. We have also studied
that method, presented guarantees for its success, and illustrated 
the approach in a special case.

Note that, instead of automata and regular languages, we could have
considered more complex formal language families such as
(probabilistic) context-free languages over expert sequences.
However, more complex languages can be handled in a similar way since
the intersection with $\sS_T$ would be a finite language.  The method
based on a direct approximation would require approximating a
probabilistic context-free language using weighted automata, a problem
that has been extensively studied in the past
\citep{PereiraWright1991, Nederhof2000, MohriNederhof2001}.

}

% \subsubsection*{Acknowledgements}
% 
% Use unnumbered third level headings for the acknowledgements.  All
% acknowledgements go at the end of the paper.

\newpage
\bibliographystyle{abbrvnat} 
\bibliography{ola}
\newpage
\appendix
\onecolumn

\section{Intersection of WFAs}
\label{app:intersection}

The intersection of two WFAs $\sA_1$ and $\sA_2$ is a WFA denoted by
$\sA_1 \cap \sA_2$ that accepts the set of sequences accepted by both
$\sA_1$ and $\sA_2$ and is defined for all $\bx$ by
\begin{equation*}
(\sA_1 \cap \sA_2)(\bx) = \sA_1(\bx) \, \sA_2(\bx).
\end{equation*}
There exists a standard efficient algorithm for computing the
intersection WFA \citep{Mohri2009}. States of $\sA_1 \cap \sA_2$ are
identified with pairs of states $Q_1$ of $\sA_1$ and $Q_2$ of $\sA_2$:
$Q \subseteq Q_1 \times Q_2$, as are the set of initial and final
states. Transitions are obtained by matching pairs of transitions from
each weighted automaton and multiplying their weights following the
rule
\begin{equation*}
\Big(q_1 \stackrel{a/w_1}{\longrightarrow} q_1', \; q_2 \stackrel{a/w_2}{\longrightarrow} q_2' \Big)
\quad \Rightarrow  \quad (q_1, q_2) \stackrel{a/(w_1
  w_2)}{\longrightarrow} (q_1', q_2').
\end{equation*}

The worst-case space and time complexity of the intersection of two
deterministic weighted finite automata (WFA) is linear in the size of the automaton the algorithm
returns. In the worst case, this can be as large as the product of the sizes
of the WFA intersected (i.e. $\cO(|\sA_1| |\sA_2|)$, where $|\sA_1|$ is the
sum of the number of states and transitions of $\sA_1$ and similarly with
$|\sA_2|$.  This corresponds to the case where every transition
of $\sA_1$ can be paired up with every transition of $\sA_2$. In 
practice far fewer transitions can be matched.

Notice that when both $\sA_1$ and $\sA_2$ are deterministic,
then $\sA_1 \cap \sA_2$ is also deterministic since there
is a unique initial state (pair of initial states of each WFA)
and since there is at most one transition leaving $q_1 \in Q_1$
or $q_2 \in Q_2$ labeled with a given symbol $a \in \Sigma$.

In the case of $\sC \cap S_T$, the WFA returned
is $\sB$, which has the same size as $\sA$. $\sA$ has more transitions than
states since each state admits at least on outgoing transition, so its size is
dominated by its number of transitions. Therefore, the complexity of
intersection here is in $\cO(|E_{\sA}|)$, where $|E_{\sA}|$ is at most
$|\sC|  NT$.

\section{\textsc{Weight-Pushing} algorithm}
\label{app:weightpush}

Here, we briefly describe the \textsc{Weight-Pushing} algorithm for a
WFA $\sA$ in the context of this paper \citep{Mohri1997bis,Mohri2009}.
We denote by $Q_{\sA}$ the set of states of $\sA$, by $E_\sA$ the set
of transitions of $\sA$, by $I_\sA$ its initial state, by $F_\sA$ the
set of its final states, and by $\rho_\sA(q)$ the final weight at a
final state $q$ -- for the WFAs considered in this paper the final
weights are all equal to one.

For any state $q \in Q_{\sA}$, let $d[q]$ denote the sum of the
weights of all paths from $q$ to final states:
\begin{equation*}
d[q] = \sum_{\pi \in P(q, F_{\sA})} \weight[\pi] \, \rho(\dest[\pi]),
\end{equation*}
where $P(q, F_{\sA})$ denotes the set of paths from $q$ to a state in
$F_{\sA}$. For an acyclic WFA $\sA$, the weights $d[q]$ can be
computed in linear time in the size of $\sA$, that is in
$\cO(|Q_\sA| + |E_{\sA}|)$, or $\cO(|E_{\sA}|)$ when every state of
$\sA$ admits at least one outgoing or incoming transition. This can be
done using a general shortest-distance algorithm
\citep{Mohri1997bis,Mohri2009}.

The weight-pushing algorithm then consists of the following
steps.  For any transition $e \in E_{\sA}$ such that
$d[\src[e]] \neq 0$, we update its weight as follows:
\begin{equation*}
  \weight[e] \leftarrow d[\src[e]]^{-1} \, \weight[e] \, d[\dest[e]].
\end{equation*}
For any state $q \in F_\sA$ with $d[q] \neq 0$, we update its final
weight as follows:
\begin{equation*}
\rho_\sA[q] \leftarrow d[q]^{-1} \, \rho_\sA[q].
\end{equation*}
The resulting WFA is guaranteed to be stochastic (at any state $q$,
the sum of the weights of all outgoing transitions, and the final
weight if $q$ is final, is equal to one)
\citep{Mohri2009}. Furthermore, if $d[I_{\sA}] = 1$, that is if the
sum of the weights of all paths is one, then path weights are
preserved by this weight-pushing operation. Otherwise, the weights of
all paths starting at the initial state is divided by $d[I_{\sA}]$.

\section{Proof of Theorem~\ref{th:awm}}
\label{app:awm}

\begin{reptheorem}{th:awm}
Let $\sfq$ denote the probability distribution defined by
$\sC_T = \sC \cap \sS_T$ and let $K$ denote the number of
accepting paths of $\sC_T$. Then, the following upper bound holds
for the weighted regret of \AWM:
\begin{align*}
  \Reg_T(\AWM, \sC) 
  & \leq \frac{\eta T}{8} + \frac{1}{\eta} \log \bigg[ K^\eta \sum_{\bx
    \in \Sigma^T} \sfq[\bx]^\eta \bigg]
    \leq \frac{\eta T}{8} + \frac{1}{\eta} \log K. 
\end{align*}
Furthermore, when $K \geq 2$, for any $T > 0$, there exists
$\eta^* > 0$, decreasing function of $T$, such that:
\begin{align*}
    \Reg_T(\AWM, \sC) 
    & \leq \sqrt{\frac{T H_{\eta^*}(\sfq)}{2}} - H_{\eta^*}(\sfq) + \log
    K,
\end{align*}
where
$H_\eta(\sfq) = \frac{1}{1 - \eta} \log\left(\sum_{\bx \in \Sigma^T}
  \sfq[\bx] \right)$ is the $\eta$-R\'{e}nyi entropy of $\sfq$.  The
unweighted regret of \AWM\ can be upper-bounded as follows:
\begin{align*}
  \Reg_T^0(\AWM, \sC) 
  & \leq \frac{\eta T}{8} + \frac{1}{\eta} \log K.
\end{align*}

\end{reptheorem}

\begin{proof}
  We will use a standard potential-based argument. For any $t \geq 1$
  and sequence $\bx \in \Sigma^T$, let $w_t[\bx]$ denote the sequence
  weight defining $\sfq_t$ via normalization,
  $\sfq_t[\bx] = \frac{w_t[\bx]}{\sum_{\bx} w_t[\bx]}$, that is
  $w_1[\bx] = \sfq[\bx]^\eta$ and, for $t \geq 2$,
  $w_t[\bx] = w_1[\bx] e^{-\eta \sum_{s = 1}^{t - 1}
    l_s[\bx[s]]}$. Let $\Phi_t$ be the potential defined by
  $\Phi_t = \log \left(\sum_{\bx} w_t[\bx] \right)$ for $t \geq
  1$. Then, by Hoeffding's inequality, we can write
  \begin{align*}
    \Phi_{t + 1} - \Phi_t
    & = \log \left [\frac{\sum_{\bx} w_t[\bx] \, e^{-\eta l_t[\bx[t]]}}{\sum_{\bx} w_t[\bx]} \right]\\
    & = \log \left[ \E_{\bx \sim \sfq_t} \left[ e^{-\eta l_t[\bx[t]]} \right] \right] \\
    & \leq -\eta \E_{\bx \sim \sfq_t} \big[ l_t[\bx[t]] \big] + \frac{\eta^2}{8} 
    = -\eta \E_{a \sim \sfp_t} \big[ l_t[a] \big] + \frac{\eta^2}{8}.
  \end{align*}
  Summing up these inequalities over $t \in [1, T]$ results in the
  following upper bound:
\begin{equation*}
  \Phi_{T + 1} - \Phi_1 \leq -\eta \sum_{t = 1}^T \E_{a \sim \sfp_t} \left[
    l_t[a] \right] + \frac{\eta^2 T}{8}.
\end{equation*}
We can straightforwardly derive a lower bound for the same quantity
for any sequence $\bx_0 \in \Sigma^T$:
\begin{align*}
  \Phi_{T + 1} - \Phi_1
  & = \log \Big[ \sum_{\bx} w_{T + 1}[\bx] \Big] - \log \Big[ \sum_{\bx}
    w_1[\bx] \Big] \\
  & \geq \log [ w_{T + 1}[\bx_0] ] - \log \Big[ \sum_{\bx} w_1[\bx] \Big]\\
  & = - \eta \sum_{t = 1}^T l_t[\bx_0[t]] + \log[\sfq[\bx_0]^\eta ] - \log \Big[ \sum_{\bx} \sfq[\bx]^\eta \Big]. 
\end{align*}
Comparing the upper and lower bounds gives
\begin{align*}
  - \eta \sum_{t = 1}^T l_t[\bx_0[t]] + \log[\sfq[\bx_0]^\eta ] - \log \Big[ \sum_{\bx} \sfq[\bx]^\eta \Big]
  & \leq -\eta \sum_{t = 1}^T \E_{a \sim \sfp_t} \left[l_t[a] \right] + \frac{\eta^2 T}{8},
\end{align*}
which can be rearranged as
\begin{align*}
  & \sum_{t = 1}^T \E_{a \sim \sfp_t} \left[l_t[a] \right] - \sum_{t = 1}^T l_t[\bx_0[t]] 
  \leq \frac{\eta T}{8} - \log[\sfq[\bx_0] ] +
    \frac{1}{\eta} \log \Big[ \sum_{\bx} \sfq[\bx]^\eta \Big]\\
  \Leftrightarrow & \sum_{t = 1}^T \E_{a \sim \sfp_t} \left[l_t[a]
    \right] - \sum_{t = 1}^T l_t[\bx_0[t]] + \log[K \sfq[\bx_0] ]
\leq \frac{\eta T}{8} + \frac{1}{\eta} \log \Big[ K^\eta \sum_{\bx} \sfq[\bx]^\eta \Big].
\end{align*}
Since the inequality holds for any sequence $\bx_0 \in \Sigma^T$, it
implies the following upper bound on the weighted regret:
\begin{equation*}
  \Reg_T 
  \leq \frac{\eta T}{8} + \frac{1}{\eta} \log \Big[ K^\eta \sum_{\bx} \sfq[\bx]^\eta \Big].
\end{equation*}
By Jensen's inequality, the inequality
$\frac{1}{K}\sum_{\bx} \sfq[\bx]^\eta \leq \Big(\frac{1}{K}\sum_{\bx}
\sfq[\bx] \Big)^\eta = \frac{1}{K^\eta}$ holds for $\eta \in (0,
1)$. This implies the following general upper bounds on the weighted
regret:
\begin{equation*}
  \Reg_T 
  \leq \frac{\eta T}{8} + \frac{1}{\eta} \log K.
\end{equation*}
The weighted regret can also be upper bounded in terms of
the R\'enyi entropy. Observe that
\begin{align*}
\frac{\eta T}{8} + \frac{1}{\eta} \log \Big[ K^\eta \sum_{\bx}
  \sfq[\bx]^\eta \Big]
  & = \frac{\eta T}{8} + \frac{1-\eta}{\eta} H_\eta(\sfq) + \log K.
\end{align*}
$\eta \mapsto H_\eta(\sfq)$ is known to be a non-increasing function
(see e.g. \citep{VanErvenHarremos2014}). It follows that
$\eta \mapsto \frac{\eta}{\sqrt{H_\eta}(\sfq)}$ is an increasing
function that increases at least linearly. If we assume that $\sfq$ is
supported on more than a single sequence, then, we have
$H_0(\sfq) > 0$. Thus, for any $T$, there exists a unique $\eta^*$
such that
$\frac{\eta^*}{\sqrt{H_{\eta^*}}(\sfq)} = \sqrt{\frac{8}{T}}$.
Furthermore, for $\eta \leq \eta^*$, the following inequality holds:
$\frac{\eta}{\sqrt{H_\eta(\sfq)}} \leq \sqrt{\frac{8}{T}}$. Thus, we
can write
\begin{align*}
  \frac{\eta T}{8} + \frac{1}{\eta} \log \Big[ K^\eta \sum_{\bx}
  \sfq[\bx]^\eta \Big]
  & \leq \inf_{\eta \leq \eta^*} \frac{\eta T}{8} + \frac{1}{\eta}
    H_\eta(\sfq) - H_\eta(\sfq) + \log K \\
  & \leq \sqrt{\frac{T H_{\eta^*}(\sfq)}{2}} - H_{\eta^*}(\sfq) + \log
    K.
\end{align*}
The upper bound on the unweighted regret is obtained straightforwardly
from the previous derivations using $\sfq[\bx] = \frac{1}{K}$.
\end{proof}
Note that when the losses are \emph{mixing}, we can also derive better 
constant-in-time regret guarantees by avoiding the use of Hoeffding's 
inequality.

\section{Proof of Theorem~\ref{th:WFAapprox}}
\label{app:WFAapprox}

\begin{reptheorem}{th:WFAapprox}
The weighted regret of the \AWM\ algorithm with respect to the WFA
$\sC$ when run with $\h \sC_T$ instead of $\sC_T$ can be
upper bounded as follows:
\begin{align*}
  \Reg_T(\cA, \sC) 
  & \leq \frac{\eta T}{8} + \frac{1}{\eta} \log \Big[K^\eta \sum_{\bx} \h
  \sfq[\bx]^\eta \Big] + D_\infty(\sfq \| \h \sfq) 
 \leq \frac{\eta T}{8} + \frac{1}{\eta} \log K + D_\infty(\sfq \| \h \sfq). 
\end{align*}
Its unweighted regret can be upper bounded as follows:
\begin{equation*}
\Reg_T^0(\cA, \sC) \leq \max_{\sC(\bx) > 0} \frac{\eta T}{8} + \frac{1}{\eta} \log \bigg[
    \frac{1}{\sfq[\bx]}\bigg] + \frac{1}{\eta}
D_\infty(\sfq \| \h \sfq).
\end{equation*}
\end{reptheorem}

\begin{proof}
  By Theorem~\ref{th:awm} (and its proof), for any sequence
  $\bx_0 \in \Sigma^T$, the following upper bound holds for the
  cumulative loss of \AWM\ run with $\h \sC_T$:
\begin{align*}
& \sum_{t = 1}^T \sfp_t \cdot \bfl_t - \sum_{t = 1}^T l_t[\bx_0[t]] +
  \log[\h \sfq[\bx_0] ]
\leq \frac{\eta T}{8} + \frac{1}{\eta} \log \Big[\sum_{\bx} \h
  \sfq[\bx]^\eta \Big].
\end{align*}
Thus, for any sequence $\bx_0 \in \Sigma^T$ accepted by $\sC_T$, we can write
\begin{equation*}
\sum_{t = 1}^T \sfp_t \cdot \bfl_t - \sum_{t = 1}^T l_t[\bx_0[t]] +
  \log[\sfq[\bx_0] K ]
\leq \frac{\eta T}{8} + \frac{1}{\eta} \log \Big[K^\eta \sum_{\bx} \h
  \sfq[\bx]^\eta \Big] + \log \bigg[\frac{\sfq[\bx_0]}{\h \sfq[\bx_0]} \bigg],
\end{equation*}
which implies the following upper bound on the weighted regret:
\begin{align*}
\Reg_T(\cA, \sC)
& \leq \frac{\eta T}{8} + \frac{1}{\eta} \log \Big[K^\eta \sum_{\bx} \h
  \sfq[\bx]^\eta \Big] + \sup_{\sC_T(\bx_0) > 0} \log
  \bigg[\frac{\sfq[\bx_0]}{\h \sfq[\bx_0]} \bigg]\\
& \leq \frac{\eta T}{8} + \frac{1}{\eta} \log \Big[K^\eta \sum_{\bx} \h
  \sfq[\bx]^\eta \Big] + D_\infty(\sfq \| \h \sfq) 
\end{align*}
As in the proof of Theorem~\ref{th:awm}, by Jensen's inequality,
$\log \Big[K^\eta \sum_{\bx} \h
  \sfq[\bx]^\eta \Big] \leq \log K$, which implies the second inequality.

Similarly, by the proof of Theorem~\ref{th:awm}, the unweighted regret
of \AWM\ run with $\h \sC_T$ can be upper bounded as follows:
\begin{align*}
  \sum_{t = 1}^T \sfp_t \cdot \bfl_t -  \sum_{t = 1}^T l_t[z_t]  
  & \leq  \max_{\sC(\bx) > 0} \frac{\eta T}{8} + \frac{1}{\eta} \log \Big[
    \frac{1}{\h \sfq[\bx]}\Big]
    = \max_{\sC(\bx) > 0} \frac{\eta T}{8} + \frac{1}{\eta} \log \bigg[
    \frac{1}{\sfq[\bx]}\bigg] + 
    \frac{1}{\eta} \log \bigg[\frac{\sfq[\bx]}{\h \sfq[\bx]} \bigg],
\end{align*}
which completes the proof.
\end{proof}

\newpage

\section{Failure transition algorithms}
\label{app:phiaut}

% \begin{enumerate}
%   \item definite and example [0.3 pages]
%   \item conversion
%   \item intersection [0.66 pages]
%     \begin{enumerate}
%         \item need filter
%         \item show filter
%         \item small proof
%        \end{enumerate}
%    \item update $\alpha_t$ [0.25 pages]
%      \begin{enumerate}
%        \item mention it's a simple chance or description
%      \end{enumerate}
%    \item illustration for $k$-shfiting with complexity
% \end{enumerate}

The computational complexity of the \AWM\ algorithm presented in
Section~\ref{sec:AWM} is based on the size of the composed
automaton $\sC \cap \sS_T$, which itself is related to the
original size of $\sC$. Similarly, if we were to apply \AWM\
to an $n$-gram approximation, the computational complexity of the algorithm
depends on the size of the approximating automaton. 
In this section, we introduce a technique to improve the computational cost
of \AWM\ by reducing the size of the automaton, using the notion
of \emph{failure transition (or $\phi$-transition)}.

$\phi$-transitions are special transitions characterized by the semantic of 
``other''. If, at a state $q$, there is no outgoing transition labeled with
$a \in \Sigma$ and there is a $\phi$-transition leaving $q$
and reaching $q'$, then 
the failure transition is taken instead without
consuming the label, and the next state is determined using the
transitions leaving $q'$. A $\phi$-automaton is an automaton with $\phi$-transitions.
We assume that there is no $\phi$-cycle in
any of our $\phi$-automata, and that there is at most one failure
transition leaving any state. This implies that the number of
consecutive failure transitions taken is bounded.

A failure transition can often replicate
the role of multiple standard transitions when there is ``symmetry''
within an automaton, that is when there are many transitions leading
to the same state from different states that consume the same set of
labels. Figure~\ref{fig:phiexample} illustrates such a case.

\begin{figure}[t]
  \centering
\begin{tabular}{c@{\hspace{1cm}}c}
\includegraphics[scale=0.4]{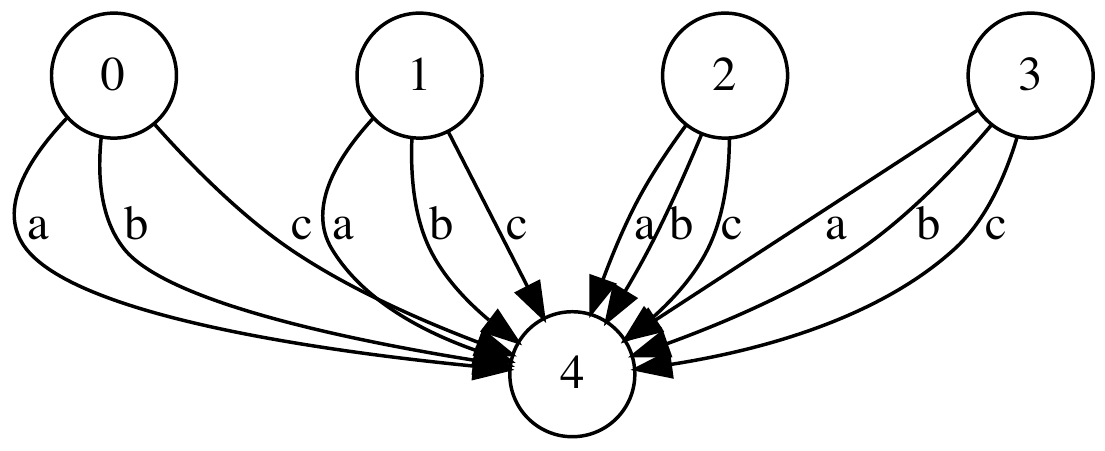} &
  \includegraphics[scale=0.4]{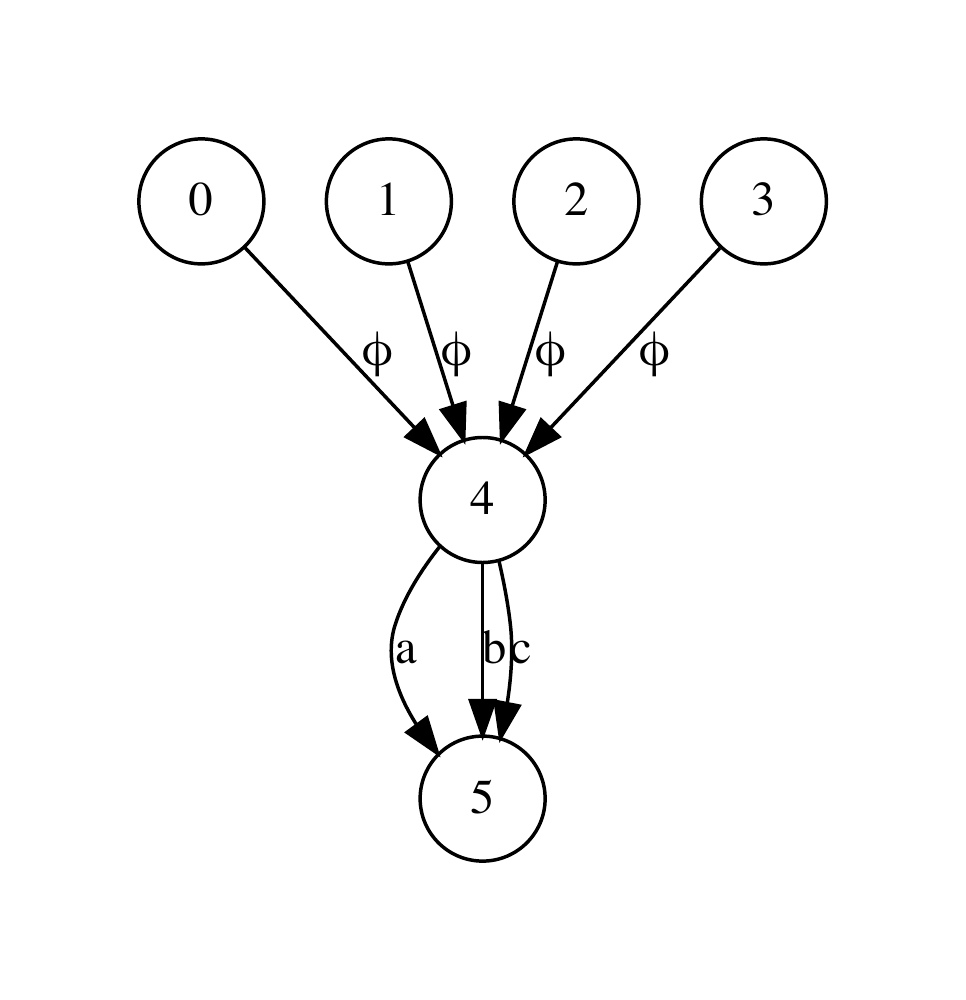} \\
(a) & (b)
\end{tabular}
  \caption{Example of the compression achieved by introducing a failure transition. (a) Standard automaton. (b) $\phi$-automaton.}  \label{fig:phiexample}
\end{figure}

\subsection{Conversion}

Notice that in Figure~\ref{fig:phiexample}, the introduction of a failure transition
removed $|S|$ transitions from $|Q|$ parent states while introducing
$|Q|$ $\phi$-transitions from each of the parent states to a new state
$q'$, and $|S|$ transitions from $q'$ to $q$. Thus, the change in the
number of transitions is $|S^*| + |Q^*| - |S^*| |Q^*|$. This fact can
be exploited to design an algorithm that iterates through the states
of an automaton, and for each state, determines whether it is
beneficial to introduce a failure transition between that state and (a
subset of) its parents. We call this algorithm, $\phi$-Convert, which
uses another algorithm, \textsc{$\phi$-SourceSubset} as a subroutine
to greedily select a candidate set of parent states from which to
introduce a $\phi$-transition for each state. The pseudocode for \textsc{$\phi$-Convert}
and \textsc{$\phi$-SourceSubset} are presented in Algorithm~\ref{alg:phiconvert}
and Algorithm~\ref{alg:ss} respectively. 

\begin{algorithm2e}[t]
  \TitleOfAlgo{\textsc{$\phi$-Convert}($\sC$)}
  % $(\Sigma, Q, I, F, E, \lambda, \rho) \gets \sC$ \\
  \ForEach{\upshape \mbox{\textbf{non-initial state} } $q \in \sC$}{
    % $B[q] \gets \{e \in E \colon \dest[e] = q\}$ \\
    % $p[B[q]] \gets \{\text{src}[e] \colon e \in B[q]\}$ \\
    $S^*, Q^* \gets $ \textsc{$\phi$-SourceSubset}($\sC$, $q$) \\
    % $\Sigma' \gets \{a \in \Sigma \colon \forall q' \in p[B[q]], \exists e \in B[q] \text{ s.t. } \text{src}[e] = q', \sigma[e] = a\}$\\
    \If{$|S^*| + |Q^*| < |S^*| |Q^*|$}{
      $\tilde{q} \gets \textsc{NewState}(\sC)$ \\
      % $\textsc{InsertTransition}(\sC, q, \phi, 1, \tilde{q})$
      $E_\sC \gets E_\sC \cup \{(q, \phi, 1, \tilde{q})\}$\\ 
      \ForEach{$q' \in Q^*$}{
        \ForEach{$e' \in E_\sC[q']$}{
          \If{$(\lab[e'], \weight[e']) \in S^*$}{
            % \textsc{InsertTransition}($\sC$, $\tilde{q}$, $\lab[e]$, $\weight[e]$, $q$)\\
            $E_\sC \gets E_\sC \cup \{(\tilde{q}, \lab[e'], \weight[e'], q)\}$\\
            $\textsc{DeleteTransition}[E_\sC, e']$
          }
        }
      }
    }
  }
\caption{\textsc{$\phi$-Convert}.} 
\label{alg:phiconvert}
\end{algorithm2e}

\begin{algorithm2e}[t]
  \TitleOfAlgo{\textsc{$\phi$-SourceSubset}($\sC, q$)}
  $(S_0, Q_0) \gets (\emptyset, \emptyset)$ \\
  $k^* \gets 1$ \\
  \For{$k \gets 1$ \upshape \mbox{\textbf{to}} $|$ \upshape \mbox{Parents} $[q]|$}{
    $q_k \gets \argmax_{q' \in \text{Parents}[q] \setminus Q_{k-1}} \left|(a,w) \in \Sigma \times \Rset_+ \colon \forall \tilde{q} \in \text{Parents}[q] \cup \{q'\}, (\tilde{q}, a, w, q) \in E_\sC\right|$ \\ 
    %  \in E[\tilde{q}] \text{ s.t. } \dest[e] = q, \sigma[e] = a, w[e] = w \right|$ 
    $S_k \gets \left|(a,w) \in \Sigma \times \Rset_+ \colon \forall \tilde{q} \in \text{Parents}[q] \cup \{q_k\}, (\tilde{q}, a, w, q) \in E_\sC\right|$ \\
    % \exists e \in E[\tilde{q}] \text{ s.t. } \dest[e] = q, \sigma[e] = a, w[e] = w\right|$ \\
    $Q_k \gets Q_{k-1} \cup \{q_k\}$ \\
    $k^* \gets \argmax_{j \in \{k,k^*\}} \{|S_j| |Q_j| - (|S_j| + |Q_j|)\}$
  }
  \RETURN{($S_{k^*}$, $Q_{k^*}$)} 
\caption{\textsc{$\phi$-SourceSubset}.} 
\label{alg:ss}
\end{algorithm2e}

Recall that the two main automata operations required for \AWM\ are
intersection and shortest-distance. While these two operations 
are standard for weighted automata, it is not as clear how one can
perform them over weighted $\phi$-automata. We now extend both to
$\phi$-automata.

\subsection{Intersection using a $\phi$-filter}

One of the main automata operations required for \AWM\ is
intersection.  The standard algorithm for intersection of automata
(Appendix~\ref{app:intersection}), which is based on matching
transitions, can return an incorrect result in the presence of
$\phi$-transitions.  Specifically, the algorithm may produce multiple
$\phi$-paths between two states, which leads to redundancy and 
incorrect weights.

% See Figure~\ref{fig:badphicomp} for an example.

% \begin{figure}[th!]
%   \centering
%   \begin{tabular}{ccc}
%     \includegraphics[scale=0.4]{phicompose1} 
%     &  \includegraphics[scale=0.4]{phicompose2}  \\ 
%     (a) & (b) \\
%     \multicolumn{2}{c}{\includegraphics[scale=0.4]{}} \\
%     \multicolumn{2}{c}{(c)}
%   \end{tabular}
%   \caption{Illustration of the redundant $\phi$-paths produced
%   by applying a standard composition algorithm to two $\phi$-automata.
%   (a) and (b) are two simple $\phi$-automata, and (c) is the result of their composition.}  
%   \label{fig:badphicomp}
% \end{figure}

Redundant $\phi$-paths are generated by standard intersection
algorithms because when the algorithm is in state $q_1$ in WFA $\sC_1$ and
state $q_2$ in $\sC_2$, both of which contain outgoing
$\phi$-transitions, the algorithm may take any of the following steps:
(1) move forward on a $\phi$-transition in $\sC_1$ while staying at
$q_2$; (2) move forward on a $\phi$-transition in $\sC_2$
while staying in $\sC_1$; or (3) move forward in both $\sC_1$ and
$\sC_2$.

To avoid this situation, we introduce the concept of a \emph{$\phi$-filter},
which is a \emph{finite state transducer (FST)} that can filter out all but 
one $\phi$-path between
any two states. 

Our $\phi$-filter is designed to modify the two input automata in a way
that will distinguish between the above cases. In
$\sC_1$, for every $\phi$-transition, we rename the label $\phi$ as
$\phi_2$. Moreover, at the source and destination states of every
$\phi$-transition, we introduce new self-loop transitions labeled with
$\phi_1$ and with weight $1$.  Thus, a transition labeled with
$\phi_2$ will indicate a ``move forward,'' while a transition labeled
with $\phi_1$ will indicate a ``stay.''  Similarly, in $\sC_2$, we
rename the $\phi$ labels as $\phi_1$, and we introduce self-loops
labeled with $\phi_2$ and weight $1$ at the source and destination
states of every $\phi$-transition. With these modifications, any
$\phi$-path resulting from the composition algorithm will include
transitions of the form: (1) $(\phi_2 : \phi_2)$; (2)
$(\phi_1 : \phi_1)$; or (3) $(\phi_2 : \phi_1)$. 
\begin{figure}[t]
  \centering
  \includegraphics[scale=0.5]{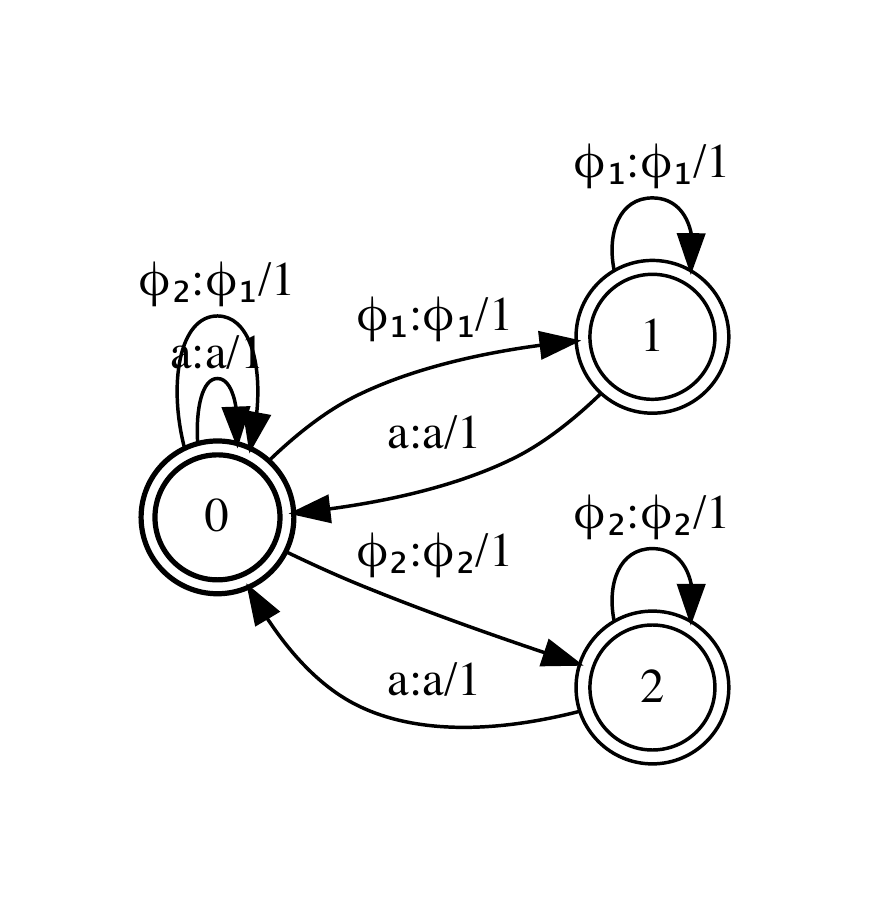} 
  \caption{Illustration of the $\phi$-filter $\sF$.}  
  \label{fig:filter}
\end{figure}

Now consider the finite-state transducer $\sF$ illustrated in
Figure~\ref{fig:filter}, which will serve as our $\phi$-filter.  The
composition of any two $\phi$-automata and the $\phi$-filter $\sF$,
$\sC_1 \circ \sF \circ \sC_2$, will result in a finite-state
transducer whose transitions have labels in
$\{(a:a)\}_{a \in \Sigma} \cup \{(\phi_2:\phi_2), (\phi_1: \phi_1),
(\phi_2:\phi_1)\}$.\footnote{Composition is a standard algorithm for
  weighted finite-state transducers which coincides with the
  intersection operation in the special case of WFA (see \cite{Mohri2009}).}
Moreover, we identify all label pairs in
$\{(\phi_2:\phi_2), (\phi_1: \phi_1), (\phi_2:\phi_1)\}$ using the
same semantic of ``other'' as we did with $\phi$. Thus, we can
identify all label pairs in
$\{(\phi_2:\phi_2), (\phi_1: \phi_1), (\phi_2:\phi_1)\}$ with the
single pair $(\phi:\phi)$ and treat the result of composition as
simply a weighted finite automaton.

\subsection{Update of $\balpha$ using a modified shortest-distance algorithm}

The other key ingredient of the \AWM\ algorithm is the update of $\balpha$
using the shortest-distance algorithm for WFA.
However, updating $\balpha$ as we did in \AWM\ may result in summing
over  `obsolete $\phi$-transitions'.  For
example, if at a given state $q$, there is a transition labeled with
$a$ to $q'$ and a $\phi$-transition whose destination state has a
single outgoing transition also labeled with $a$ to $q'$, the second
path should not be considered.

To account for these types of situations, we use the fact that the
semiring $(\Rset_+, +, \times, 0, 1)$ admits a natural extension to a
ring structure under the standard additive inverse $-1$. Specifically,
upon encountering a transition $e$ labeled with $a$ leaving state $q$,
we will check for $\phi$-transitions with destination states that
admit further transitions $e'$ labeled with $a$.  Any such transition
should not be considered under the semantic of the $\phi$-transition
and thus should not contribute any weight to the distance to
$\alpha[\dest[e']]$.  To correctly account for these paths, we will
\emph{preemptively} subtract the weight of $e'$ from its destination
state. When the algorithm processes the $\phi$-transition directly, it
will add this weight back so that the total contribution of this path
is zero.

\subsection{\textsc{$\phi$-AWM} algorithm}

With the addition of the $\phi$-filter and the modified $\balpha$ update described
above, we can present \textsc{$\phi$-AWM}, an extension of \AWM\ that can handle
$\phi$-automata. Given an input automaton (not necessarily with
$\phi$-transitions), the algorithm first calls \textsc{$\phi$-Convert}
to determine whether it is beneficial to introduce
$\phi$-transitions. The algorithm then composes the output with
$\Sigma^T$ (using the $\phi$-filter) to compute the set of sequences of
length $T$ that are accepted by $\sC$. Then, the algorithm updates the
weights of the automaton in a similar manner as in \AWM\, with the
additional adjustment of preemptively accounting for
$\phi$-transitions.  Algorithm~\ref{alg:phiawm} presents the
pseudocode for \textsc{$\phi$-AWM}.

\begin{algorithm2e}[t]
  \TitleOfAlgo{\textsc{$\phi$-AWM}($\sC$, $\eta$)}
  $\sC \gets \textsc{$\phi$-Convert}(\sC)$ \\ 
  $\sB \gets \sC \cap \sF \cap \sS_T$ \\
  $\sA \gets \textsc{Weight-Pushing}(\sB^\eta)$ \\
  $\bbeta \gets \textsc{BwdDist}(\sA)$ \\
  $\balpha \gets 0$; $\balpha[I_\sA] \gets 1$ \\ 
  \ForEach{$e \in E_{\sA}^{0 \to 1}$}{
      $\sfp_1[\lab[e]] \gets \weight[e]$.
    }
 \For{$t \gets 1$ \KwTo $T$}{
    $i_t \gets $\textsc{Sample}($\sfp_t$); \textsc{Play}($i_t$); \textsc{Receive}($\bfl_t$)\\
    $Z \gets 0$; $\bw \gets 0$\\ 
    \ForEach{$e \in E_{\sA}^{t \to t + 1}$}{
      $\weight[e] \gets \weight[e] \, e^{-\eta l_t[\lab[e]]}$ \\
      $\bw[\lab[e]] \gets \bw[\lab[e]] + \balpha[\src[e]] \, \weight[e] \, \bbeta[\dest[e]]$\\
      $Z \gets Z + \bw[\lab[e]]$\\
      $\balpha[\dest[e]] \gets \balpha[\dest[e]] + \balpha[\src[e]] \,
      \weight[e] $\\
      \If{$\lab[e] \neq \phi$}{
        $\tilde{q} = \src[e]$; $w_\phi \gets 1$ \\
        \While{$\exists e_\phi \in E[\tilde{q}]$ \WITH $\lab[e_\phi] = \phi$}{
          $w_\phi \gets w_\phi\, \weight[e_\phi]$ \\
          \eIf{$\exists e' \in E[\dest[e_\phi]]$ \WITH $\lab[e'] = \lab[e]$}{
            $\alpha[\dest[e']] \gets \alpha[\dest[e']] - w_\phi \weight[e']$ \\
            \textsc{Break} \\
          }{
            $\tilde{q}\gets \dest[e_\phi]$\\
          }
        }
      }
    }
    $\sfp_{t + 1} \gets \frac{\bw}{Z}$ \\ 
  }
\caption{\textsc{$\phi$-AutomataWeightedMajority}($\phi$-AWM).} 
\label{alg:phiawm}
\end{algorithm2e}

Since the update of $\sfp_t$ in \textsc{$\phi$-AWM} is mathematically
equivalent to the one in \AWM\, we obtain the same regret
guarantees as in Theorem~\ref{th:awm}. Moreover, if we denote by
$N_\phi(Q_{\sC_T})$ the maximum number of consecutive
$\phi$-transitions leaving states in $Q_{\sC_T}$, the total
computational cost of the algorithm is in
$\cO\left(\sum_{t = 1}^T N_\phi(Q_{\sC_T, t-1}) |E_{\sA}^{t \to t + 1}| \right)$.

% \begin{theorem}[\textsc{$\phi$-AutomataWeightedMajority} Guarantee]
%   \label{th:phiawm}
%   If $\sC \cap \sS_T$ is normalized in the input to {\textsc
%     PBWM}, then at each round $t$, \textsc{$\phi$-AWM} performs the
%   same weighted majority update as in {\PBWM} for unweighted regret.
%   Moreover, the computational complexity of {\textsc {$\phi$-AWM}} at
%   each round $t$ is
%   $\cO\left(|N_\phi(Q_{\sC_T, t-1})E[Q_{\sC_T,t}]|\right)$, so that
%   the total computational cost is
%   $\cO\left(\sum_{t = 1}^T N_\phi(Q_{\sC_T, t-1}) |E[Q_{\sC_T,t}]|
%     \right)$.
% \end{theorem}

For the $k$-shifting automaton, the per-iteration computational
complexity of \textsc{$\phi$-AWM} is now $\cO(Nk)$, since there is at
most one consecutive $\phi$-transition in the output of
$\phi$-Convert, and we now aggregate transitions at each time using
failure transitions. This is a factor of $N$ better than that of 
\textsc{AWM}, and only a factor of $k$ worse than the \textsc{Fixed-Share}
algorithm of \cite{HerbsterWarmuth1998}. If we intersect the
$k$-shifting automaton with $\Sigma^T$, approximate the result with a bigram model,
and then convert this model into a $\phi$-automaton, we obtain
an algorithm that runs in $\cO(N)$, which is the same as that of 
\textsc{Fixed-Share}. See Figure~\ref{fig:bigramkshift} for an illustration. 

\begin{figure}[t]
\centering
\includegraphics[scale=0.55]{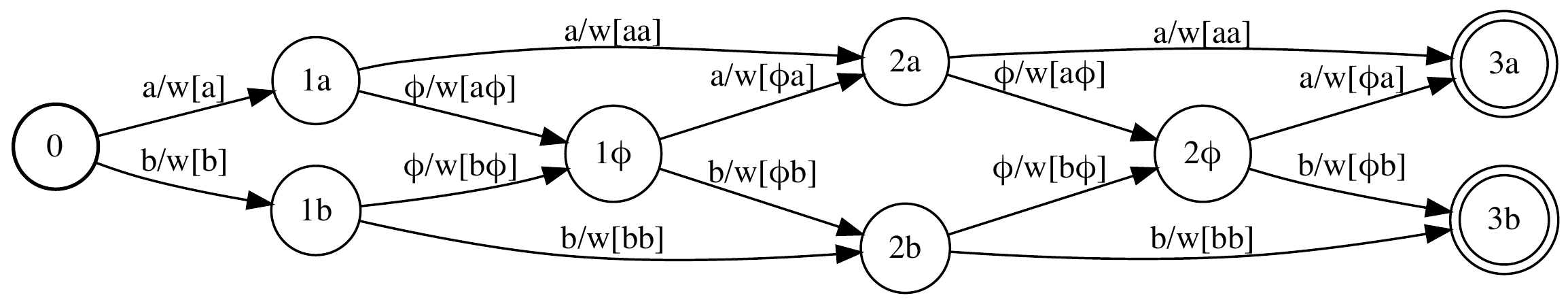}
\caption{An illustration of a bigram model
  approximating the $k$-shifting automaton composed with $\sS$. 
  \textsc{$\phi$-Convert} has been applied to the bigram model, making it
  smaller than a standard bigram model.}
\label{fig:bigramkshift}
\end{figure}

\newpage

\section{\textsc{Prod-EG}}
\label{app:Prod-EG}

The pseudocode of the \textsc{Prod-EG} algorithm, which is based on a simple
multiplicative update, is given in Algorithm~\ref{alg:prodeg}. The
following provides a general guarantee for the convergence of the
algorithm.

\begin{algorithm2e}[t]
  \TitleOfAlgo{\textsc{Prod-EG}($\sfq_1 \in (\Delta_N)^m$, $\eta$)}
  \For{$s = 1,2,\ldots, \tau$}{
    \textsc{Play}($\sfq_s$) \\
    \textsc{Receive}($\nabla f(\sfq_s)$)\\
    \For{$j = 1,2,\ldots, m$}{
      \For{$i = 1,2,\ldots, N$}{
        $\sfq_{s+1, j}(i) = \sfq_{s, j} e^{-\eta \frac{\partial f}{\partial \sfq_{j}(i)}(\sfq_{s, j})}$
       }
    }
  }
  \caption{\textsc{Prod-EG}.} 
\label{alg:prodeg}
\end{algorithm2e}

\begin{theorem}[\textsc{Product-Exponentiated Gradient (Prod-EG)}]
  \label{th:prodeg}
  \text{ } 

  Let $(\Delta_N)^m$ be the product of $m$ $(N-1)$-dimensional simplices, and let\\
  $f\colon (\Delta_N)^m \to \Rset$ be a convex function whose partial
  subgradients have absolute values all bounded by $L$. Let
  $\sfq_{1,j}(i) = \frac{1}{N}$ for $i \in [N]$ and $j \in [m]$.  Then,
  \textsc{Prod-EG} benefits from the following guarantee:
\begin{equation*}
f\bigg(\frac{1}{\tau} \sum_{s = 1}^\tau \sfq_s\bigg) - f(\sfq^*) \leq
\frac{1}{\eta \tau } m \log(N) + 2 \eta L.
\end{equation*}
\end{theorem}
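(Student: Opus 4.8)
The plan is to recognize \textsc{Prod-EG} as Exponentiated Gradient (EG) / mirror descent run simultaneously on each of the $m$ simplex blocks, and to invoke the standard EG regret analysis with the negative-entropy regularizer, carefully tracking how the per-block bounds aggregate. First I would fix the convex combination interpretation: since $f$ is convex, Jensen gives $f\bigl(\frac{1}{\tau}\sum_{s=1}^\tau \sfq_s\bigr) - f(\sfq^*) \le \frac{1}{\tau}\sum_{s=1}^\tau \bigl(f(\sfq_s) - f(\sfq^*)\bigr) \le \frac{1}{\tau}\sum_{s=1}^\tau \langle \nabla f(\sfq_s), \sfq_s - \sfq^*\rangle$, where the last step uses the subgradient inequality. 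So it suffices to bound the linear regret $\sum_{s=1}^\tau \langle \nabla f(\sfq_s), \sfq_s - \sfq^*\rangle$ by $\frac{m\log N}{\eta} + 2\eta L \tau$.

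Next I would carry out the standard potential (one-step progress) argument for EG on a single simplex, using the potential $\Phi_s = \sum_{j=1}^m D(\sfu_j^* \,\|\, \sfq_{s,j})$ where $D$ is the relative entropy and $\sfu^* = \sfq^*$. The multiplicative update $\sfq_{s+1,j}(i) \propto \sfq_{s,j}(i) e^{-\eta \partial_i f}$ is exactly mirror descent with the entropic regularizer on the product of simplices, which decomposes coordinate-blockwise. The key per-step inequality is
\begin{align*}
\eta \langle \nabla f(\sfq_s), \sfq_s - \sfq^*\rangle \le \Phi_s - \Phi_{s+1} + \frac{\eta^2}{2}\sum_{j=1}^m \|\nabla_j f(\sfq_s)\|_\infty^2,
\end{align*}
which follows from the standard three-point/Pinsker-type lemma for EG (expand $D(\sfu^*\|\sfq_{s+1,j}) - D(\sfu^*\|\sfq_{s,j})$, bound $\log Z_{s,j}$ using $e^{-x} \le 1 - x + x^2/2$ for $x \ge$ something, or use Hoeffding's lemma on the normalizer, and combine). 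With the gradient bound $\|\nabla_j f\|_\infty \le L$ this gives $\frac{\eta^2}{2}\cdot m L^2$; I would be slightly careful here — to land exactly on the stated $2\eta L$ rather than $\frac{\eta m L^2}{2}$ per step, one either uses that each block's gradient contributes $L^2$ but the step-size/normalization argument yields the cleaner constant, or absorbs constants; I'd reconcile this in the final line (the intended bound is likely using a per-coordinate range argument giving the factor $2$). Then telescoping over $s = 1, \ldots, \tau$ yields $\eta \sum_s \langle \nabla f, \sfq_s - \sfq^*\rangle \le \Phi_1 - \Phi_{\tau+1} + \tau \cdot (\text{per-step variance term}) \le \Phi_1 + \tau \cdot(\cdots)$, since $\Phi_{\tau+1} \ge 0$.

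Finally I would bound the initial potential: with $\sfq_{1,j}(i) = 1/N$ uniform, $D(\sfu_j^* \,\|\, \sfq_{1,j}) = \log N - H(\sfu_j^*) \le \log N$ for each $j$, so $\Phi_1 \le m\log N$. Dividing through by $\eta \tau$ gives $\frac{1}{\tau}\sum_s \langle \nabla f, \sfq_s - \sfq^*\rangle \le \frac{m\log N}{\eta\tau} + (\text{variance term}/\eta)$, and combined with the Jensen step this is the claimed bound. The main obstacle — really the only non-mechanical point — is getting the per-step ``variance'' constant exactly right: I expect to need Hoeffding's lemma (as used in the proof of Theorem~\ref{th:awm}) applied to the random variable $\partial_i f$ under the distribution $\sfq_{s,j}$, whose range across $i$ is at most $2L$ since each partial subgradient lies in $[-L, L]$; Hoeffding's lemma then gives the $\frac{(2L)^2}{8}\eta^2 = \frac{\eta^2 L^2}{2}$ contribution per block, and matching the stated $2\eta L$ after division by $\eta\tau$ and summing over blocks will pin down whether the intended reading has an implicit $m$ or a different normalization — I would state the bound in the sharp form and note it implies the displayed one.
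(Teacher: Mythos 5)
Your proposal is correct and follows essentially the same route as the paper: the paper likewise identifies \textsc{Prod-EG} as entropic mirror descent on the product of $m$ simplices, invokes the standard mirror descent regret bound (citing Bubeck), bounds the initial Bregman divergence $B_\psi(\sfq^*,\sfq_1)$ by $m\log N$, and concludes via convexity of $f$; you merely unroll that standard bound by hand through the potential/telescoping argument. The constant you flag as the one delicate point is no cleaner in the paper itself, which simply writes $2\eta L$ as the output of the cited black-box bound without reconciling it with the $\frac{\eta L^2}{2}$ (or $m$-dependent) form that its own strong-convexity claim would actually produce.
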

\begin{proof}
  Consider the mirror map $\psi\colon (\Delta_N)^m \to \Rset$ defined by $\psi(\sfq) = \sum_{j=1}^m \sum_{i=1}^N \sfq_{j}(i) \log \sfq_j(i) $.
  This induces the Bregman divergence:
\begin{equation*} 
B_\psi(\sfq, \sfq') = \sum_{j=1}^m \sum_{i=1}^N \sfq_j(i) \log
\left(\frac{ \sfq_j(i)}{ \sfq'_j(i)} \right).
\end{equation*}
Since each relative entropy is $1$-strongly convex with respect to the
$l_1$ norm over a single simplex, the additivity of strong convexity
implies that $B_\psi$ is $1$-strongly convex with respect to the $l_1$
norm defined over$(\Delta_N)^m$.

The update described in the theorem statement corresponds to the
mirror descent update based on $B_\psi$:
\begin{equation*}
\sfq_{s+1} = \argmin_{\sfq \in (\Delta_N)^m} \langle g_s , \sfq
\rangle + B_\psi(\sfq, \sfq_s).
\end{equation*}
where $g_s \in \partial(f(\sfq_s))$ is an element of the subgradient
of $f$ at $\sfq_s$.  Thus, the standard mirror descent regret bound
(e.g. \citep{Bubeck2015}) implies that
\begin{equation*}
\frac{1}{\tau} \sum_{s = 1}^\tau f(\sfq_s) - f(\sfq^*) \leq \frac{1}{\eta \tau } B_\psi(\sfq^*, \sfq_1) + \eta 2 L.
\end{equation*}
The result now follows from the fact the observation that 
$B_\psi(\sfq^*, \sfq_1) \leq m \log(N)$.
\end{proof}

For the minimum R\'{e}nyi divergence optimization problem
\eqref{eq:minrenyingram}, we can apply \textsc{Prod-EG} to
the product of $m = \sum_{j=1}^n |\Sigma|^{n-j}$ simplices, each one
corresponding to a conditional probability with a specific history.
First, we remark that the subgradient of the maximum of a family of
convex functions at a point can always be chosen from the subgradient
of the maximizing function at that point.  Specifically, let
$\{f_\alpha\}_{\alpha \in \cA}$ be a family of convex functions, and
let $\alpha(x) = \argmax_{\alpha} f_\alpha(x)$. Then, it follows that
\begin{align*}
  \max_\alpha f_\alpha(x) - \max_\alpha f_\alpha(y)
  &\geq f_{\alpha(y)}(x) - f_{\alpha(y)}(y) 
  \geq \langle \nabla f_{\alpha(y)}(y), x - y \rangle .
\end{align*}

Let $\bx$ be the maximizing path of the minimum R\'{e}nyi divergence
objective. 
\ignore{We will use the $\vee$ symbol to denote the maximum between
two values, and the $\wedge$ symbol the denote the minimum.} 
We can then write
\begin{align*}
  \log \bigg[ \frac{\sfq[\bx]}{\h \sfq_\sfw[\bx]} \bigg]
  & = \sfq[\bx] - \sum_{t = 1}^T \log \sfw \big[\bx[t] \big| \bx_{\max(t - n +
  1, 1)}^{t - 1} \big] \\
  & = \sfq[\bx] - \sum_{j = 1}^n \sum_{\bz_1^j \in \Sigma^j} \sum_{t = 1}^T 1_{j = \min(t, n)} 1_{\bx_{\max(t - n
    + 1, 1)}^t = \bz_1^j} \log \sfw \Big[\bz[j] \Big| \bz_1^{j - 1} \Big].
\end{align*}
Thus, its partial derivative with respect to $\sfw \Big[\bz[j] \Big| \bz_1^{j - 1} \Big]$ is:
\begin{align*}
  \frac{\partial}{\partial \sfw \Big[\bz[j] \Big| \bz_1^{j - 1} \Big]}
  \log \bigg[ \frac{\sfq[\bx]}{\h \sfq_\sfw[\bx]} \bigg]
  & = - \sum_{t = 1}^T
    \frac{1_{j = \min(t, n)} 1_{\bx_{\max(t - n
    + 1, 1)}^t = \bz_1^j}}{\sfw \Big[\bz[j] \Big| \bz_1^{j - 1} \Big]}.
\end{align*}
Thus, by tuning \textsc{Prod-EG} with an adaptive learning rate
$$\eta_t \propto \frac{1}{\sqrt{ \sum_{s = 1}^t \left \|\nabla  \log
      \Big[ \frac{\sfq[\bx(s)]}{\h \sfq_{\sfw_s}[\bx(s)]} \Big]
    \right\|_\infty^2}},$$ where
$\bx(s) = \argmax_{\bx \in \sC_T} \log \Big[ \frac{\sfq[\bx]}{\h
  \sfq_{\sfw_s}[\bx]} \Big]$, we can derive the following guarantee
for \textsc{Prod-EG} applied to the $n$-gram approximation problem.
\begin{corollary}[$n$-gram approximation guarantee]
\label{cor:prodegngram}
There exists an optimization algorithm outputting a sequence of
conditional probabilities $(\sfq_t)_{t = 1}^\infty$ such that
$\left(\frac{1}{T} \sum_{t = 1}^T \sfq_t\right)$ approximates the
$\infty$-R\'{e}nyi optimal $n$-gram solution with the following
guarantee:
  \begin{align*}
    &F\Bigg(\frac{1}{\tau} \sum_{s = 1}^\tau \sfq_t\Bigg) - F(\sfq^*) \leq
  \sqrt{\frac{2 N^{n} \log(N) \sum_{s = 1}^\tau \max_{\stackrel{j \in
      [n]}{\bz_1^j \in \Sigma^j}} \bigg| 
      \sum_{t=1}^T \frac{1_{j = \min(t, n)} 1_{\bx_{\max(t - n
    + 1, 1)}^t = \bz_1^j}}{\sfw_s \big[\bz[j] \big| \bz_1^{j - 1} \big]}
      \bigg|}{(N - 1) T^2}}.
  \end{align*}
\end{corollary}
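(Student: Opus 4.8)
The plan is to recognize \textsc{Prod-EG} with the stated adaptive step size as online mirror descent applied to the convex program~\eqref{eq:minrenyingram} over the product of simplices $\cW_n$, and to feed the two problem-specific quantities --- the Bregman ``diameter'' of $\cW_n$ at the optimum and the $\ell_\infty$-norms of the subgradients --- into a standard adaptive (AdaGrad-style) mirror-descent regret bound. Theorem~\ref{th:prodeg} already carries out this reduction for a fixed step size; Corollary~\ref{cor:prodegngram} is the refinement obtained by (i) specializing the subgradient bound and the coordinate count to the $n$-gram setting and (ii) replacing the fixed step size by the data-dependent one.

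First I would record the convexity of the objective. Writing $F(\sfq_\sfw) = D_\infty(\sfq \,\|\, \sfq_\sfw) = \sup_{\bx\in\Sigma^T}\bigl[\log\sfq[\bx] - \log\sfq_\sfw[\bx]\bigr]$ and using $\log\sfq_\sfw[\bx] = \sum_{t=1}^T \log\sfw\big[\bx[t] \big| \bx_{\max(t-n+1,1)}^{t-1}\big]$, the term $\log\sfq[\bx] - \log\sfq_\sfw[\bx]$ is, as a function of $\sfw\in\cW_n$, a constant plus a sum of $-\log$ of coordinates of $\sfw$, hence convex; a pointwise supremum of convex functions is convex, so $F$ is convex on the convex set $\cW_n$. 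Since $F$ is nonsmooth, I would invoke the fact established in the lines preceding Corollary~\ref{cor:prodegngram}: a subgradient of a pointwise max may be taken as a subgradient of the maximizing function. Concretely, at iterate $\sfw_s$ set $\bx(s) = \argmax_{\bx\in\sC_T}\log\bigl[\sfq[\bx]/\sfq_{\sfw_s}[\bx]\bigr]$ and let $g_s$ collect the partial derivatives at $\sfw_s$ of $\sfw\mapsto\log\bigl[\sfq[\bx(s)]/\sfq_\sfw[\bx(s)]\bigr]$; then $g_s\in\partial F(\sfw_s)$, and by the computation already carried out in the excerpt its $(j,\bz_1^j)$-coordinate is $-\sum_{t=1}^T \frac{1_{j=\min(t,n)}\,1_{\bx_{\max(t-n+1,1)}^t=\bz_1^j}}{\sfw_s[\bz[j] \,|\, \bz_1^{j-1}]}$ with $\bx=\bx(s)$. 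In particular $\|g_s\|_\infty$ is exactly the maximum over $(j,\bz_1^j)$ of the absolute value of that sum --- the quantity appearing inside the bound of Corollary~\ref{cor:prodegngram}.

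Next I would run the mirror-descent bookkeeping. As in the proof of Theorem~\ref{th:prodeg}, the mirror map $\psi(\sfw)=\sum_{j}\sum_i\sfw_j(i)\log\sfw_j(i)$ is $1$-strongly convex with respect to $\ell_1$ on $\cW_n$ by additivity over the constituent simplices, so its dual norm is $\ell_\infty$, and the \textsc{Prod-EG} step is $\sfw_{s+1}=\argmin_{\sfw\in\cW_n}\langle g_s,\sfw\rangle+\tfrac{1}{\eta_s}B_\psi(\sfw,\sfw_s)$. Initializing at the uniform model $\sfu_n$ makes $B_\psi(\sfw^*,\sfw_1)$ the sum over the $m=\sum_{j=1}^{n}|\Sigma|^{n-j}$ conditional simplices of the relative entropy of the optimal conditional to the uniform distribution on $\Sigma$, so $B_\psi(\sfw^*,\sfw_1)\le m\log N$, and $m=\tfrac{N^{n}-1}{N-1}\le\tfrac{N^{n}}{N-1}$. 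With $\eta_s$ proportional to $\bigl(\sum_{r\le s}\|g_r\|_\infty^2\bigr)^{-1/2}$ (the rate in the corollary), the standard adaptive mirror-descent argument --- summing the per-step inequality $\langle g_s,\sfw_s-\sfw^*\rangle\le\tfrac{1}{\eta_s}\bigl(B_\psi(\sfw^*,\sfw_s)-B_\psi(\sfw^*,\sfw_{s+1})\bigr)+\tfrac{\eta_s}{2}\|g_s\|_\infty^2$ against the monotone sequence $1/\eta_s$, bounding the Bregman part by $(m\log N)/\eta_\tau$, and using $\sum_s a_s / \sqrt{\sum_{r\le s}a_r}\le 2\sqrt{\sum_s a_s}$ for the gradient part --- gives $\sum_{s=1}^\tau\langle g_s,\sfw_s-\sfw^*\rangle\le O\bigl(\sqrt{m\log N\,\sum_{s=1}^\tau\|g_s\|_\infty^2}\bigr)$. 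Finally $g_s\in\partial F(\sfw_s)$ gives $F(\sfw_s)-F(\sfw^*)\le\langle g_s,\sfw_s-\sfw^*\rangle$, and convexity of $F$ (Jensen) gives $F\bigl(\tfrac{1}{\tau}\sum_{s=1}^\tau\sfw_s\bigr)-F(\sfw^*)\le\tfrac{1}{\tau}\sum_{s=1}^\tau\bigl(F(\sfw_s)-F(\sfw^*)\bigr)$; dividing the cumulative bound by $\tau$, substituting $m\le N^{n}/(N-1)$, and rewriting $\|g_s\|_\infty$ as the displayed maximum yields the stated guarantee for the averaged iterate.

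The step I expect to be the main obstacle is the adaptive mirror-descent analysis itself, since one must redo the online-mirror-descent argument with time-varying step sizes rather than quote the fixed-step bound of Theorem~\ref{th:prodeg}. This requires (a) a uniform-in-$s$ bound on $B_\psi(\sfw^*,\sfw_s)$ for the entropic regularizer --- which holds because the \textsc{Prod-EG} iterates are multiplicative updates from an interior point with finite gradients and hence stay bounded away from the simplex boundary (a mild truncation makes this quantitative) --- so that the telescoped sum against the monotone $1/\eta_s$ collapses to a multiple of $(m\log N)/\eta_\tau$; and (b) careful tracking of the absolute constants so that the final bound matches the exact form of Corollary~\ref{cor:prodegngram}. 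Everything else --- convexity of $F$, the max-subgradient selection, the explicit partial derivatives, and $B_\psi(\sfw^*,\sfw_1)\le m\log N$ with $m=(N^{n}-1)/(N-1)$ --- is either immediate or already worked out in the excerpt.
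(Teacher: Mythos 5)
Your proposal follows essentially the same route as the paper: the paper's own justification of Corollary~\ref{cor:prodegngram} is precisely the reduction to the mirror-descent bound of Theorem~\ref{th:prodeg} over the product of $m = \frac{N^n-1}{N-1}$ simplices with the entropic regularizer, the max-subgradient selection and explicit partial derivatives computed in the preceding lines, the bound $B_\psi(\sfw^*,\sfw_1)\le m\log N$, and an adaptive (AdaGrad-style) learning rate $\eta_s \propto \big(\sum_{r\le s}\|g_r\|_\infty^2\big)^{-1/2}$. Your additional remarks on the uniform-in-$s$ control of $B_\psi(\sfw^*,\sfw_s)$ needed for the time-varying step sizes fill in a detail the paper leaves implicit, but they do not change the argument.
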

Each iteration of \textsc{Prod-EG} admits a computational complexity
that is linear in the dimension of the feature space. Since we have
specified an $n$-gram model as the product of $\frac{N^n - 1}{N - 1}$
simplices, the total per-iteration cost of solving the convex
optimization problem is in
$\cO\left(\frac{N(N^n - 1)}{N-1} \right) = \cO(N^n)$. Since the
minimum R\'{e}nyi divergence is not Lipschitz, the maximizing ratio in
the convergence guarantee may also become large when the choice of $n$
is too small. In all cases, observe that this approximation problem
can be solved offline.

\newpage

\section{Minimum R\'enyi divergence unigram models}
\label{app:unigram}

\ignore{
If we restrict ourselves to $\sC_T$ with uniform weights and
$|\Sigma| = 2$, then we can also provide an explicit solution for
unigram automata. The solution is obtained from the paths with the
smallest number of occurrences of each symbol, which can be
straightforwardly found via a shortest-path algorithm in linear time.
}

\begin{reptheorem}{th:inftyrd1gram}
Assume that $\sC_T$ admits uniform weights over all paths and
$\Sigma = \set{a_1, a_2}$.  For $j \in \set{1, 2}$, let $n(a_j)$ be the smallest
number of occurrences of $a_j$ in a path of $\sC_T$.  For any $j \in
\set{1, 2}$, define
\begin{equation*}
\sfq[a_j] = \frac{\max\left\{1, \frac{n(a_j)}{T -
      n(a_j)}\right\}}{1+\max\left\{1, \frac{n(a_j)}{T -
      n(a_j)}\right\}}.
\end{equation*}

Then, the unigram model $\sfw \in \cW_1$ solution of
$\infty$-R\'{e}nyi divergence optimization problem is defined by
$\sfw[a_{j^*}] = \sfq[a_{j*}]$, $\sfw[a_{j'}] = 1 - \sfw[a_{j^*}]$,
with
$j^* = \argmax_{j \in \set{1, 2}} \ n(a_j) \log \sfq[a_j] + \left[T -
  n(a_j) \right] \log\big[ 1 - \sfq[a_j] \big]$.
\end{reptheorem}
\begin{proof}
  We seek a unigram distribution $\sfq_{\sfw}$ that is a solution of:
  $$\min_{\sfw \in \cW_1} \sup_{\bx \in \sC_T} \log\left( \frac{ \sfq[\bx]}{\sfq_{\sfw}[\bx]} \right).$$
  Since $\sC_T$ admits uniform weights, $\sfq[\bx]  = \frac{1}{|\sC_T|}$, and since $\sfq_{\sfw}$ is the distribution induced by a unigram model,
  $\log \sfq_{\sfw}[\bx]$ can be expressed as follows:
  $$\log \sfq_{\sfw}[\bx] = n_\bx(a_1) \log p(a_1) + \left[T - n_\bx(a_1) \right] \log\left( 1 - p(a_1) \right),$$
  where $p(a_j)$ is the automaton's weight on transitions labeled with $a_j$ and $n_\bx(a_j)$ is the count of $a_j$ in the sequence $\bx$. Thus, the optimization problem is equivalent to the following problem:
  $$ - \max_{p(a_1) \in [0,1]} \min_{\bx \in \sC_T}  n_\bx(a_1) \log p(a_1) + \left[T - n_\bx(a_1) \right] \log\left( 1 - p(a_1) \right).$$
  Denote the objective by $F(p(a_1), n_\bx(a_1))$. Then, the partial derivatives with respect to the label counts are given by
  \begin{align*}
    \frac{\partial F}{\partial n_\bx(a_1)} 
    & = \log p(a_1) - \log\Big( 1 - p(a_1) \Big).
  \end{align*}
  Thus, $\frac{\partial F}{\partial n_\bx(a_1)} \geq 0$ if and only if $p(a_1) \geq 1 - p(a_1)$. Furthermore, if $p(a_1) \geq 1 - p(a_1)$,
  then the sequence $\bx$ chosen in the optimization problem is the sequence with the minimal count of symbol $a_1$.
  Similarly, if $p(a_2) \geq 1 - p(a_2)$, then the sequence $\bx$ chosen in the optimization
  problem is the one with minimal count of $a_2$.

  Since we have either $p(a_1) \geq p(a_2)$ or vice versa (potentially both), we can write the optimization problem as:
  \begin{align*}
    -\max_{k \in \{1,2\}} \max_{\stackrel{p(a_j) \geq 1 - p(a_j)}{j\neq k}} \min_{\{n_\bx(a_j)\}_{j\neq k}\colon \bx \in \sC_T}  n_\bx(a_j) \log p(a_j) 
     + \left[T - n_\bx(a_j) \right] \log\left( 1 - p(a_j) \right).
  \end{align*}
  Given $k \in \{1,2\}$, let $\bx(k)$ be the sequence that minimizes $n_\bx(a_j)$ over all $\bx$ for $j \neq k$.
  Denote these counts $n_{\bx(k)}(a_j)$ by $n(a_j)$. Then we can rewrite the objective as:
  \begin{align*}
    -\max_{k=1,2,\ldots, N} \max_{\stackrel{p(a_j) \geq 1 - p(a_j)}{j\neq k}} n(a_j) \log p(a_j) 
     + \left[T - n(a_j) \right] \log\left( 1 - p(a_j) \right).
  \end{align*}
  Denote the objective for this new term by $\tilde{F}_k$, which is a function of $p(a_j)$. The partial derivative of $\tilde{F}_k$ with respect to $p(a_j)$ is:
  \begin{align*}
    \frac{\partial \tilde{F}_k}{\partial p(a_j)}
    & = \frac{n(a_j)}{p(a_j)} - \frac{T - n(a_j)}{1 - p(a_j)}, 
  \end{align*}
  which is equal to $0$ if and only if 
  \begin{align*}
    p(a_j) 
    & = \frac{n(a_j)}{T - n(a_j)} \left( 1 - p(a_j) \right) 
    = \max\left\{1, \frac{n(a_j)}{T - n(a_j)}\right\} \left( 1 - p(a_j) \right). 
  \end{align*}
  The last equality follows from our assumption that $p(a_j) \geq 1 - p(a_j)$. Now, let $\sfq(a_j)$ denote the probabilities that we have just computed. 
  Then, we can write the optimization problem of $\tilde{F}_k$ as:
  \begin{align*}
    -\max_{k \in \{1,2\}, j \in \{1,2\} \setminus \{k\}} n(a_j) \log \sfq(a_j) 
    + \left[T - n(a_j) \right] \log\left( 1 - \sfq(a_j) \right).
  \end{align*}
\end{proof}
\ignore{
Theorem~\ref{th:inftyrd1gram} shows that the solutions of the
$\infty$-R\'enyi divergence optimization are based on the $n$-gram
counts of sequences in $\sC_T$ with ``high entropy''. This can be
very different from the maximum likelihood solutions, which are based
on the average $n$-gram counts.  For instance, suppose we are under
the assumptions of Theorem~\ref{th:inftyrd1gram}, and specifically,
assume that there are $T$ sequences in $\sC_T$. Assume that one of
the sequences has $\left(\frac{1}{2} + \gamma\right)T$ occurrences of
$a_1$ for some small $\gamma > 0$ and that the other $T-1$ sequences
have $T-1$ occurrences of $a_1$. Then,
$n(a_1) = \left(\frac{1}{2} + \gamma\right)T$, and the solution
of the $\infty$-R\'enyi divergence optimization problem is given by
$\sfq_\infty(a_1) = \frac{1 + 2\gamma}{2}$ and
$\sfq_\infty(a_2) = \frac{1 - 2\gamma}{2}$. On the other hand,
the maximum-likelihood solution would be
$\sfq_1(a_1) = 1 + \frac{\gamma}{T} - \frac{3}{2T} + \frac{1}{T^2} \approx 1$
and $\sfq_1(a_2) = \frac{3}{2T} - \frac{\gamma}{T} - \frac{1}{T^2} \approx 0$
for large $T$.
}

\newpage
\section{Maximum likelihood $n$-gram models}
\label{app:mlapprox}

\begin{reptheorem}{th:bigramkshift}
Let $\sC_T$ be the $k$-shifting automaton for some $k$.  Then, the
bigram model $\sfw_2$ obtained by minimizing relative entropy is
defined for all $a_1, a_2 \in \Sigma$ by
\begin{equation*}
  \sfq_{\sfw_2}[a_1a_2]  = \frac{1}{N}\Big[ 1 - \frac{k}{(T - 1)}\Big] \, 1_{a_1
    = a_2} + \frac{1}{N} \Big[ \frac{k}{(T - 1)(N - 1)} \Big] \, 1_{a_1 \neq a_2}.
\end{equation*}
Moreover, its approximation error can be bounded by a constant
(independent of $T$):
\begin{equation*}
    D_\infty(\sfq \| \sfq_{\sfw_2}) \leq  - \log \big[ 1 - 2e^{-\frac{1}{12k}} \big].
\end{equation*}
\end{reptheorem}
\begin{proof}
  Let $a_1, a_2 \in \Sigma$. Then, we can write
\begin{align*}
     \sfq_{\sfw_2}[a_2 | a_1] 
     & = \sfq_{\sfw_2}[a_2 | a_1, a_2 = a_1]\, \sfq_{\sfw_2}[a_2 = a_1] + \sfq_{\sfw_2}[a_2 | a_1, a_2 \neq a_1]\, \sfq_{\sfw_2}[a_2 \neq a_1].
\end{align*}
Consider first the case where $a_2 = a_1$. Then,
$\sfq_{\sfw_2}[a_2 | a_1, a_2 = a_1] = 1$, and
$\sfq_{\sfw_2}[a_2 = a_1]$ is the expected number of times that we see
label $a_2$ agreeing with label $a_1$.  Since $\sfq$ is uniform for
the $k$-shifting automaton, the expected counts are pure counts, and
the probability that we see two consecutive labels agreeing is
$1 - \frac{k}{T-1}$.  Now, consider the case where $a_2 \neq a_1$. By
symmetry, $\sfq_{\sfw_2}[a_2 | a_1, a_2 \neq a_1] = \frac{1}{N-1}$,
since $a_2$ is equally likely to be any of the other $N-1$
labels. Moreover, $\sfq_{\sfw_2}[a_2 \neq a_1] = \frac{k}{T-1}$.
Thus, the following holds:
\begin{align*}
    \sfq_{\sfw_2}[a_2 | a_1]
    & = \frac{1}{N-1}\frac{k}{T-1} 1_{a_1 \neq a_2}  +  \left[ 1 - \frac{k}{T-1} \right] 1_{a_1 = a_2}. 
\end{align*}
By symmetry, we can write $\sfq_{\sfw_2}[a_1] = \frac{1}{N}$,
therefore,
\begin{equation*}
\sfq_{\sfw_2}[a_1a_2] = \sfq_{\sfw_2}[a_2| a_1] \sfq_{\sfw_2}[a_1] = \frac{k}{N(N-1)(T-1)} 1_{a_1 \neq a_2}  +  \left[ \frac{T-1-k}{N(T-1)} \right] 1_{a_1 = a_2}.
\end{equation*}
Since the $k$-shifting automaton has uniform weights and
$\sfq_{\sfw_2}$ is uniform on $\sC_T$, we can write for any string $\bx$
accepted by $\sC_T$:
  \begin{align*}
    \log\bigg[ \frac{\sfq[\bx]}{\sfq_{\sfw_2}[\bx]}\bigg]
    &=  \log\left[ \frac{1}{\sfq_{\sfw_2}[\bx]}\right] - \log(|\sC_T|) \\
    & = \log\left[ \frac{1}{\sfq_{\sfw_2}[\bz=  \bx | \bz \in \sC_T]
      \sfq_{\sfw_2}[\bz \in \sC_T] + \sfq_{\sfw_2}[\bz = \bx | \bz \notin
      \sC_T] \sfq_{\sfw_2}[\bz \notin \sC_T]}\right] - \log(|\sC_T|)\\
    & = \log\left[ \frac{1}{\frac{1}{|\sC_T|} \sfq_{\sfw_2}[\bz \in \sC_T] + \sfq_{\sfw_2}[\bz = \bx | \bz \notin \sC_T] \sfq_{\sfw_2}[\bz \notin \sC_T]}\right]  - \log(|\sC_T|)\\
    & \leq \log\left[ \frac{|\sC_T|}{\sfq_{\sfw_2}[\bz \in \sC_T]} \right]  - \log(|\sC_T|) 
    = \log\left[ \frac{1}{\sfq_{\sfw_2}[\bz \in \sC_T]} \right]. 
  \end{align*}
  The probability that a string $\bz$ is accepted by $\sC_T$ (under the
  distribution $\sfq_{\sA_2}$) is equal to the probability that it
  admits exactly $k$ shifts. Let $\xi_t = 1_{\{\text{$\bz$ shifts from
      $t-1$ to $t$}\}}$ be a random variable indicating whether there
  is a shift at the $t$-th symbol in sequence $\bz$. This is a Bernoulli
  random variable bounded by $1$ with mean $\frac{k}{T-1}$ and
  variance $\frac{k}{T-1} ( 1- \frac{k}{T-1})$.  Since each shift
  occurs with probability $\frac{k}{T-1}$, we can use Sanov's theorem
  %\citep{Sanov1957} 
to write the following bound:
  \begin{align*}
    \sfq_{\sfw_2}[\bz \notin \sC_T]
    & = \sfq_{\sfw_2}\left[ \left| \sum_{t=2}^T \xi_t - k \right| > \frac{1}{2} \right] 
    \leq 2 e^{-(T-1) u}, 
  \end{align*}
  where
  $u = (T-1) \displaystyle\min\left\{ D\left(\frac{k +
        \frac{1}{2}}{T-1} \bigg \| \frac{k}{T-1}\right),
    D\left(\frac{k - \frac{1}{2}}{T-1} \bigg \| \frac{k}{T-1}\right)
  \right\}$.  We now give lower bounds on the relative entropy terms
  arguments of the minimum operator.  For the first term, using the
  inequalities $\log(1 + x) \geq \frac{x}{1 + \frac{x}{2}}$ and
  $\log(1 + x) < x$, we can write
\begin{align*}
    & - D\left(\frac{k + \frac{1}{2}}{T-1} \bigg\| \frac{k}{T-1}\right) \\ 
\ignore{
    & = - D\left(\frac{k}{T-1}\left(1 + \frac{1}{2k}\right) \bigg \| \frac{k}{T-1}\right) \\
    & = \bigg(1 + \frac{1}{2k} \bigg) \frac{k}{T-1} \log
      \frac{\frac{k}{T-1}}{\left(1 + \frac{1}{2k}
      \right)\frac{k}{T-1}} + \left(1 - \left(1 + \frac{1}{2k} \right)
      \frac{k}{T-1}\right) \log \frac{1 - \frac{k}{T-1}}{1 - \left(1 +
      \frac{1}{2k} \right) \frac{k}{T-1}} \\
}
    & = \left(1 + \frac{1}{2k} \right) \frac{k}{T-1} \log \frac{1}{1 + \frac{1}{2k} } + \left( 1 - \frac{k}{T-1} - \frac{1}{2k} \frac{k}{T-1}\right) \log\left(1 + \frac{\frac{1}{2k} \frac{k}{T-1}}{1 - \frac{k}{T-1} - \frac{1}{2k} \frac{k}{T-1}} \right) \\
    & \leq \left(1 + \frac{1}{2k} \right) \frac{k}{T-1} \frac{-\frac{1}{2k}}{1 + \frac{1}{4k}} + \left( 1 - \frac{k}{T-1} - \frac{1}{2k} \frac{k}{T-1} \right) \frac{\frac{1}{2k} \frac{k}{T-1}}{1 - \frac{k}{T-1} - \frac{1}{2k} \frac{k}{T-1}} \\
    & = \frac{1}{2k} \frac{k}{T-1} \left( 1 - \frac{ 1 + \frac{1}{2k} }{1 + \frac{1}{4k}} \right) 
    = \frac{ - \frac{1}{8k^2} \frac{k}{T-1}}{1 + \frac{1}{4k}} 
    = \frac{ - \frac{1}{4k^2} \frac{k}{T-1}}{2 + \frac{1}{4k}} 
    \leq  - \frac{1}{12 k(T-1)}.
  \end{align*}
  Similarly, we can write:
  \begin{align*}
    & -D \left( \left(1 - \frac{1}{2k} \right) \frac{k}{T-1} \bigg \| \frac{k}{T-1} \right) \\
\ignore{
    & = \left(1 - \frac{1}{2k} \right) \frac{k}{T-1} \log
      \frac{\frac{k}{T-1}}{\left(1 - \frac{1}{2k} \right)
      \frac{k}{T-1}} + \left(1 - \left(1 - \frac{1}{2k}
      \right)\frac{k}{T-1}\right) \log \frac{1 - \frac{k}{T-1}}{1 -
      \left(1 - \frac{1}{2k} \right)\frac{k}{T-1}} \\
}
    & = \left(1 - \frac{1}{2k} \right) \frac{k}{T-1} \log \frac{1}{1 - \frac{1}{2k}} + \left[ 1- \frac{k}{T-1} + \frac{1}{2k} \frac{k}{T-1} \right] \log \left[1 - \frac{\frac{1}{2k}\frac{k}{T-1} }{1 - \frac{k}{T-1} + \frac{1}{2k} \frac{k}{T-1}} \right] \\
    & \leq \left[ \frac{1}{2k} - \frac{1}{8k^2} \right] \frac{k}{T-1}
      + \left[ 1 - \frac{k}{T-1} + \frac{1}{2k} \frac{k}{T-1} \right]
      \frac{-\frac{1}{2k} \frac{k}{T-1}}{1 - \frac{k}{T-1} +
      \frac{1}{2k} \frac{k}{T-1}} \\
    & = -\frac{ \frac{1}{4k^2} \frac{k}{T-1} }{2} = -\frac{1}{8k (T-1)}.
\end{align*}
Using these inequalities, we can further bound the approximation error
in the regret bound by:
\begin{align*}
    \log\left[ \frac{1}{\sfq_{\sfw_2}[\bz \in \sC_T]} \right]
    & \leq \log\left[ \frac{1}{1 - 2e^{-\frac{1}{12k}}} \right]
    = - \log \left( 1 - 2e^{-\frac{1}{12k}} \right),
\end{align*}
which completes the proof.
\end{proof}

\ignore{
To the best of our knowledge, this is the first framework that
motivates the design of \textsc{Fixed-Share} with a focus on
minimizing tracking regret. Other works that have recovered
\textsc{Fixed-Share} (e.g. \citep{CesaBianchiGaillardLugosiStoltz2012,
  KoolenDeRooij2013, GyorgySzepesvari2016}) have generally viewed the
algorithm itself as the main focus.

% As already pointed out, the bigram approximation of the $k$-shifting
% automaton has a per-iteration computational cost of $\cO(|\Sigma|^2)$
% transitions. At each time $t$, every state of the form $(\cdot, t)$
% corresponds to one of the labels in $\Sigma$, and admits a transition
% to a state in $(\cdot, t+1)$ corresponding to the same label as well
% as $|\Sigma|-1$ transitions to states corresponding to other labels.
% In Section~\ref{app:phiaut}, we introduce the notion of
% $\phi$-transition and design algorithms that can reduce the size of an
% automaton while preserving its language and weights.  Using our new
% algorithm \textsc{$\phi$-Convert} on $\sA_2 \cap \sS_T$, we can
% significantly reduce the number of transitions so that there are only
% $\cO(|\Sigma|)$ states and transitions reachable at every time. See
% Figure~\ref{fig:bigramkshift} for an illustration.  The computational
% cost of using \textsc{$\phi$-AWM} with this new $\phi$-automaton
% coincides with the one described originally in
% \citep{HerbsterWarmuth1998}.

Our derivation of \textsc{Fixed-Share} also allows us to naturally
generalize the setting of standard $k$-shifting experts to
$k$-shifting experts with non-uniform weights. Specifically, consider
the case where $\sC_T$ is an automaton accepting up to $k$-shifts but
where the shifts now occur with probability
$\sfq[a_2 | a_1, a_1 \neq a_2] \neq \frac{1}{N-1} 1_{\{a_2 \neq
  a_1\}}$.  Since the bigram approximation will remain exact on
$\sC_T$, we recover the exact same guarantee as in
Theorem~\ref{th:bigramkshift}.

The proof technique of Theorem~\ref{th:bigramkshift} is illustrative
because it reveals that the maximum likelihood $n$-gram model has low
approximation error whenever (1) the model's distribution is
proportional to the distribution of $\sC_T$ on $\sC_T$'s support and
(2) most of the model's mass lies on the support of $\sC_T$. When the
automaton $\sC_T$ has uniform weights, then condition (1) is satisfied
when the $n$-gram model is uniform on $\sC_T$. This is true whenever
all sequences in $\sC_T$ have the same set of $n$-gram counts, and
every permutation of symbols over these counts is a sequence that lies
in $\sC_T$, which is the case for the $k$-shifting
automaton. Condition (2) is satisfied when $n$ is large enough, which
necessarily exists since the distribution is exact for $n = T$. On the
other hand, note that a unigram approximation would have satisfied
condition (1) but not condition (2) for the $k$-shifting automaton.

}

\newpage
\section{Time-independent approximation of competitor automata}
\label{app:timeindepapprox}
In the previous sections, we have introduced the technique of approximating
the automaton accepting competitor sequences of length $T$, $\sC_T$.
Intersecting $\sC$ with $\sS_T$ for different $T$ typically results in
different approximation automata.  Since each approximation requires
solving a convex optimization problem, this can become computationally
expensive.

\begin{figure}[t]
  \centering
  \includegraphics[scale=0.6]{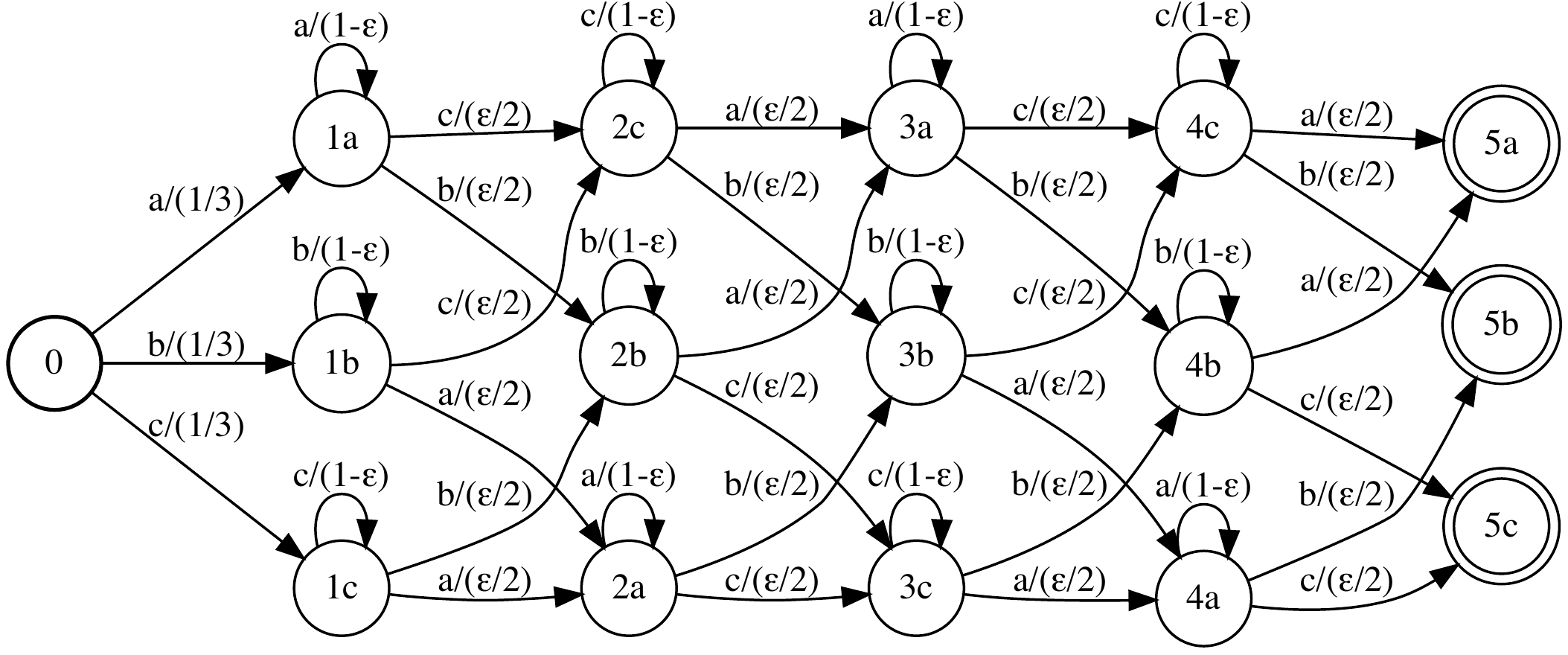} 
  \caption{Illustration of $\sC_{\text{$k$-shift},\e}$, where $k=4$ and $\Sigma=\{a,b,c\}$. 
  This automaton accepts an infinite number of paths, and the weights have been
  rescaled from the original $k$-shifting automaton so that the path weights 
  are now summable.}  \label{fig:kshifteps}
\end{figure}

In this section, we show how one can approximate the competitor set
for different $T$ using a single approximation. The key is to
approximate the original automaton $\sC$ directly. Specifically,
assume first that $\sC$ is a stochastic automaton (so that its
outgoing transition weights at each state sum to $1$), and let $\cC$
be a family of distributions over $\Sigma^*$ that we will use to
approximate $\sC$. Given, $\widehat{\sC} \in \cC$, define for every
$\bx \in \Sigma^*$
\[
  \tilde{\sfq}_{\widehat{\sC}}[\bx] = 
  \begin{cases}
    \widehat{\sC}[\bx] \frac{\sC[\sS_{|\bx|}]}{\widehat{\sC}[\sS_{|\bx|}]} & \text{ if } \widehat{\sC}[\sS_{|\bx|}] > 0 \\
    0 & \text{ otherwise }
  \end{cases} 
\]
% \[
%   \tilde{\sfp}_\sA[\bx] = 
%   \begin{cases}
%     \sfp_\sA[\bx] \frac{\sfq_\sC[\Sigma^{\leq |\bx|}]}{\sfp_{\sA}[\Sigma^{\leq |\bx|}]} & \text{ if } \sfp_{\sA}[\Sigma^{\leq |\bx|}] > 0 \\
%     0 & \text{ otherwise }
%   \end{cases} 
% \]
Thus, $\tilde{\sfq}_{\widehat{\sC}}$ is a 
rescaling of $\widehat{\sC}$ based on the mass assigned by $\sC$ to sequences
of length equal to $|\bx|$.
Note that $\tilde{\sfq}_{\widehat{\sC}}$ may not necessarily be a 
distribution.  
Our algorithm consists of determining the 
best approximation to the competitor distribution  
$\sC$ within the family of rescaled distributions: 
\begin{align}
  \min_{\widehat{\sC}\in \cC} D_\infty(\sfq_\sC \| \tilde{\sfq}_{\widehat{\sC}}).
\end{align}
Note that this is an implicit extension of the definition of 
$\infty$-R\'{e}nyi divergence,
since $\tilde{\sfq}_{\widehat{\sfq}}$ may not be a distribution. 

The design of this optimization problem is motivated by the following result,
which guarantees that if $\tilde{\sfq}_{\widehat{\sC}}$ is a good approximation of $\sC$,
then $\sfq_{\widehat{\sC} \cap \sS_T}$ will be a good approximation
of $\sC \cap \sS_T$ for any $T$.

\begin{theorem}
  For any stochastic automata $\sC$ and $\widehat{\sC}$, and for any $T \geq 1$, 
  $$D_\infty(\sfq \| \sfq_{\widehat{\sC} \cap \sS_T}) \leq D_\infty(\sfq_\sC \| \tilde{\sfp}_{\widehat{\sC}}).$$ 
\end{theorem}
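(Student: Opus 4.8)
\medskip
\noindent\textbf{Proof proposal.}\quad The plan is to exploit that the rescaling in the definition of $\tilde{\sfq}_{\widehat{\sC}}$ was engineered precisely so that, once restricted to sequences of length $T$, the density ratio $\sfq_\sC[\bx]/\tilde{\sfq}_{\widehat{\sC}}[\bx]$ coincides with the ratio $\sfq[\bx]/\sfq_{\widehat{\sC}\cap\sS_T}[\bx]$ that governs the left-hand side. Once this is established, the inequality is immediate: $D_\infty(\sfq \| \sfq_{\widehat{\sC}\cap\sS_T})$ is a supremum of log-ratios over $\Sigma^T$, whereas $D_\infty(\sfq_\sC \| \tilde{\sfq}_{\widehat{\sC}})$ is the same supremum over the larger set $\Sigma^*\supseteq\Sigma^T$, so the former cannot exceed the latter.

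Concretely, write $\sC[\sS_n]$ for the total mass $\sum_{\bz\in\Sigma^n}\sC[\bz]$ that $\sC$ assigns to sequences of length $n$, and likewise for $\widehat{\sC}$; the statement is only meaningful when $\sC[\sS_T]>0$ and $\widehat{\sC}[\sS_T]>0$, since otherwise $\sfq$ or $\sfq_{\widehat{\sC}\cap\sS_T}$ is undefined and there is nothing to prove. First I would record the normalizations: for $\bx\in\Sigma^T$ we have $\sfq[\bx]=\sC[\bx]/\sC[\sS_T]$ and, using $(\widehat{\sC}\cap\sS_T)[\bx]=\widehat{\sC}[\bx]$ when $|\bx|=T$, also $\sfq_{\widehat{\sC}\cap\sS_T}[\bx]=\widehat{\sC}[\bx]/\widehat{\sC}[\sS_T]$. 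Next I would observe that the definition of $\tilde{\sfq}_{\widehat{\sC}}$ gives, for the same $\bx$,
\begin{equation*}
\tilde{\sfq}_{\widehat{\sC}}[\bx]=\widehat{\sC}[\bx]\,\frac{\sC[\sS_T]}{\widehat{\sC}[\sS_T]}=\sC[\sS_T]\,\sfq_{\widehat{\sC}\cap\sS_T}[\bx].
\end{equation*}
Combining these two facts, and writing $\sfq_\sC[\bx]=\sC[\bx]$ for the distribution over $\Sigma^*$ induced by $\sC$,
\begin{equation*}
\frac{\sfq_\sC[\bx]}{\tilde{\sfq}_{\widehat{\sC}}[\bx]}=\frac{\sC[\bx]}{\sC[\sS_T]\,\sfq_{\widehat{\sC}\cap\sS_T}[\bx]}=\frac{\sfq[\bx]}{\sfq_{\widehat{\sC}\cap\sS_T}[\bx]}\qquad\text{for every }\bx\in\Sigma^T.
\end{equation*}
Taking logarithms and suprema then yields
\begin{equation*}
D_\infty(\sfq \| \sfq_{\widehat{\sC}\cap\sS_T})=\sup_{\bx\in\Sigma^T}\log\frac{\sfq_\sC[\bx]}{\tilde{\sfq}_{\widehat{\sC}}[\bx]}\le\sup_{\bx\in\Sigma^*}\log\frac{\sfq_\sC[\bx]}{\tilde{\sfq}_{\widehat{\sC}}[\bx]}=D_\infty(\sfq_\sC \| \tilde{\sfq}_{\widehat{\sC}}),
\end{equation*}
which is the claim.

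I do not expect a genuine obstacle here: the argument is essentially bookkeeping, the one nontrivial idea being to notice that the rescaling factor $\sC[\sS_T]/\widehat{\sC}[\sS_T]$ is exactly the ratio of normalizing constants. Two notational points do deserve care, however. First, $\tilde{\sfq}_{\widehat{\sC}}$ need not be a probability distribution, so $D_\infty$ must be read in the extended sense as a plain supremum of log-ratios; this is exactly the reading under which the ``supremum over a superset'' step is legitimate, and it is the sense in which the optimization problem preceding the theorem was posed. Second, if one does not take $\sC$ itself to be normalized over $\Sigma^*$ (as the members of $\cC$ are) and instead renormalizes $\sfq_\sC$, an extra term $\log\sum_{\bx\in\Sigma^*}\sC[\bx]$ appears on the right-hand side of the last display; under the stochasticity hypothesis this sum is at most one, so the extra term is nonpositive and the bound is only strengthened.
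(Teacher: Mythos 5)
Your proposal is correct and follows essentially the same route as the paper's proof: both establish that the rescaling factor $\sC[\sS_T]/\widehat{\sC}[\sS_T]$ makes the pointwise log-ratios $\log(\sfq[\bx]/\sfq_{\widehat{\sC}\cap\sS_T}[\bx])$ and $\log(\sfq_\sC[\bx]/\tilde{\sfq}_{\widehat{\sC}}[\bx])$ coincide on $\Sigma^T$, and then conclude by taking the supremum over the larger set $\Sigma^*$. The only cosmetic difference is that the paper treats the degenerate case $\widehat{\sC}[\sS_T]=0$ explicitly (the right-hand side is then $+\infty$, so the bound holds vacuously) rather than declaring it out of scope as you do.
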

\begin{proof}
  Let $\bx \in \sC_T = \sC \cap \sS_T$ such that $(\sC \cap \sS_T)[\bx] > 0$.
  Since $(\sC \cap \sS_T)[\bx] = \frac{\sC[\bx]}{\sC[\sS_T]}$, this implies that 
  $\sC[\sS_T] \geq \sC[\bx] > 0$. Thus, if $\sfq_{\widehat{\sC}}[\sS_T] > 0$, then 
  \begin{align*}
    \log\left(\frac{(\sC \cap \sS_T)[\bx]}{ \sfq_{\widehat{\sC} \cap \sS_T}[\bx] }\right)
    & = \log\left(\frac{\sC[\bx] \widehat{\sC}[\sS_T]}{ \sC[\sS_T] \widehat{\sC}[\bx]} \right)
    =  \log\left(\frac{\sC[\bx] }{ \tilde{\sfq}_{\widehat{\sC}}[\bx]} \right).
  \end{align*}
  On the other hand, if $\widehat{\sC}(\sS_T) = 0$, then, by definition,
  $\tilde{\sfq}_{\widehat{\sC}}[\bx] = 0$, therefore the following inequality holds 
  \begin{align*}
    \log\left(\frac{(\sC \cap \sS_T)[\bx]}{ \sfq_{\widehat{\sC}\cap \sS_T}[\bx] }\right)
    &\leq \infty
    = \log\left(\frac{\sC[\bx] }{ \tilde{\sfq}_{\widehat{\sC}}[\bx]} \right).
  \end{align*}
  The result now follows by taking the maximum over $\bx \in \sS_T$ on the left-hand side
  and the maximum over $\bx \in \Sigma^*$ on the right-hand side.
\end{proof}

Note that, for $n$-gram approximations, $\sfq_{\sfw} \in \cW_n$, the condition
$\sfq_{\sfw}(\sS_T) = 1$ always holds. Thus, the approximation optimization problem
can be written as: 
\begin{align*}
  \min_{\sfw \in \cW_n} D_\infty(\sfq_\sC \| \tilde{\sfq}_{\sfw})
  & = \min_{\sfw \in \cW_n} \sup_{\bx \in \sC} \log\left( \frac{\sfq_\sC[\bx]}{\sfq_{\sfw}[\bx]\sfq_\sC[\sS_{|\bx|}]}\right). 
\end{align*}
As in Section~\ref{sec:min-Renyi}, this problem is the minimization of the 
supremum of a family of convex functions over the product of simplices. Thus, 
it is a 
convex optimization problem and can be solved using the \textsc{Prod-EG} algorithm. 

We have thus far assumed that $\sC$ is a stochastic automaton in this 
section. If 
the sum of the weights of all paths accepted by $\sC$ is finite, we can
apply weight-pushing to normalize the automaton to make it stochastic 
and then solve the approximation problem above. 

However, this property may not always hold. For example, the original $k$-shifting
automaton shown in Figure~\ref{fig:kshift} accepts an infinite number of paths (sequences of
arbitrary length with $k$ shifts). Since each transition has unit weight, 
each path also has unit weight, and the sum of the weight of all paths is infinite.

However, we can still apply the approximation method in this section to the $k$-shifting
automaton by rescaling the transitions weights of self-loops to be less than $1$.
Specifically, consider the automaton $\sC_{\text{$k$-shift}, \e}$
whose states and transitions are exactly the same as those of the original
automaton $\sC_{\text{$k$-shift}}$, except that transitions
from $ta_j$ to $(t+1)a_{k}$ for $a_j \neq a_{k}$ now have weight 
$\frac{\e}{N+1}$, and self-loops now have weight $1-\e$. To make the automaton
stochastic, we also assign weight $\frac{1}{N}$ to every initial state.
Then, the weight of a sequence of length $T$ accepted by $\sC_{\text{$k$-shift},\e}$
is $(1-\e)^{T-k-1} \left(\frac{\e}{N-1}\right)^k \frac{1}{N},$
and the weight of all sequences is finite.
% \begin{align*}
%   \sum_{\bx \in \sC_{\text{$k$-shift},\e}} w_{\sC_{\text{$k$-shift},\e}}[\bx]
%   & = \sum_{T\geq k+1} N {T-1 \choose k} (N-1)^k (1-\e)^{T-k-1} \left(\frac{\e}{N-1}\right)^k \frac{1}{N}\\
%   % & = \sum_{T\geq k+1} {T-1 \choose k} (1-\e)^{T-k-1} \e^k \\
%   &\leq \sum_{T\geq k+1} \left(\frac{(T-1)e}{k} \right)^k (1-\e)^{T-k-1} \e^k 
%   < + \infty.
%   % &< +\infty.
% \end{align*}

By normalizing the weights of this automaton, we can convert it into a 
stochastic automaton, where 
$$\sfq_{\sC_{\text{$k$-shift},\e}}[\bx] \propto   (1-\e)^{|\bx|-k-1} \left(\frac{\e}{N-1}\right)^k \frac{1}{N}.$$
Figure~\ref{fig:kshifteps} shows the weighted automaton $\sC_{\text{$k$-shift},\e}$.

To compare with the results in Section~\ref{sec:approx}, we will now 
analyze the approximation error of a maximum-likelihood-based 
bigram approximation. 

\begin{theorem}[Bigram approximation of $\sC_{\text{$k$-shift},\e}$]
  \label{th:bigramkshifteps}
  The maximum-likelihood based bigram model for $\sC_{\text{$k$-shift},\e}$ is
  defined by 
  $$\sfq_{\sfw_2}[z_2 | z_1] = \frac{ \sum_{\tilde{T} \geq k +1 }  (1-\e)^{\tilde{T}-k-1}
  \left(1_{z_1 \neq z_2} \frac{k}{\tilde{T}-1} \frac{1}{N-1} + 1_{z_1 = z_2} \left( 1- \frac{k}{\tilde{T}-1}\right) \right)
  }{ \sum_{\tilde{T} \geq k + 1} (1-\e)^{\tilde{T}-k-1}}.$$
  Moreover, for every $T > k + 1$, there exists $\e \in (0,1)$ such that
  $$D_\infty(\sfq \| \sfq_{\sfw_2}) \leq - \log \left( 1 - 2e^{-\frac{1}{12k}} \right).$$
\end{theorem}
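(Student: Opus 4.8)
\emph{Strategy.} I would run the proof in parallel with that of Theorem~\ref{th:bigramkshift}; the only genuinely new ingredient is an intermediate-value choice of the parameter $\e$ that makes the present setting coincide, on sequences of length $T$, with the $k$-shifting setting already analyzed there.

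\emph{The bigram formula.} First I would note that conditioned on a fixed length $\tilde T$, every path of $\sC_{\text{$k$-shift},\e}$ accepted at that length carries the same weight $(1-\e)^{\tilde T-k-1}(\e/(N-1))^k/N$, so conditionally on length $\tilde T$ the automaton is the uniform $k$-shifting distribution over $\Sigma^{\tilde T}$, whose per-step stay and shift probabilities are $1-\tfrac{k}{\tilde T-1}$ and $\tfrac{k}{(\tilde T-1)(N-1)}$ exactly as computed inside the proof of Theorem~\ref{th:bigramkshift}. By the symmetry of $\sC_{\text{$k$-shift},\e}$ under permutations of $\Sigma$, $\sfq_{\sfw_2}[z_2\mid z_1]$ depends only on $1_{z_1=z_2}$ and $\sfq_{\sfw_2}[z_1]=1/N$; aggregating the per-length conditionals against the per-path length weights $(1-\e)^{\tilde T-k-1}$ then produces the stated formula. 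I would treat this essentially as the definition, since that is how the theorem phrases it, and spend no further effort on it.

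\emph{Reduction to Theorem~\ref{th:bigramkshift}.} Under $\sfq_{\sfw_2}$, a length-$T$ sequence is generated by a symmetric Markov chain with uniform start, so the shift indicators $\xi_2,\dots,\xi_T$ are i.i.d.\ $\mathrm{Bernoulli}(p(\e))$ with $p(\e)=1-\sfq_{\sfw_2}[z_1\mid z_1]=\dfrac{\sum_{\tilde T\ge k+1}(1-\e)^{\tilde T-k-1}\tfrac{k}{\tilde T-1}}{\sum_{\tilde T\ge k+1}(1-\e)^{\tilde T-k-1}}$; substituting $j=\tilde T-k-1$ and using $\sum_{j\ge0}(1-\e)^j=\e^{-1}$ gives the closed form $p(\e)=k\,\E_J\!\big[\tfrac{1}{k+J}\big]$ with $J$ geometric, $\Pr[J=j]=\e(1-\e)^j$. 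I would then check that $p$ is continuous on $(0,1)$ (Weierstrass $M$-test on compact subsets), that $p(\e)\ge\e$ so $p(\e)\to1$ as $\e\to1$, and that $p(\e)\to0$ as $\e\to0$ (the law of $J$ escapes to $\infty$ while $\tfrac{1}{k+J}\le\tfrac1k$ stays bounded, so dominated convergence applies). Since $T>k+1$ forces $\tfrac{k}{T-1}\in(0,1)$, the intermediate value theorem yields $\e^*\in(0,1)$ with $p(\e^*)=\tfrac{k}{T-1}$. For that $\e^*$ the conditionals become $\sfq_{\sfw_2}[z_1\mid z_1]=1-\tfrac{k}{T-1}$ and $\sfq_{\sfw_2}[z_2\mid z_1]=\tfrac{k}{(T-1)(N-1)}$ for $z_1\neq z_2$, so the distribution $\sfq_{\sfw_2}$ induces on $\Sigma^T$ is identical to the one induced by the bigram model $\sfw_2$ of Theorem~\ref{th:bigramkshift} for the $k$-shifting automaton $\sC_T$, and $\sfq$ is the uniform distribution on $\sC_T$ in both cases. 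Hence $D_\infty(\sfq\|\sfq_{\sfw_2})$ is literally the quantity bounded there, so it is at most $-\log\big[1-2e^{-1/(12k)}\big]$, which completes the proof.

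\emph{Main obstacle.} Once $\e^*$ is chosen, the conclusion is just a pointer to Theorem~\ref{th:bigramkshift} and the bigram-formula derivation is routine bookkeeping; the one step needing real care is verifying that $p(\e)$ sweeps through all of $(0,1)$ as $\e$ ranges over $(0,1)$ — in particular that $p(\e)\to0$ as $\e\to0$, which is exactly what lets the construction succeed for arbitrarily large $T$. As a mild secondary point, the constant $-\log[1-2e^{-1/(12k)}]$ is (as already in Theorem~\ref{th:bigramkshift}) only a formal bound for $k\ge1$, but it is inherited verbatim along with the Sanov/Chernoff estimate behind it and needs no new argument.
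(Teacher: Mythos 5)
Your proposal is correct and follows essentially the same route as the paper: derive the bigram conditionals from expected counts and symmetry, observe that under $\sfq_{\sfw_2}$ the shift indicators of a length-$T$ sequence are i.i.d.\ Bernoulli with mean $\bar{p}(\e)$, and use the intermediate value theorem to pick $\e^*$ with $\bar{p}(\e^*) = \frac{k}{T-1}$ so that the concentration bound of Theorem~\ref{th:bigramkshift} applies. The only differences are cosmetic: you invoke Theorem~\ref{th:bigramkshift} wholesale after matching parameters instead of re-running the paper's two-term deviation bound (which the exact choice of $\e^*$ renders unnecessary anyway), and your endpoint limits ($\bar{p}\to 1$ as $\e \to 1^-$, $\bar{p}\to 0$ as $\e \to 0^+$) are the correct orientation --- the paper states them in the opposite order, an apparent typo that does not affect the argument.
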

\begin{proof}
The maximum-likelihood $n$-gram automaton
is derived from the expected counts of the original automaton. Thus, 
for any $z_1, z_2 \in \Sigma$,
\begin{align*}
\sfq_{\sfw_2}[z_2 | z_1] 
  & = \frac{\sum_{\bx \in \sC_{\text{$k$-shift}}} (1-\e)^{|\bx|-k-1} \left(\frac{\e}{N-1}\right)^k 
  \frac{1}{N} \sum_{t=2}^{|\bx|} 1_{z_1^2 = x_{t-1}^t} }{\sum_{\bx \in \Sigma^*} 
  (1-\e)^{|\bx|-k-1} \left(\frac{\e}{N-1}\right)^k \frac{1}{N} \sum_{t=2}^{|\bx|} 
  1_{z_1 = x_{t-1}}} \\
  % & = \frac{\sum_{\bx \in \sC_{\text{$k$-shift}}} (1-\e)^{|\bx|-k-1} \sum_{t=2}^{|\bx|} 
  % 1_{z_1^2 = x_{t-1}^t} }{\sum_{\bx \in \Sigma^*} (1-\e)^{|\bx|-k-1}  
  % \sum_{t=2}^{|\bx|} 1_{z_1 = x_{t-1}}} \\
  & = \frac{\sum_{\tilde{T} \geq k +1 } \sum_{\bx \in \sC_{\text{$k$-shift},\tilde{T}}} (1-\e)^{\tilde{T}-k-1} \sum_{t=2}^{\tilde{T}} 
  1_{z_1^2 = x_{t-1}^t} }{\sum_{\tilde{T} \geq k + 1}\sum_{\bx \in  \sC_{\text{$k$-shift},\tilde{T}}} (1-\e)^{\tilde{T}-k-1}  
  \sum_{t=2}^{\tilde{T}} 1_{z_1 = x_{t-1}}} 
\end{align*}
Now, notice that for any $\tilde{T}$, 
\begin{align*}
  &\sum_{\bx \in \sC_{\text{$k$-shift},\tilde{T}}}  
  \sum_{t=2}^{\tilde{T}} 1_{z_1^2 = x_{t-1}^t} \\
  &\quad = \sum_{\bx \in \sC_{\text{$k$-shift},\tilde{T}}} 
  \sum_{t=2}^{\tilde{T}}  1_{z_1 = x_{t-1}} 
  \left(1_{z_1 \neq z_2} \frac{k}{(\tilde{T}-1)(N-1)}  + 1_{z_1 = z_2} \left( 1- \frac{k}{\tilde{T}-1}\right) \right).
\end{align*}
This allows us to rewrite the probability above as:
\begin{align*}
  \sfq_{\sfw_2}[z_2 | z_1]
  % & = \frac{\sum_{\tilde{T} \geq k +1 }  (1-\e)^{\tilde{T}-k-1}
  % \left(1_{z_1 \neq z_2} \frac{k}{\tilde{T}-1} \frac{1}{N-1} \right)
  % \sum_{\bx \in \sC_{\text{$k$-shift},\tilde{T}}} \sum_{t=2}^{\tilde{T}}  1_{z_1 = x_{t-1}}
  % }{
  %   \sum_{\tilde{T} \geq k + 1} (1-\e)^{\tilde{T}-k-1}  \sum_{\bx \in  \sC_{\text{$k$-shift},\tilde{T}}}   
  % \sum_{t=2}^{\tilde{T}} 1_{z_1 = x_{t-1}}}\\
  % &\quad + \frac{\sum_{\tilde{T} \geq k +1 }  (1-\e)^{\tilde{T}-k-1}
  % \left( 1_{z_1 = z_2} \left( 1- \frac{k}{\tilde{T}-1}\right) \right)
  % \sum_{\bx \in \sC_{\text{$k$-shift},\tilde{T}}} \sum_{t=2}^{\tilde{T}}  1_{z_1 = x_{t-1}}
  % }{
  %   \sum_{\tilde{T} \geq k + 1} (1-\e)^{\tilde{T}-k-1}  \sum_{\bx \in  \sC_{\text{$k$-shift},\tilde{T}}}   
  % \sum_{t=2}^{\tilde{T}} 1_{z_1 = x_{t-1}}}\\
  & = \frac{ \sum_{\tilde{T} \geq k +1 }  (1-\e)^{\tilde{T}-k-1}
  \left(1_{z_1 \neq z_2} \frac{k}{(\tilde{T}-1)(N-1)}  + 1_{z_1 = z_2} \left( 1- \frac{k}{\tilde{T}-1}\right) \right)
  }{ \sum_{\tilde{T} \geq k + 1} (1-\e)^{\tilde{T}-k-1}}.
\end{align*}
Thus, $\sfp_{\sA_2}[z_2 | z_1]$ depends only on the condition $z_1 \neq z_2$.

  Now, fix $T > k + 1$. Since for every $\bx \in \sC_{\text{$k$-shift}} \cap \sS_T = \sC_{\text{$k$-shift},T}$, $x$ has $k$ shifts and length $T$,
$\sfq_{\sfw_2}$ is uniform over all sequences in $\sC_{\text{$k$-shift},T}$. This allows
us to bound the $\infty$-R\'enyi divergence between $\sfq = \sfq_{\sC_{\text{$k$-shift},T}}$ and
$\sfq_{\sfw_2}$ by:
\begin{align*}
  &\sup_{\bx \in \sC_{\text{$k$-shift},T}} \log\left(\frac{\sfq_{\sC_{\text{$k$-shift},T}}[\bx] }{ \sfq_{\sfw_2}[\bx] } \right) \\
  & = \sup_{\bx \in \sC_{\text{$k$-shift},T}} \log\left(\frac{\sfq_{\sC_{\text{$k$-shift},T}}[\bx] }{ \sfq_{\sfw_2}[\xi  = x|\xi \in \sC_T] \sfq_{\sfw_2}[\xi \in \sC_T]
  + \sfq_{\sfw_2}[\xi = x | \xi \notin \sC_T]\sfq_{\sfw_2}[\xi \notin \sC_T]} \right) \\
  % &\leq \sup_{\bx \in \sC_{\text{$k$-shift},T}} \log\left(\frac{\frac{1}{|\sC_T|} }{\frac{1}{|\sC_T|} \sfq_{\sfw_2}[\xi \in \sC_T]} \right) \\
  % & = \sup_{\bx \in \sC_{\text{$k$-shift},T}} \log\left(\frac{1 }{ \sfq_{\sfw_2}[\xi \in \sC_T]} \right).
  & \leq \sup_{\bx \in \sC_{\text{$k$-shift},T}} \log\left(\frac{1 }{ \sfq_{\sfw_2}[\xi \in \sC_T]} \right).
\end{align*}
If we now let $(\xi_t)_{t=2}^T$ denote i.i.d. Bernoulli random variables with
mean 
  $$\bar{p}(\e) = \frac{\sum_{\tilde{T}\geq k+1} (1-\e)^{\tilde{T}-k-1} \frac{k}{\tilde{T}-1}}{\sum_{\tilde{T}\geq k+1} (1-\e)^{\tilde{T}-k-1}},$$ 
then
\begin{align*}
\sfq_{\sfw_2}[\xi \notin \sC_T] 
  % & = \bbP\left[ \left| \sum_{t=2}^T \xi_t - k \right| \geq 1 \right] \\
  &\leq \bbP\left[ \left| \sum_{t=2}^T \xi_t - \bar{p}(\e)(T-1) \right| \geq \frac{1}{2} \right] + \bbP\left[ \left|  \bar{p}(\e)(T-1) - k \right| \geq \frac{1}{2} \right]. 
\end{align*}
Thus, if $|\bar{p}(\e)(T-1) - k | < \frac{1}{2}$, then $\sfq_{\sfw_2}[\xi \notin \sC_T]$
can be bounded using the same concentration argument as in Theorem~\ref{th:bigramkshift}.

$\bar{p}(\e)$ can be interpreted as the weighted average of $\frac{k}{\tilde{T}-1}$
for $\tilde{T} \geq k +1$, where the weight of $\frac{k}{\tilde{T}-1}$ is 
$(1-\e)^{\tilde{T}-k-1}$. We want this average to be close to $\frac{k}{T-1}$
for the specific choice of $T > k +1$, which we obtain by appropriately
tuning $\e \in (0,1)$.

Since $\lim_{\e \to 0^+} \bar{p}(\e)= 1$,  $\lim_{\e \to 1^-} \bar{p}(\e) = 0$
and $\bar{p}(\e)$ is continuous in $\e$ on $(0,1)$, it follows by the intermediate
value theorem that for any $T > k+1$, there exists
an $\e^*$ such that $\bar{p}(\e^*) = \frac{k}{T-1}$. 
\end{proof}

Note that in the proof of the above theorem, $\bar{p}(\e)$ is monotonic in $\e$. 
Thus, one can find $\e'$ such that 
$\left|\bar{p}(\e') - \frac{k}{T-1}\right| \leq \frac{1}{2(T-1)}$ 
using binary search.

\newpage
\section{Extension to sleeping experts}
\label{app:sleep}

\ignore{
In many real-world applications, it may be natural for some experts to
abstain from making predictions on some of the rounds. For instance,
in a bag-of-words model for document classification, the presence of a
feature or subset of features in a document can be interpreted as an
expert that is awake. This extension of standard prediction with
expert advice is also known as the \emph{sleeping experts framework}
\citep{FreundSchapireSingerWarmuth1997}.  The experts are said to be
asleep when they are inactive and awake when they are active and available
to be selected. This framework is distinct from the
permutation-based definitions adopted in the studies in \citep{KleinbergNiculescuMizilSharma2010, KanadeMcMahanBryan2009, KanadeSteinke2014}.  

Formally, at each round $t$, the adversary chooses an awake set
$A_t \subseteq \Sigma$ from which the learner is allowed to query an
expert.  The algorithm then (randomly) chooses an expert $i_t$ from
$A_t$, receives a loss vector $l_t \in [0,1]^{|\Sigma|}$ supported on
$A_t$ and incurs loss $l_t[i_t]$. Since some experts may not be
available in some rounds, it is not reasonable to compare the loss
against that of the best static expert or sequence of experts. In
\citep{FreundSchapireSingerWarmuth1997}, the comparison is made
against the best fixed mixture of experts normalized at each round
over the awake set:
$\min_{u \in \Delta_N}\sum_{t = 1}^T \frac{\sum_{i \in A_t} \sfu[i] l_t[i]
}{\sum_{j \in A_t} \sfu[j]}$.

We extend the notion of sleeping experts to the path setting, so that
instead of comparing against fixed mixtures over experts, we compare
against fixed mixtures over the family of expert sequences. With some
abuse of notation, let $A_t$ also represent the automaton accepting
all paths of length $T$ whose $t$-th transition has label in $A_t$.
Thus, we want to design an algorithm that performs well with respect
to the following quantity:
$$\min_{\sfu \in \Delta_{|\sC_T|}}\sum_{t = 1}^T \frac{\sum_{\bx \in \sC_T \cap A_t} \sfu[\bx] l_t[\bx[t]] }{\sum_{\bx \in \sC_T \cap A_t} \sfu[\bx]}.$$

This motivates the design of \textsc{AwakePBWM}, a 
path-based weighted majority algorithm that generalizes the algorithms
in \citep{FreundSchapireSingerWarmuth1997} to arbitrary families of
expert sequences.  Like \PBWM, \textsc{AwakePBWM} maintains a set of
weights over all the paths in the input automaton. At each round $t$,
the algorithm performs a weighted majority-type update. However, it
normalizes the weights so that the total weight of the awake set
remains unchanged. This prevents the algorithm from ``overfitting'' to
experts that have been asleep for many rounds.  The pseudocode of this
algorithm is presented as Algorithm~\ref{alg:awakeawm}, and its accompanying guarantee is given in
Theorem~\ref{th:awakeawm}.
}

\begin{algorithm2e}[t]
  \TitleOfAlgo{\textsc{AwakeAWM}($\sC$, $\eta$)}
  $\sB \gets \sC \cap \sS_T$ \\
  $\sA \gets \textsc{Weight-Pushing}(\sB^\eta)$ \\
  $\bbeta \gets \textsc{BwdDist}(\sA)$ \\
  $\balpha \gets 0$; $\balpha[I_\sA] \gets 1$ \\ 
  \ForEach{$e \in E_{\sA}^{0 \to 1}$}{
      $\sfp_1[\lab[e]] \gets \weight[e]$.
    }
  \For{$t \gets 1$ \KwTo $T$}{
    $\textsc{Receive}(A_t)$ \\
    \ForEach{$a \in A_t$}{
      $\sfp_t^{A}[a] \gets \sfp_t[a] / \sfp_t(A_t)$ \\
    }
    $i_t \gets $\textsc{Sample}($\sfp_t^{A}$);  \textsc{Play}($i_t$); \textsc{Receive}($\bfl_t$)\\
    $Z \gets 0$; $\bw \gets 0$; $Z^A \gets 0$ \\ 
    \ForEach{$e \in E_{\sA}^{t \to t + 1}$}{
      \If{$\lab[e] \in A_t$}{
        $\weight[e] \gets \weight[e] \, e^{-\eta l_t[\lab[e]]}$ \\
      }
      $\bw[\lab[e]] \gets \bw[\lab[e]] + \balpha[\src[e]] \, \weight[e] \, \bbeta[\dest[e]]$\\
      % $Z \gets Z + \bw[\lab[e]]$\\
      $\balpha[\dest[e]] \gets \balpha[\dest[e]] + \balpha[\src[e]] \, \weight[e] $ \\
      \If{$\lab[e] \in A_t$}{
        $Z^{A} \gets Z^A + \bw[\lab[e]]$ \\
      }
    }
    $\sfp_{t + 1} \gets \bw \frac{\sfp_t(A_t)}{Z^A}$ \\ 
  }
\caption{\textsc{AwakeAutomataWeightedMajority}(AwakeAWM).} 
\label{alg:awakeawm}
\end{algorithm2e}

\begin{reptheorem}{th:awakeawm}[Regret Bound for \textsc{AwakeAWM}]
  Let $K$ denote the number of accepting paths of $\sC_T = \sC \cap \sS_T$,
  and for each $t \in [T]$, let $A_t\subseteq \Sigma$ denote the set of experts 
  that are awake at time $t$.
  Then for any distribution $\sfu\in \Delta_K$, \textsc{AwakeAWM} admits
  the following unweighted regret guarantee:
  \begin{align*}
    \sum_{t = 1}^T \sum_{\bx \in \sC_T \cap A_t} \sfu[\bx] \E_{a \sim \sfp_t^{A_t}} [l_t[a]] - 
    \sum_{t=1}^T \sum_{\bx \in \sC_T \cap A_t} \sfu[\bx] l_t[\bx[t]]
    &\leq \frac{\eta}{8} \sum_{t = 1}^T \sfu(A_t) + \frac{1}{\eta} \log(K). 
  \end{align*}
\end{reptheorem}
\begin{proof}
  As in the proof of Theorem~\ref{th:awm}, for every $t \in [T]$ and 
  $\bx \in \Sigma^T$, let $w_t[\bx]$ denote the sequence weight defining
  $\sfq_t$ via normalization, 
  $\sfq_t[\bx] = \frac{w_t[\bx]}{\sum_{\bx} w_t[\bx]}$. Moreover,
  let $\sfq_t^{A_t}$ be the distribution induced over sequences in
  with labels that awake at time $t$, so that for every sequence $\bx \in \sC_T$
  with $\bx[t] \in A_t$, 
  $\sfq_t^{A_t}[\bx] = \frac{\sfq_t[\bx]}{\sum_{\bx \in \sC_T \colon \bx[t] \in A_t}\sfq_t[\bx]}$,
  and for every sequence $\bx \in \sC_T$ with $\bx[t] \notin A_t$, $\sfq_t^{A_t}[\bx] = 0$.
  
  Notice that by design, if a sequence $\bx \in \sC_T$ has a label that isn't 
  awake at time $t$,  $x[t] \notin A_t$, then $\sfq_{t+1}[\bx] = \sfq_t[\bx]$,
  since we do not update that edge.

  Moreover, by the normalization scheme, 
  $\sum_{\bx \in \sC_T \colon \bx[t] \notin A_t} \sfq_{t+1}[\bx] = \sum_{\bx \in \sC_T \colon \bx[t] \notin A_t} \sfq_t[x]$.

  Now let $\sfu \in \Delta_K$. Then we can write
  \begin{align*}
    &D(\sfu \| \sfq_t) - D(\sfu \| \sfq_{t+1})\\ 
    &= \sum_{\bx \in \sC_T} \sfu[\bx] \log \frac{\sfq_{t+1}[\bx]}{\sfq_t[\bx]} \\
    &= \sum_{x \in \sC_T \colon \bx[t] \in A_t} \sfu[\bx] \log\frac{\sfq_{t+1}[\bx]}{\sfq_t[\bx]} \\
    &= \sum_{x \in \sC_T \cap A_t} \sfu[\bx] \log\frac{\sfq_{t+1}^{A_t}[\bx]}{\sfq_t^{A_t}[\bx]} \\
    &= \sum_{x \in \sC_T \cap A_t} \sfu[\bx] \log\frac{\sfq_{t}^{A_t}[\bx]e^{-\eta l_t[\bx[t]]}}{\sfq_t^{A_t}[\bx] \sum_{\by \in \sC_T \colon \by[t] \in A_t} \sfq_{t}^{A_t}[\by]e^{-\eta l_t[\by[t]]}} \\
    &= \sum_{\bx \in \sC_T \cap A_t} \sfu[\bx] (-\eta l_t[\bx[t]]) - \sum_{\bx \in \sC_T \colon \bx[t] \in A_t} \sfu[\bx] \log\left( \sum_{\by \in \sC_T \colon \by[t] \in A_t} \sfq_{t}^{A_t}[\by]e^{-\eta l_t[\by[t]]}\right) \\
    &\leq -\eta \sum_{\bx \in \sC_T \cap A_t} \sfu[\bx] l_t[\bx[t]] - \sum_{\bx \in \sC_T \colon \bx[t] \in A_t} \sfu[\bx] \left( \E_{\by \sim \sfq_t^{A_t}} [-\eta l_t[\by[t]]] + \frac{\eta^2}{8} \right) \\
    &= -\eta \sum_{\bx \in \sC_T \cap A_t} \sfu[\bx] l_t[\bx[t]]  + \eta \sum_{\bx \in \sC_T \colon \bx[t] \in A_t} \sfu[\bx] \E_{a \sim \sfp_t^{A_t}} [ l_t[a]]  -  \sfu(A_t) \frac{\eta^2}{8}.
  \end{align*}
  Thus, by rearranging terms and summing over $t$, it follows that
  \begin{align*}
    \sum_{t=1}^T \sum_{\bx \in \sC_T \cap A_t} \sfu[\bx] \E_{a \in \sfp_t^{A_t}} [\eta l_t[a]] 
    - \sum_{t=1}^T \sum_{\bx \in \sC_T \cap A_t} \sfu[\bx] l_t[\bx[t]]   \leq 
    \sum_{t=1}^T \sfu(A_t) \frac{\eta}{8} + D(\sfu \| \sfq_1), 
  \end{align*}
  and since for the unweighted regret, $\sfq_1 = \frac{1}{K}$, $D(\sfu \| \sfq_1) \leq \log(K)$, which completes the proof.
\end{proof}

\ignore{
As with \textsc{AWM}, \textsc{AwakeAWM} is an efficient algorithm
with a per-iteration complexity that depends only on the
number of states in $\sC_T$ reachable at times $t$ times the maximum out-degree of any state.
Moreover, as in the non-sleeping expert setting, we can further improve the computational 
complexity by applying $\phi$-conversion to arrive at a 
or $n$-gram approximation and then $\phi$-conversion.
All other improvements in the sleeping expert setting will similarly mirror those in
the non-sleeping expert algorithms.
}
% As with {\PBWM}, the per-iteration computational complexity of
% \textsc{AwakePBWM} is in $O(|\sC_T|)$, the number of paths in consideration.
% We improve upon this by
% extending \AWM\ to the sleeping expert setting and designing a new
% algorithm called \textsc{AwakeAWM}. The key modification in
% \textsc{AwakeAWM} is that during the flow computations, we intersect
% the set of available edges with $A_t$ before performing the weight
% update.  This intersection can be performed while preserving the
% favorable computational complexity of \AWM, which depends on the
% states in $\sC_T$ reachable at times $t$ times the maximum out-degree of any state.
% Algorithm~\ref{alg:awakeawm}
% provides the pseudocode for \textsc{AwakeAWM}.

\newpage
\section{Extension to online convex optimization}
\label{app:oco}

We now show how the framework described in this paper can be extended
to the general online convex optimization (OCO) setting.  Online
convex optimization is a sequential prediction game over a compact
convex action space $\cK$. At each round $t$, the learner plays an
action $x_t \in \cK$ and receives a convex loss function $f_t$. The
goal of the learner is to minimize the regret against the best static
loss:
\begin{equation*}
\sum_{t = 1}^T f_t(x_t) - \min_{z \in \cK} \sum_{t = 1}^T f_t(z).
\end{equation*}
As in the framework we introduced, we can generalize
this notion of regret to one against a family of sequences. 
Specifically, let $\cC_T \subseteq \cK^T$ be a closed subset, let 
$\sfq_{\cC_T}$ be a distribution over $\cC_T$, and 
and let $\sfu_{\cC_T}$ be the uniform distribution over $\cC_T$. The uniform distribution
is well-defined, since $\cK$ is a compact set implies and thus $\cK^T$
as well. 
Then we would like to compete against the following regret against $\sfq_{\cC_T}$:
\begin{align}
  \label{eq:ocoregret}
  \Reg_T(\cA, \sC_T) = \max_{z_1^T \in \cC_T} \sum_{t = 1}^T f_t(x_t) - f_t(z_t) + \log\left[\frac{\sfq_{\cC_T}(z_1^T)}{\sfu_{\cC_T}(z_1^T)}\right].
\end{align}
When $\sfq_{\cC_T}$ is uniform, the last term vanishes.  When
$\cC_T = \cK^T$ is the family of all sequences of length $T$ and
$\sfq_{\cC_T}$ is the uniform distribution, this problem has been
studied in \citep{HallWillett2013, GyorgySzepesvari2016}.  In both
works, the authors introduce a variant of mirror descent that applies
a mapping after the standard mirror descent update and which is called
\textsc{DynamicMirrorDescent} in the first paper.

Specifically, if $g_t \in \partial f_t(x_t)$ is an element of the subgradient,
and $D_\psi$ is the Bregman divergence induced by a mirror map $\psi$,
then \textsc{DynamicMirrorDescent} consists of the following update rule:
\begin{align*}
  &\tilde{x}_{t+1} \gets \argmin_{x \in \cK} \langle g_t, x\rangle + D_\psi(x, x_t) \\
  &x_{t+1} \gets \Phi_t(\tilde{x}_{t+1}).
\end{align*}
In this algorithm, $\Phi_t$ is an arbitrary mapping that is specified
by the learner at time $t$.  Under certain assumptions on the loss
functions $\Psi$ and $\Phi_t$, we can show that \textsc{DynamicMirrorDescent} achieves
the following regret guarantee against the competitor distribution
$\sfq$:

\begin{theorem}[\textsc{DynamicMirrorDescent} regret against $\cC_T$]
  \label{th:ocobound}
  Suppose that the $\Phi_t$s chosen in \textsc{DynamicMirrorDescent} are non-expansive 
  under the Bregman divergence $D_\psi$:
$$D_\psi(\Phi_t(x), \Phi_t(y)) \leq D_\psi(x,y), \quad \forall x,y\in \cK.$$
Furthermore, assume that $f_t$ is uniformly $L$-Lipschitz in the norm $\|\cdot\|$ and that $\Psi$ is $1$-strongly convex
  in the same norm. Let $x_1 \in \cK$ be given and define 
  $D_{\max} = \sup_{z\in \cK} D_\psi(z, x_1)$. 
  Then, \textsc{DynamicMirrorDescent} 
  achieves the following regret guarantee:
\begin{align*}
  \Reg_T(\cA, \sC_T)
  &\leq\frac{D_{\max}}{\eta} + \frac{\eta}{2} \sum_{t = 1}^T \|g_t\|^2 +  \max_{z_1^T \in \cC_T} \Big\{\log\left[\frac{\sfq_{\cC_T}(z_1^T)}{\sfu_{\cC_T}}\right]\\
  &\quad + \frac{2}{\eta} \sum_{t = 1}^T \psi(z_{t+1}) - \psi(\Phi_t(z_t)) - \langle \nabla \psi(x_{t+1}), z_{t+1} - \Phi_t(z_t) \rangle \Big\}.
\end{align*}
\end{theorem}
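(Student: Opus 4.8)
The plan is to run the classical one-step mirror-descent analysis and then absorb the fact that the comparator is a \emph{sequence} $z_1^T$, rather than a fixed point, by invoking the non-expansiveness of the maps $\Phi_t$ exactly once. Observe first that the term $\log[\sfq_{\cC_T}(z_1^T)/\sfu_{\cC_T}(z_1^T)]$ in \eqref{eq:ocoregret} plays no dynamical role whatsoever: it suffices to prove, for every fixed sequence $z_1^T \in \cC_T$, an upper bound on $\sum_{t=1}^T f_t(x_t) - f_t(z_t)$; one then adds $\log[\sfq_{\cC_T}(z_1^T)/\sfu_{\cC_T}(z_1^T)]$ to both sides and takes the maximum over $z_1^T \in \cC_T$, which immediately produces the form of the bound in the statement.

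For the per-round estimate, fix $z_1^T \in \cC_T$ and pick $g_t \in \partial f_t(x_t)$. By convexity $f_t(x_t) - f_t(z_t) \le \langle g_t, x_t - z_t\rangle$, and I split this as $\langle g_t, x_t - \tilde{x}_{t+1}\rangle + \langle g_t, \tilde{x}_{t+1} - z_t\rangle$. For the second inner product I use the first-order optimality condition of $\tilde{x}_{t+1} = \argmin_{x\in\cK}\{\eta\langle g_t, x\rangle + D_\psi(x, x_t)\}$ together with the three-point identity $\langle \nabla\psi(b) - \nabla\psi(c), a - b\rangle = D_\psi(a,c) - D_\psi(a,b) - D_\psi(b,c)$ (with $a = z_t$, $b = \tilde{x}_{t+1}$, $c = x_t$), giving $\eta\langle g_t, \tilde{x}_{t+1} - z_t\rangle \le D_\psi(z_t, x_t) - D_\psi(z_t, \tilde{x}_{t+1}) - D_\psi(\tilde{x}_{t+1}, x_t)$. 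For the first inner product I use Fenchel--Young, $\eta\langle g_t, x_t - \tilde{x}_{t+1}\rangle \le \tfrac{\eta^2}{2}\|g_t\|^2 + \tfrac12\|x_t - \tilde{x}_{t+1}\|^2$, and the $1$-strong convexity of $\psi$ to bound $\tfrac12\|x_t - \tilde{x}_{t+1}\|^2 \le D_\psi(\tilde{x}_{t+1}, x_t)$, which cancels the residual divergence. Summing the two bounds yields $\eta\big(f_t(x_t) - f_t(z_t)\big) \le \tfrac{\eta^2}{2}\|g_t\|^2 + D_\psi(z_t, x_t) - D_\psi(z_t, \tilde{x}_{t+1})$.

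It remains to convert $-D_\psi(z_t, \tilde{x}_{t+1})$ into a telescoping quantity. This is where the non-expansiveness hypothesis enters: since $x_{t+1} = \Phi_t(\tilde{x}_{t+1})$, we have $D_\psi(\Phi_t(z_t), x_{t+1}) = D_\psi(\Phi_t(z_t), \Phi_t(\tilde{x}_{t+1})) \le D_\psi(z_t, \tilde{x}_{t+1})$, hence $-D_\psi(z_t, \tilde{x}_{t+1}) \le -D_\psi(\Phi_t(z_t), x_{t+1})$. Expanding the Bregman divergence directly gives the exact identity $D_\psi(z_{t+1}, x_{t+1}) - D_\psi(\Phi_t(z_t), x_{t+1}) = \psi(z_{t+1}) - \psi(\Phi_t(z_t)) - \langle\nabla\psi(x_{t+1}), z_{t+1} - \Phi_t(z_t)\rangle$, so that $-D_\psi(\Phi_t(z_t), x_{t+1}) = -D_\psi(z_{t+1}, x_{t+1}) + \big[\psi(z_{t+1}) - \psi(\Phi_t(z_t)) - \langle\nabla\psi(x_{t+1}), z_{t+1} - \Phi_t(z_t)\rangle\big]$. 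Substituting into the per-round bound and summing over $t$, the terms $D_\psi(z_t, x_t) - D_\psi(z_{t+1}, x_{t+1})$ telescope; I bound the leftover $D_\psi(z_1, x_1) \le D_{\max}$ and discard the nonnegative tail term $-D_\psi(z_{T+1}, x_{T+1}) \le 0$. Dividing through by $\eta$, adding $\log[\sfq_{\cC_T}(z_1^T)/\sfu_{\cC_T}(z_1^T)]$, and maximizing over $z_1^T \in \cC_T$ delivers the stated inequality (the precise constant multiplying the drift sum depends on how coarsely the strong-convexity estimate is applied).

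I expect the main obstacle to be the bookkeeping around the dynamic comparator — specifically, invoking non-expansiveness at the single correct place (moving from $D_\psi(z_t, \tilde{x}_{t+1})$ to $D_\psi(\Phi_t(z_t), x_{t+1})$) and then isolating \emph{exactly} the drift term $\psi(z_{t+1}) - \psi(\Phi_t(z_t)) - \langle\nabla\psi(x_{t+1}), z_{t+1} - \Phi_t(z_t)\rangle$ rather than a looser surrogate, since this is precisely what makes the bound collapse to the standard static-regret guarantee when $\Phi_t$ is the identity and $z_1 = \cdots = z_T$. A secondary and minor point is the boundary index $t = T$, where $z_{T+1}$ and $x_{T+1}$ appear; this is handled by running the update one extra step, or equivalently by defining $z_{T+1}$ arbitrarily and noting the corresponding term is absorbed into the drift sum.
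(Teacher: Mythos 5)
Your proposal is correct and follows essentially the same route as the paper's proof: the same three-point identity for the mirror step, the same strong-convexity cancellation of $D_\psi(\tilde{x}_{t+1},x_t)$ against the Fenchel--Young term, non-expansiveness invoked exactly once to pass from $D_\psi(z_t,\tilde{x}_{t+1})$ to $D_\psi(\Phi_t(z_t),x_{t+1})$, and the expansion of $D_\psi(z_{t+1},x_{t+1})-D_\psi(\Phi_t(z_t),x_{t+1})$ into the drift term before telescoping. Your parenthetical about the constant on the drift sum is well taken --- the derivation naturally yields $\tfrac{1}{\eta}$ there, and the $\tfrac{2}{\eta}$ in the statement is simply a looser constant --- so no changes are needed.
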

This result can be proven using similar ideas as in \citep{HallWillett2013}. 
The main difference is that
\citet{HallWillett2013} assume $\Psi$ to be Lipschitz. This allows
them to derive a slightly weaker but more interpretable
bound. However, it is also an assumption that we specifically choose
to avoid, since mirror descent algorithms including the
\textsc{Exponentiated Gradient} use mirror maps that are not
Lipschitz. \citet{HallWillett2013} also derive a bound for standard
regret as opposed to regret against a distribution of sequences.

The first two terms in the regret bound are standard in online convex 
optimization, and the last term
is the price of competing against arbitrary sequences.
Note that \citet{GyorgySzepesvari2016} present the same algorithm
but with a different analysis and upper bound.

\begin{proof}
  By standard properties of the Bregman divergence and convexity, we can compute
  \begin{align*}
    \sum_{t = 1}^T f_t(x_t) - f_t(z_t)
    & = \sum_{t = 1}^T f_t(x_t) - f_t(z_t) + f_t(\tilde{x}_{t+1})  - f_t(\tilde{x}_{t+1}) \\
    &\leq \frac{1}{\eta} \langle \nabla \psi(x_t) - \nabla \psi(\tilde{x}_{t+1}), \tilde{x}_{t+1} - z_t \rangle + f_t(x_t) - f_t(\tilde{x}_{t+1}) \\
    & = \frac{1}{\eta} \left[ D_\psi(z_t, x_t) - D_\psi(z_t, \tilde{x}_{t+1}) - D_\psi(\tilde{x}_{t+1}, x_t) \right] + f_t(x_t) - f_t(\tilde{x}_{t+1}) \\
    & = \frac{1}{\eta} \Big[ D_\psi(z_t, x_t) - D_\psi(z_{t+1}, x_{t+1}) + D_\psi(z_{t+1}, x_{t+1} - D_\psi(\Phi_t(z_t), x_{t+1}) \\
    &\quad - D_\psi(z_t, \tilde{x}_{t+1}) + D_\psi(\Phi_t(z_t), x_{t+1}) - D_\psi(\tilde{x}_{t+1}, x_t)\Big] \\
    &\quad + f_t(x_t) - f_t(\tilde{x}_{t+1}). 
  \end{align*}
  Since $\Phi_t$ is assumed to be non-expansive and $x_{t+1} = \Phi_t(x_{t+1})$, it follows that $ - D_\psi(z_t, \tilde{x}_{t+1}) + D_\psi(\Phi_t(z_t), x_{t+1} \leq 0$.
 
  Since $\Psi$ is $1$-strongly convex with respect to $\|\cdot\|$, it follows that
  $D_\psi(\tilde{x}_{t+1}, x_t) \geq \frac{1}{2} \|\tilde{x}_{t+1} - x_t\|^2$. Thus, we can compute 
  \begin{align*}
    &- \frac{1}{\eta} D_\psi(\tilde{x}_{t+1}, x_t) + f_t(x_t) - f_t(\tilde{x}_{t+1}) \\
    &\leq -\frac{1}{2\eta} \|\tilde{x}_{t+1} - x_t\|^2 + f_t(x_t) - f_t(\tilde{x}_{t+1}) \\
    &\leq -\frac{1}{2\eta} \|\tilde{x}_{t+1} - x_t\|^2 + \|g_t\|_* \|x_t - \tilde{x}_{t+1}\| \\
    &\leq -\frac{1}{2\eta} \|\tilde{x}_{t+1} - x_t\|^2 +\frac{\eta}{2} \|g_t\|_*^2 + \frac{1}{2\eta} \|x_t - \tilde{x}_{t+1}\|^2 \\
    & = \frac{\eta}{2} \|g_t\|_*^2.
  \end{align*}
  Moreover, we can also write
  \begin{align*}
    &D_\psi(z_{t+1}, x_{t+1} - D_\psi(\Phi_t(z_t), x_{t+1}) \\
    & = \psi(z_{t+1}) - \psi(x_{t+1}) - \langle \nabla \psi(x_{t+1}), z_{t+1} - x_{t+1} \rangle \\
    &\quad \left[ \psi(\Phi_t(z_t)) - \psi(x_{t+1}) - \langle \nabla \psi(x_{t+1}), \Phi_t(z_t) - x_{t+1} \rangle \right]\\
    & = \psi(z_{t+1}) - \psi(\Phi_t(z_t)) - \langle \psi(x_{t+1}), z_{t+1} - \Phi_t(z_t) \rangle.
  \end{align*}
  Combining this inequality with the inequality above yields
  \begin{align*}
    &\sum_{t = 1}^T f_t(x_t) - f_t(z_t) \\
    &\leq \frac{1}{\eta} \sum_{t = 1}^T  D_\psi(z_t, x_t) - D_\psi(z_{t+1}, x_{t+1}) \\
    &\quad + \sum_{t = 1}^T \psi(z_{t+1}) - \psi(\Phi_t(z_t)) - \langle \psi(x_{t+1}), z_{t+1} - \Phi_t(z_t) \rangle + \frac{1}{\eta} \sum_{t = 1}^T \frac{\eta}{2} \|g_t\|_*^2 \\
    &\leq \frac{1}{\eta} D_\psi(z_1,x_1)  + \sum_{t = 1}^T \psi(z_{t+1}) - \psi(\Phi_t(z_t)) - \langle \psi(x_{t+1}), z_{t+1} - \Phi_t(z_t) \rangle \\
    &\quad + \frac{1}{\eta} \sum_{t = 1}^T \frac{\eta}{2} \|g_t\|_*^2. 
  \end{align*}
  Adding in $\log\left(\frac{\sfq_{\cC_T}(z_1^T)}{\sfu_{\cC_T}(z_1^T)}\right)$ to both sides and 
  taking the max over $z_1^T \in \cC_T$ completes the proof.
\end{proof}

By restricting our competitor set to $\sC_T$ and adding the penalization term, it follows that
\textsc{DynamicMirrorDescent} achieves the following guarantee:
\begin{align*}
  &\max_{z_1^T \in \cC_T} \sum_{t = 1}^T f_t(x_t) - f_t(z_t) + \log\left[\frac{\sfq_{\cC_T}(z_1^T)}{\sfu_{\cC_T}(z_1^T)}\right]  \\
  &\quad \leq \frac{D_{\max}}{\eta} + \frac{\eta}{2} \sum_{t = 1}^T \|g_t\|^2 + \max_{z_1^T \in \cC_T} \Bigg\{ \log\left[\frac{\sfq_{\cC_T}(z_1^T)}{\sfu_{\cC_T}(z_1^T)}\right] \\
  &\quad \quad + \frac{2}{\eta} \sum_{t = 1}^T \psi(z_{t+1}) - \psi(\Phi_t(z_t)) - \langle \nabla \psi(x_{t+1}), z_{t+1} - \Phi_t(z_t) \rangle \Bigg\}.
\end{align*}
This bound suggests that if we could find a sequence
$(\Phi_t)_{t = 1}^T$ that minimizes the last quantity, then we could
tightly bound our regret. Now, let $\cF$ be a family of dynamic maps
$\Phi$ that are non-expansive with respect to $D_\psi$.  Then we want
to solve the following optimization problem:
\begin{align}
\label{eq:doco}
  & \min_{\Phi_1^T \in \cF^T} \max_{z_1^T \in \cC_T} \Bigg\{ \log\left(\frac{\sfq_{\cC_T}(z_1^T)}{\sfu_{\cC_T}(z_1^T)}\right) \nonumber \\ 
  & \quad + \frac{2}{\eta} \sum_{t = 1}^T \psi(z_{t+1}) - \psi(\Phi_t(z_t)) - \langle \nabla \psi(x_{t+1}), z_{t+1} - \Phi_t(z_t) \rangle \Bigg\}.
\end{align}
We can view this as the online convex optimization analogue of the
automata approximation problem in Section~\ref{sec:approx}, and we
can use it in the same way to derive concrete online convex
optimization algorithms that achieve good regret against more complex
families of sequences.

As an illustrative example, we apply this to the $k$-shifting experts
setting and show how a candidate solution to this problem recovers the
\textsc{Fixed-Share} algorithm.

{\bf OCO derivation of \textsc{Fixed-Share}}.  Suppose that we are
again in the prediction with expert advice setting so that
$\cK = \Delta_N$ and $f_t(x) = \langle l_t, x \rangle$. Assume that
$\sC_T$ is the set of $k$-shifting experts and that $\sfq$ is the
uniform distribution on $\sC_T$. As for the weighted majority
algorithm, let $\Psi = \sum_{i=1}^N x_i \log(x_i)$ be the negative
entropy so that
$D_\psi(x,y) = \sum_{i=1}^N x_i \log\left(\frac{x_i}{y_i}\right)$ is
the relative entropy. One way of ensuring that $\Phi_t$ is
non-expansive is to define it to be a mixture with a fixed vector:
$\Phi_t(x) = (1-\alpha_t) x + \alpha_t w_t$ for some
$w_t \in \Delta_N$ and $\alpha_t \in [0,1]$.  By convexity of the
relative entropy, it follows that for any $x,y \in \Delta_N$,
$D_\psi(\Phi_t(x), \Phi_t(y)) \leq D_\psi(x,y)$.

For simplicity, we can assume that $\Phi_t = \Phi$. Then Problem~\ref{eq:doco} 
can be written as:
\begin{align*}
  \label{eq:docokshift}
  \min_{\sfw \in \Delta_N, \alpha \in [0,1] } \max_{z_1^T \in \sC_T} 
  &\Bigg\{ \frac{2}{\eta} \sum_{t = 1}^T \psi(z_{t+1}) - \psi((1-\alpha) z_t + \alpha w) \\
  &\quad - \langle \nabla \psi((1-\alpha) \tilde{x}_{t+1} + \alpha w), z_{t+1} - (1-\alpha) z_t - \alpha w \rangle \Bigg\}.
\end{align*}
Since $\sC_T$ is symmetric across coordinates and we do not have a priori knowledge of 
of $\tilde{x}_{t+1}$, a reasonable choice of $w$ is the uniform distribution
$w_i = \frac{1}{N}$. 
We can also use the fact that the entropy function is convex to obtain the 
upper bound:
$-\psi((1-\alpha)z_t + \alpha w) \leq -(1-\alpha) \psi(z) - \alpha \psi(w)$. 
Moreover, since $z_{t}$ is always only supported on a single coordinate, $\psi(z_t) = 0$ for every $t$.

This reduces to the following optimization problem: 
\begin{align*}
  \min_{\alpha \in [0,1] } \max_{z_1^T \in \sC_T} 
  &\Bigg\{ \frac{2}{\eta} \sum_{t = 1}^T \alpha \log(N)\\ 
  &\quad - \sum_{i=1}^N \log\left((1-\alpha) \tilde{x}_{t+1, i} + \alpha \frac{1}{N}\right) \left[ z_{t+1,i} - (1-\alpha) z_{t,i} - \alpha \frac{1}{N} \right] \Bigg\}.
\end{align*}
 We can break the objective into three separate terms:
 \begin{align*}
   &A_1: \frac{2}{\eta} \sum_{t = 1}^T \alpha \log(N) \\
   &A_2:  -\sum_{t = 1}^T \sum_{i=1}^N \log\left((1-\alpha) \tilde{x}_{t+1, i} + \alpha \frac{1}{N}\right) \left[ z_{t+1,i} - z_{t,i} \right] \\
   &A_3: - \sum_{t = 1}^T \sum_{i=1}^N \log\left((1-\alpha) \tilde{x}_{t+1, i} + \alpha \frac{1}{N}\right) \alpha \left[ z_{t,i} - \frac{1}{N} \right] 
 \end{align*}

It is straightforward to see that $A_1 = \frac{2}{\eta}T \alpha \log(N)$.
To bound $A_2$, let $i_t \in [N]$ be the index such that $z_{t,i_t} = 1$ and $z_{t,i} = 0$ for all $i \neq i_t$.
Then,
\begin{align*}
  &-\sum_{t = 1}^T \sum_{i=1}^N \log\left((1-\alpha) \tilde{x}_{t+1, i} + \alpha \frac{1}{N}\right) \left[ z_{t+1,i} - z_{t,i} \right] \\
  & = - \sum_{t: i_{t+1} \neq i_t} \sum_{i=1}^N \log\left((1-\alpha) \tilde{x}_{t+1, i} + \alpha \frac{1}{N}\right) \left[ z_{t+1,i} - z_{t,i} \right] \\
  &\leq - \sum_{t: i_{t+1} \neq i_t} \sum_{i=1}^N \log\left( \alpha \frac{1}{N}\right)  z_{t+1,i} \\
  &\leq  - k \log\left(\frac{\alpha}{N}\right).
\end{align*}
To bound $A_3$, we can write
\begin{align*}
  &-\alpha \sum_{t = 1}^T \sum_{i=1}^N \log\left((1-\alpha) \tilde{x}_{t+1, i} + \alpha \frac{1}{N}\right) \left[ z_{t,i} - \frac{1}{N} \right] \\
  & =  -\alpha \sum_{t = 1}^T \log\left((1-\alpha) \tilde{x}_{t+1, i_t} + \alpha \frac{1}{N}\right) \left[ 1  - \frac{1}{N} \right]\\
  &\leq  -\alpha T \log\left(\alpha \frac{1}{N}\right). 
\end{align*}
Putting the pieces together, the objective is bounded by
$$\frac{2}{\eta} \left( T \alpha \log(N) - k \log\left(\frac{\alpha}{N}\right) - \alpha T \log \left(\frac{\alpha}{N}\right) \right),$$
leading to the new optimization problem:
\begin{align*}
  \min_{\alpha \in [0,1] } \frac{2}{\eta} \left( T \alpha \log(N) - k \log\left(\frac{\alpha}{N}\right) - \alpha T \log \left(\frac{\alpha}{N}\right) \right).
\end{align*}
Notice that $\alpha \propto \frac{k}{T}$ is a reasonable solution, as it 
bounds the regret by
$\cO\left( k \log\left(\frac{NT}{k}\right)\right)$.

Moreover, this choice of $\alpha$ approximately corresponds to
\textsc{Fixed-Share}.  Thus, we have again derived the
\textsc{Fixed-Share} algorithm from first principles in consideration
of only the $k$-shifting expert sequences.  This is in contrast with
previous work for \textsc{DynamicMirrorDescent}
(e.g. \citep{GyorgySzepesvari2016}) which only showed that one
could define $\Phi_t$ in a way that mimics the \textsc{Fixed-Share}
algorithm.

\end{document}